\def\eqref#1{equation~\ref{#1}}
\def\1{\bm{1}}
\DeclareMathAlphabet{\mathsfit}{\encodingdefault}{\sfdefault}{m}{sl}
\SetMathAlphabet{\mathsfit}{bold}{\encodingdefault}{\sfdefault}{bx}{n}
\newcommand{\E}{\mathbb{E}}
\newcommand{\R}{\mathbb{R}}
\theoremstyle{definition}
\newcommand{\milr}{\eta^{\ast}}
\newtheorem{theorem}{Theorem}
\newtheorem{lemma}{Lemma}
\newtheorem{corollary}{Corollary}
\newcommand{\Y}[2]{Y^{(#1)}\left[#2\right]}
\newcommand{\lrr}[1]{\left(#1\right)}
\newcommand{\Ee}[1]{\mathbb E \left[#1\right]}
\newcommand{\norm}[1]{\left|\left|#1\right|\right|}
\newcommand{\gives}{\rightarrow}
\newcommand{\abs}[1]{\left|#1\right|}
\newcommand{\set}[1]{\left\{#1\right\}}
\begin{document}

\twocolumn[
\icmltitle{Maximal Initial Learning Rates in Deep ReLU Networks}

% It is OKAY to include author information, even for blind
% submissions: the style file will automatically remove it for you
% unless you've provided the [accepted] option to the icml2023
% package.

% List of affiliations: The first argument should be a (short)
% identifier you will use later to specify author affiliations
% Academic affiliations should list Department, University, City, Region, Country
% Industry affiliations should list Company, City, Region, Country

% You can specify symbols, otherwise they are numbered in order.
% Ideally, you should not use this facility. Affiliations will be numbered
% in order of appearance and this is the preferred way.
% \icmlsetsymbol{equal}{*}

\begin{icmlauthorlist}
\icmlauthor{Gaurav Iyer}{mcgill,mila}
\icmlauthor{Boris Hanin}{princeton}
\icmlauthor{David Rolnick}{mcgill,mila}
\end{icmlauthorlist}

\icmlaffiliation{mcgill}{School of Computer Science, McGill University, Montreal, Canada}
\icmlaffiliation{princeton}{Dept.~of Operations Research \& Financial Engineering, Princeton University, Princeton, USA}
\icmlaffiliation{mila}{Mila -- Quebec AI Institute, Montreal, Canada}

\icmlcorrespondingauthor{Gaurav Iyer}{gaurav.iyer@mila.quebec}

% You may provide any keywords that you
% find helpful for describing your paper; these are used to populate
% the "keywords" metadata in the PDF but will not be shown in the document
\icmlkeywords{Deep Learning Theory, Learning Rate, Large Learning Rate, Sharpness, Wide Networks, Network Initialization, Depth to Width Ratio, Edge of Stability}

\vskip 0.3in
]

% this must go after the closing bracket ] following \twocolumn[ ...

% This command actually creates the footnote in the first column
% listing the affiliations and the copyright notice.
% The command takes one argument, which is text to display at the start of the footnote.
% The \icmlEqualContribution command is standard text for equal contribution.
% Remove it (just {}) if you do not need this facility.

%\printAffiliationsAndNotice{}  % leave blank if no need to mention equal contribution
\printAffiliationsAndNotice{}%\icmlEqualContribution} % otherwise use the standard text.

\begin{abstract}
Training a neural network requires choosing a suitable learning rate, which involves a trade-off between speed and effectiveness of convergence. While there has been considerable theoretical and empirical analysis of how large the learning rate can be, most prior work focuses only on late-stage training. In this work, we introduce the \emph{maximal initial learning rate} $\milr$ -- the largest learning rate at which a randomly initialized neural network can successfully begin training and achieve (at least) a given threshold accuracy. Using a simple approach to estimate $\milr$, we observe that in constant-width fully-connected ReLU networks, $\milr$ behaves differently from the maximum learning rate later in training. Specifically, we find that $\milr$ is well predicted as a power of $(\text{depth} \times \text{width})$, provided that (i) the width of the network is sufficiently large compared to the depth, and (ii) the input layer is trained at a relatively small learning rate. We further analyze the relationship between $\milr$ and the sharpness $\lambda_{1}$ of the network at initialization, indicating they are closely though not inversely related. We formally prove bounds for $\lambda_{1}$ in terms of $(\text{depth} \times \text{width})$ that align with our empirical results.
\end{abstract}

\section{Introduction}
\label{intro}

The learning rate plays a crucial role in the training of deep neural networks. Unfortunately, tuning the learning rate is a tricky task -- too large a learning rate can cause the training loss to diverge, while too small a learning rate can result in inefficient use of time and computational resources. The optimal choice of learning rate has been observed to depend non-trivially on many factors, including the data, architecture, optimizer, and initialization scheme. As a result, it can be difficult to theoretically analyze the relationship between the learning rate and other elements of the training framework, and computationally expensive to find the best learning rate in practice.

While there have been numerous works rigorously analyzing effective learning rates both from theoretical and empirical standpoints \citep{NEURIPS2020_a7453a5f,l.2018dont}, these works tend to focus on behavior during late-stage training, considering which learning rates provide optimal performance \citep{pmlr-v97-park19b}, or lead to convergence guarantees \citep{wang2022large}. However, it is not clear that optimal learning rates early in training follow similar patterns to those near convergence, and indeed a variety of heuristics for learning rate scheduling suggest that an early change in learning rate can actually boost performance \citep{goyal2017accurate}.

\begin{figure*}[htb!]
    \centering
    \includegraphics[width=\textwidth]{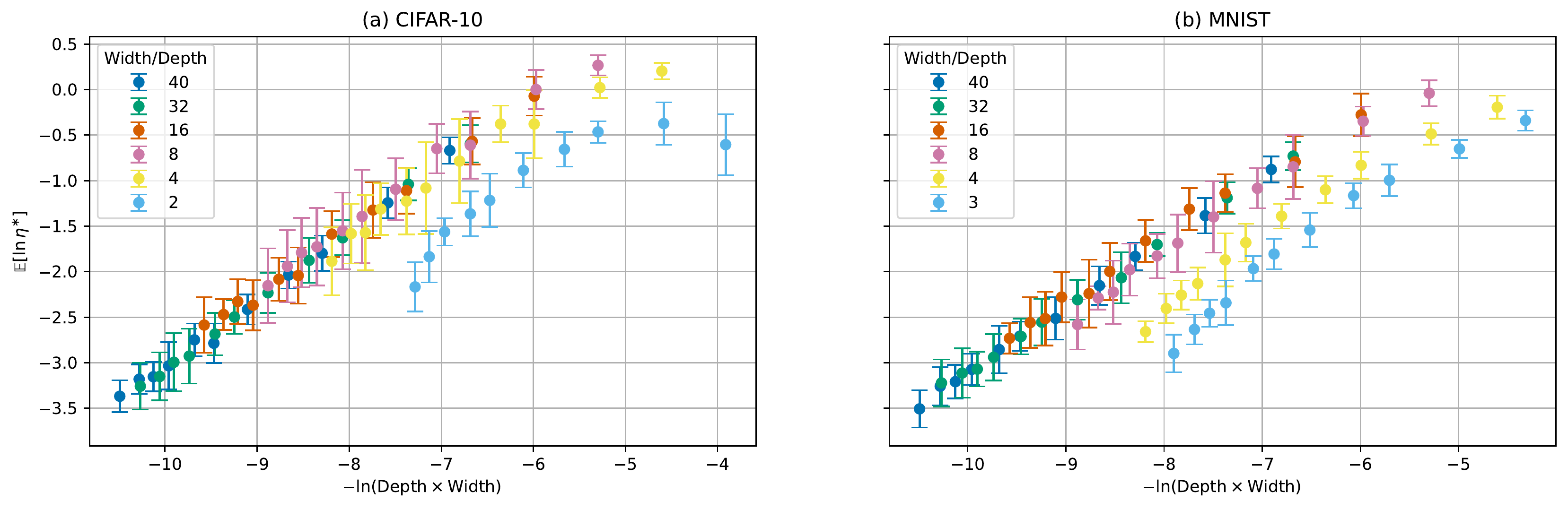}
    \subfloat[\empty]{
        \centering
        \hspace{1cm}
        \begin{tabular}{|c|c|c|}
                \hline
                \textbf{$\text{width} / \text{depth}$} & \textbf{Slope $\alpha$} & \textbf{$R^{2}$}  \\
                \hline \hline
                40 & 0.748 & 0.995\\
                \hline
                32 & 0.76 & 0.999 \\
                \hline
                16 & 0.713 & 0.994 \\
                \hline
                8 & 0.688 & 0.991 \\
                \hline
                4 & 0.587 & 0.967 \\
                \hline
                2 & 0.478 & 0.778 \\
                \hline
            \end{tabular}
        \hspace{3cm}
        \begin{tabular}{|c|c|c|}
                \hline
                \textbf{$\text{width} / \text{depth}$} & \textbf{Slope $\alpha$} & \textbf{$R^{2}$}  \\
                \hline \hline
                40 & 0.72 & 0.998\\
                \hline
                32 & 0.71 & 0.998 \\
                \hline
                16 & 0.687 & 0.991 \\
                \hline
                8 & 0.703 & 0.991 \\
                \hline
                4 & 0.693 & 0.98 \\
                \hline
                3 & 0.714 & 0.968 \\
                \hline
            \end{tabular}  
    }
    \caption{Relationship between maximal initial learning rate $\milr$ and architecture for fully-connected networks with different values of $\text{width} / \text{depth}$. We use depths $\in \{5,7,10,12,15,18,20,23,25,27,30 \}$, for 25 initializations per architecture. If $\text{width} / \text{depth}$ is sufficiently large, the expected $\milr$ displays a strong power relationship with $(\text{depth}\times \text{width})^{-1}$. Interestingly, all such $\milr$ lie on the same line regardless of the exact $\text{width}  / \text{depth}$ value.  As $\text{width} / \text{depth}$ becomes smaller, networks deviate from the power relationship. The input layer of networks is trained at $\eta \cdot 10^{-2}$, where $\eta$ is the learning rate for the rest of the network. We report slopes and coefficient of determination values for each $\text{width} $ to emphasize the linear fit.}
    \label{fig:const_width_depth_smallinputlr}
\end{figure*}

In this paper, we consider both empirically and theoretically how large the learning rate can be in early training. Our main contributions are as follows:
\vspace{-0.1in}
\begin{itemize}
    \item In \S \ref{prelim:milr}, we introduce the \emph{maximal initial learning rate} $\milr$ -- the largest learning rate at which a randomly initialized neural network can successfully begin training -- and show how it can be computed.
    
    \item For fully-connected deep ReLU networks, we empirically identify a
    power law relating the maximal initial learning rate and the product of width and depth: 
    $$\E[\ln\milr] = -\alpha\ln(\text{depth}\times \text{width}) + \gamma_{1},$$
    
    %where $\alpha$ is a constant close to $0.75$ for sufficiently wide networks
    which can be observed in 
    \autoref{fig:const_width_depth_smallinputlr}\footnote{Our empirical results suggest that $\alpha$ is indeed task-dependent. While our results in \autoref{fig:const_width_depth_smallinputlr} largely show $\alpha \approx 0.7$ for sufficiently wide networks, significant differences can be seen in \autoref{fig:gaussian_data}(a), where we obtain smaller $\alpha$ on Gaussian data.}. Notably, while prior theoretical work \citep{pmlr-v89-karakida19a} suggests that at the end of training $\alpha=1$, we find that at the start of training $\alpha < 1$, allowing for larger initial learning rates.
    
    \item We show empirically that the sharpness $\lambda_{1}$ (i.e.~the largest eigenvalue of the training loss Hessian) at initialization is a function of $\milr$, but does not necessarily reflect the inverse relationship observed in late-stage training. We illustrate this in \autoref{fig:sharpness_milr_comparison}. %\bh{Do we find that $\lambda_1$ scales as a constant times $\milr$?? Our slope is like $1.1$ but can we reject the null hypothesis?} 
    \item In \S \ref{sec:thm}, we prove power law upper bounds on $\lambda_{1}$ at initialization as a function of $(\text{depth}\times\text{width})$ by studying the Frobenius norm of the Loss Hessian. 
\end{itemize}

In the process, we also provide a counterexample to a claim about learning rates and sharpness made in \citet{gilmer2022a},
as detailed in \S \ref{section:sharpness}.

\section{Related Work}

\begin{figure*}[!ht]
    \begin{minipage}[c]{0.5\textwidth}
        \includegraphics[width=0.9\textwidth]
        {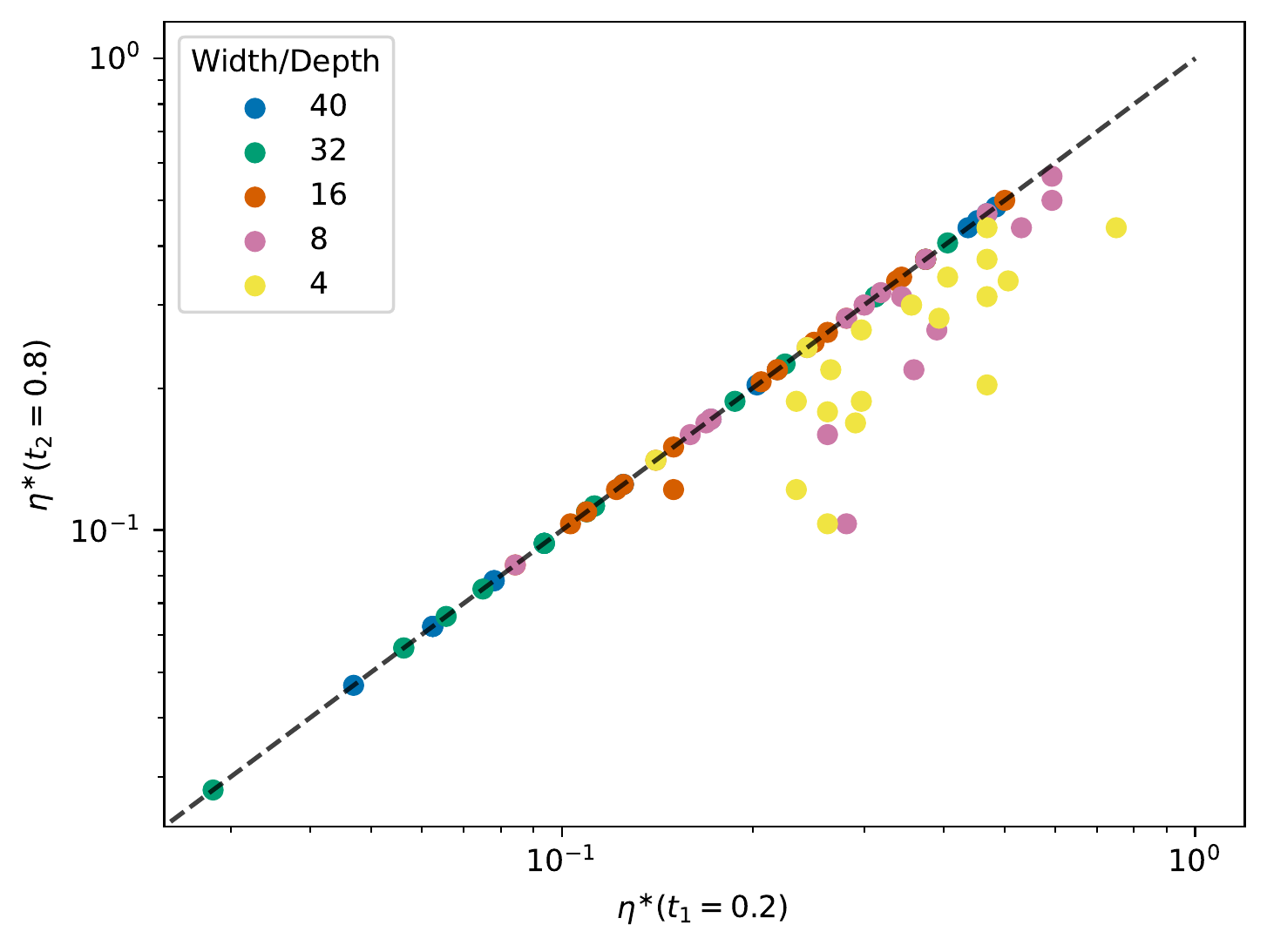}
    \end{minipage} \hfill
    \begin{minipage}[c]{0.5\textwidth}
        \vspace{0.5cm}
        \caption{Behavior of the maximal initial learning rate $\milr$ as network architecture and threshold accuracy $t$ vary in constant-width fully-connected ReLU networks. (a) illustrates how $\milr$ depends on $t$ as a function of $\text{width}  / \text{depth}$ for networks trained on MNIST. We plot $\milr$ at $t=$ 0.2 and 0.8, on the $x$ and $y$ axes respectively, and for 5 initializations per architecture with depths $\in \{5,10,15,20\}$ and $\text{width}  / \text{depth} \in \{4,8,16,32,40\}$. There is no change in $\milr$ for architectures with sufficiently large $\text{width}/ \text{depth}$, and such points fall on the $x=y$ line. Otherwise, its value declines as $t$ becomes larger. 
        Essentially, the value of $\milr$ is stable across a wide range of threshold accuracies $t$ in constant width architectures with sufficiently large width/depth.}
        \label{fig:hero}
    \end{minipage}
\end{figure*}

Large learning rates are a topic of considerable interest \citep{n.2018superconvergence,NEURIPS2019_bce9abf2,jastrzebski2018factors}. For instance
\citet{DBLP:journals/corr/abs-2003-02218}, to which we compare our work more thoroughly below, investigate the benefits of large learning rates to generalization and predict maximum learning rates for some architectures in relation to the sharpness (see below).
Through theoretical analysis of the Fisher information matrix and its statistics, \citet{pmlr-v89-karakida19a} derive a relationship between the architecture of a network and the largest learning rate that would allow the network to converge to the global minimum when trained with SGD. They state that this learning rate should scale linearly with ${(\text{depth}\times \text{width})}$ for constant width fully-connected networks.
\citet{pmlr-v97-park19b} relate the optimal learning rate (in the sense of optimal performance) to the effective width of neural networks by studying the normalized noise scale --  a quantity derived from the learning rate, batch size, and training set size.

\citet{cohen2021gradient} introduce ``progressive sharpening'' and the ``edge of stability'' regime by investigating the evolution of sharpness over the course of training, which is extended to adaptive gradient methods in \citet{https://doi.org/10.48550/arxiv.2207.14484}.
More recently, there have also been several theoretical investigations \citep{pmlr-v162-ahn22a, pmlr-v162-arora22a, https://doi.org/10.48550/arxiv.2207.12678} into this phenomenon.
\citet{gilmer2022a} add to this line of work by looking at early training instabilities through the lens of sharpness and argue that seemingly different methods such as learning rate warmup and gradient clipping stabilize the learning process through the same mechanism -- by reducing sharpness early in training. 
\citet{Jastrzebski2020The} similarly look at the effect of hyperparameters used in the early stages of training and find that they determine properties of the entire training trajectory.

We present an average-case analysis of the maximal initial learning rate, and how it relates to the architecture and expected sharpness. Such average-case analyses can be useful in identifying gaps between theoretical possibilities and practical observations. \citet{hanin2022deep,NEURIPS2019_9766527f} and \citet{pmlr-v97-hanin19a} provide average-case analyses for expected length distortion, the number of linear regions, and the number of activation regions in deep ReLU networks.
For further examples of average-case analyses, see \citet{pmlr-v70-shalev-shwartz17a,pmlr-v70-raghu17a}.

We also point the reader to several prior articles which show that in wide fully-connected networks, it is the width-to-depth ratio, rather than depth or width separately that effectively controls the stability of optimization -- this is the reason we separate architectures based on this ratio in our experimental results. Examples include work about fully-connected networks at initialization, which study the fluctuations of the forward pass \citep{hanin2018start}, the fluctuations in the backward pass \citep{hanin2018neural, hanin2018products, hanin2022correlation}, and the extent of feature learning early in training \citep{hanin2019finite}. Moreover, for large values of width/depth, many novel results of this kind were also obtained for networks \textit{after training} in \citet{roberts2021principles} (especially Chapter $\infty$). 

We conclude the literature review by comparing our maximal initial learning rate to the maximal stable learning rate of \citet{DBLP:journals/corr/abs-2003-02218} in more depth. To start, note that these two notions of maximal learning rate are different. As we illustrate in \autoref{fig:1}(b), our maximal initial learning rate can sometimes lead to instability late in training, suggesting that the maximal initial learning rate is likely larger than the maximal stable learning rate. A core proposal of
\citet{DBLP:journals/corr/abs-2003-02218} is that the maximal stable learning rate has the form $c_{act} / \lambda_1$, where $c_{act}$ is a constant and $\lambda_1$ is the largest eigenvalue of the NTK. For MSE loss and linear one-layer networks, \citet{DBLP:journals/corr/abs-2003-02218} suggest both theoretically and empirically that $c_{act} = 4$. For more general architectures and cross-entropy losses, however, \citet{DBLP:journals/corr/abs-2003-02218} obtains different values of $c_{act}$. Thus, in situations where the maximal initial and maximal stable learning rates are comparable, our empirics can be viewed as capturing more of the full architecture dependence of $c_{act}$, suggesting that perhaps $c_{act} / \lambda_1$ scales like $\text{(depth $\times$ width)}^{-\alpha}$. Finally, as detailed in Appendix A of their work, experiments in \citet{DBLP:journals/corr/abs-2003-02218} sometimes use learning rate drops, cosine scheduling, data augmentation, batch normalization, and weak $L_2$ regularization in experiments. The experiments we undertake in this work do not make use of these, further complicating direct comparisons. 

\section{Preliminaries}

\subsection{The Maximal Initial Learning Rate} \label{prelim:milr}

We define the \emph{maximal initial learning rate} $\milr$ to be the largest, constant learning rate at which a given network can achieve validation accuracy of at least $t$, where $t$ is a given threshold accuracy.

The choice of learning rate is largely dictated by the data being used to train the network and its architecture. Since a theoretical formulation of the data itself is unrealistic, the learning rate needs to be empirically tuned for each problem statement, which is further affected by changes in the training setup. By introducing the maximal initial learning rate $\milr$, exploring its properties, and relating it to the architecture, we aim to make it easier to find large learning rates that work in practice.

Furthermore, several recent lines of work observe that the early phase of training can heavily impact training dynamics and performance at later stages \citep{Frankle2020The,Jastrzebski2020The} -- from this perspective, it is important to consider the behavior of $\milr$ and how it changes as a function of architecture and training setup.

\begin{figure*}[t]
    \centering
    \includegraphics[width=\textwidth]{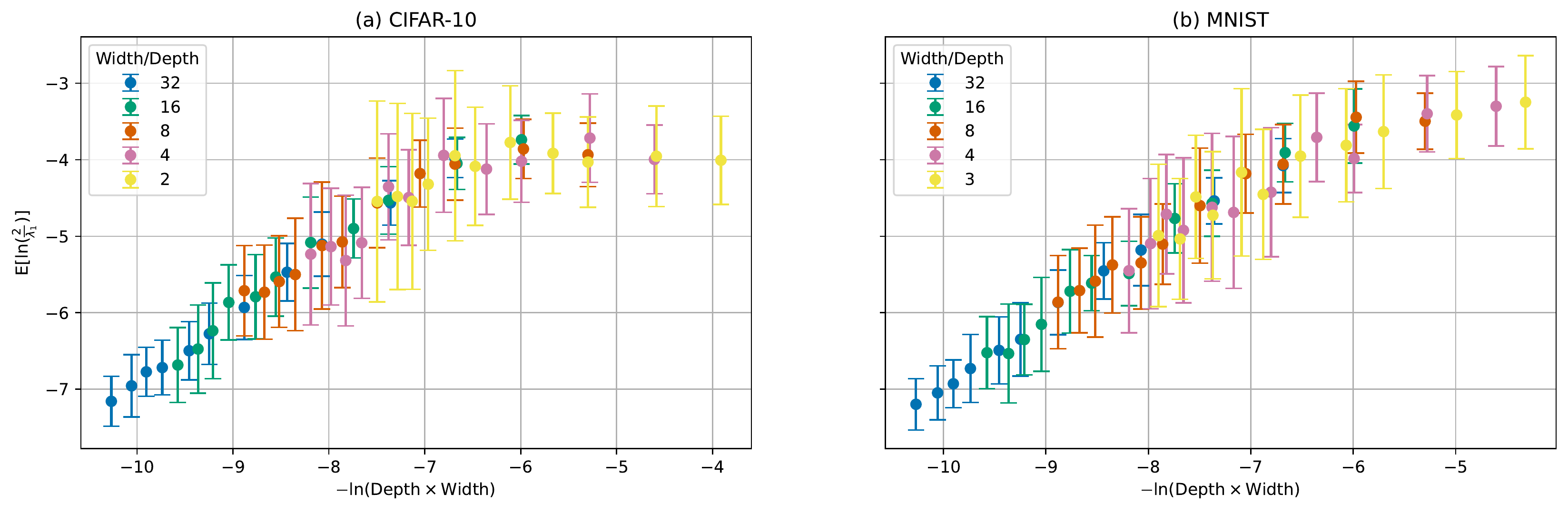}
    \caption{Plots of $2/\lambda_1$ at initialization as a function of $(\text{depth}\times \text{width})$ for various fully-connected, Kaiming-initialized architectures, evaluated on CIFAR-10 (left) and MNIST (right). $\lambda_{1}$ is measured by considering the complete dataset as a single batch of data and is averaged over 25 initializations for each architecture, with error bars shown. See \autoref{fig:const_width_depth_smallinputlr} for architecture depths used. We observe that for sufficiently large $(\text{depth}\times \text{width})$, the maximal initial learning rate $\milr$ and $2/\lambda_1$ both show similar power relationships.}
    \label{fig:sharpness_const_width_depth}
\end{figure*}

\begin{algorithm}[htb!]
\caption{Maximal Initial Learning Rate $\milr$}
\label{alg:MILR}
\begin{algorithmic}

\STATE Define threshold accuracy $t$, and upper and lower learning rates $u$ and $l$ respectively. 
\FOR{a small number of search iterations $s$}
\STATE Compute $m = \frac{u+l}{2}$ i.e.~the midpoint of $u$ and $l$
\FOR{each epoch in a small number of epochs $e$}
\STATE Train network at learning rate $m$
\STATE Evaluate validation accuracy $a$
\STATE If $a \geq t$, then break out of inner loop, and $l \gets m$
\ENDFOR
\STATE If $a < t$ after training for $e$ epochs, then $u \gets m$
\ENDFOR
\STATE The last value of $m$ satisfying $a \geq t$ is the desired $\milr$.

\end{algorithmic}
\end{algorithm}

In \autoref{alg:MILR} we describe a simple method that can be used to approximate $\milr$. We take as input a network initialization, threshold accuracy $t$, and lower and upper learning rates $l$ and $u$ respectively. We perform a binary search on the continuous space of learning rates $\in (l,u)$ to identify $\milr$. More specifically, for each midpoint learning rate $m = \frac{u+l}{2}$, we train the network from initialization for a small number of epochs. If this trained network, at any point during training, achieves validation accuracy at least $t$, then we set $l = m$. Otherwise, we set $u = m$. The next search is performed in the interval $(l,u)$. This is repeated for a small number of search iterations, and the final value of $m$ that achieves validation accuracy at least $t$ is output as $\milr$.

\subsection{Experimental Setup}
\label{section:setup}

We primarily focus on constant width, fully-connected deep ReLU networks trained with SGD, that are initialized with the Kaiming normal initialization scheme. The batch size is set to 128 across all our experiments. We discuss reasons for making this choice of initialization in \autoref{section:sharpness}.

When using \autoref{alg:MILR}, we set threshold accuracy $t$ to be the accuracy that a linear classifier achieves on the given dataset, along with the number of training epochs $e = 10$. This ensures that networks trained at the maximal initial learning rate perform adequately while taking into account task difficulty. Namely, for MNIST and CIFAR-10 we use $t=0.925$ and $t=0.34$ respectively.
Upper and lower learning rate limits $u$ and $l$ are set heuristically; we use $l = 0.0$ for all our experiments, and find that $s = 5$ search iterations are sufficient in practice to calculate $\milr$. For computing the sharpness $\lambda_1$, we use PyHessian \citep{9378171}.

\section{Main Empirical Results}

\begin{figure*}[ht]
        \centering
        \includegraphics[width=\textwidth]{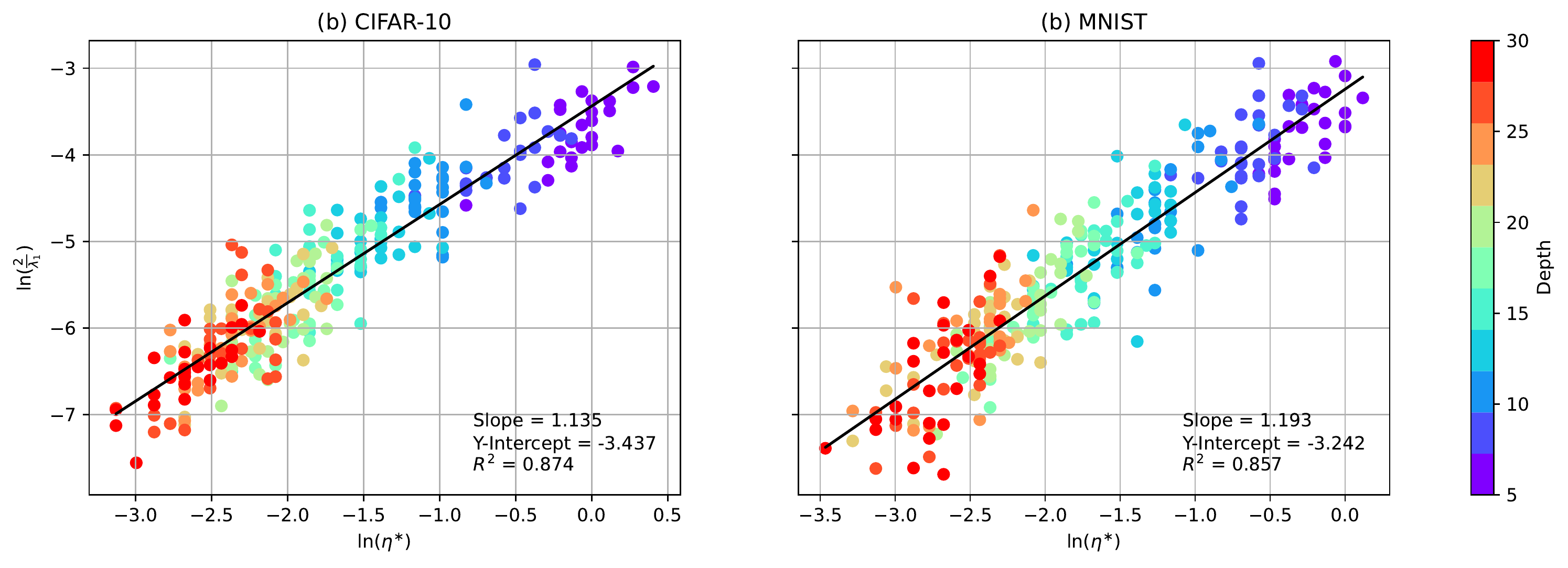}
        \caption{Plots of $2/\lambda_1$ against $\milr$ for 25 initializations per architecture, evaluated on CIFAR-10 and MNIST. We use architectures with $\text{width}  / \text{depth} = 16$, with the same depths as in \autoref{fig:const_width_depth_smallinputlr}.
        We find that $\milr > 2/\lambda_{1}$ and that $\ln(2/\lambda_{1}) = \beta\ln \milr + \gamma_{2}$ for $\beta \neq 1$, strongly contrasting with patterns observed later in training in e.g.~\citet{cohen2021gradient}.}
        \label{fig:sharpness_milr_comparison}
\end{figure*}

\subsection{Maximal Initial Learning Rate and Architecture}

We have described the maximal initial learning rate $\milr$ and explored its behavior under various training setups and threshold accuracies $t$. We now consider how $\milr$ depends on network architecture. % and the learning rate of the input layer.
From  \autoref{fig:const_width_depth_smallinputlr}, we see that if the ratio $\text{width} / \text{depth}$ is sufficiently large and the input layer is trained at a sufficiently small learning rate, then the expected value of $\milr$ is related to $(\text{depth}\times \text{width})$ through a power law: 

$$\E[\ln\milr] = -\alpha\ln(\text{depth}\times \text{width}) + \gamma_{1}$$
We also note that network architectures that deviate from the trend show much smaller $\milr$. In our experiments, these networks sometimes fail to beat the performance of a linear classifier, leading to no valid $\milr$ being found at all. For experimental results on Fashion-MNIST, we point the reader to \autoref{fig:fashion-mnist} (a) in the Appendix.

We use a small learning rate for the input layer weights because in virtually all principled initialization schemes the learning rate of a weight depends on the width of the previous layer, with larger widths corresponding to smaller learning rates (see e.g. Table 1 in \citet{pmlr-v139-yang21c}). 

Thus, while our experiments utilize networks that have constant \textit{hidden} layer widths, this suggests that the maximal initial learning rate for input layer weights may differ from the maximal learning rate appropriate for deeper layers, especially when the input dimension is large compared to the network width. In accordance with these expectations, we find that the behavior of $\milr$ is essentially unchanged if the input layer weights are frozen at initialization, while at smaller values of $(\text{depth}\times \text{width})$ the maximal initial learning rate $\milr$ deviates from the power-law predictions we otherwise observe (see \autoref{appendix:milr_uniform_lr}).

In the context of the above experiments and results, we pose the following questions for consideration in future work:
\begin{itemize}
    \item Through theoretical analysis of the Fisher information matrix, \citet{pmlr-v89-karakida19a} have suggested that the largest learning rate ensuring global convergence of SGD should scale linearly with $(\text{depth}\times \text{width})$:
    $$\E[\ln\milr] \propto -\alpha\ln(\text{depth}\times \text{width}), \quad \alpha = 1$$
    However, none of the setups considered here display this behavior. While there is no direct contradiction here, since global convergence is not guaranteed when training at $\milr$, how can one explain the discrepancy between these regimes? 
    
    \item Is there a method for initializing the input layer that allows it to train normally while preserving the strong trends observed in \autoref{fig:const_width_depth_smallinputlr}? It is possible that answering this question could shed light on the conflict between different methods for initializing the input layer.
    
    \item What is the full functional relationship between $(\text{depth}\times \text{width})$ and $\milr$ at moderate to large values of $\text{width} / \text{depth}$?
\end{itemize}

\subsection{Relationship to Expected Sharpness at Initialization}

\begin{figure*}[htbp!]
    \centering
    \includegraphics[width=\textwidth]{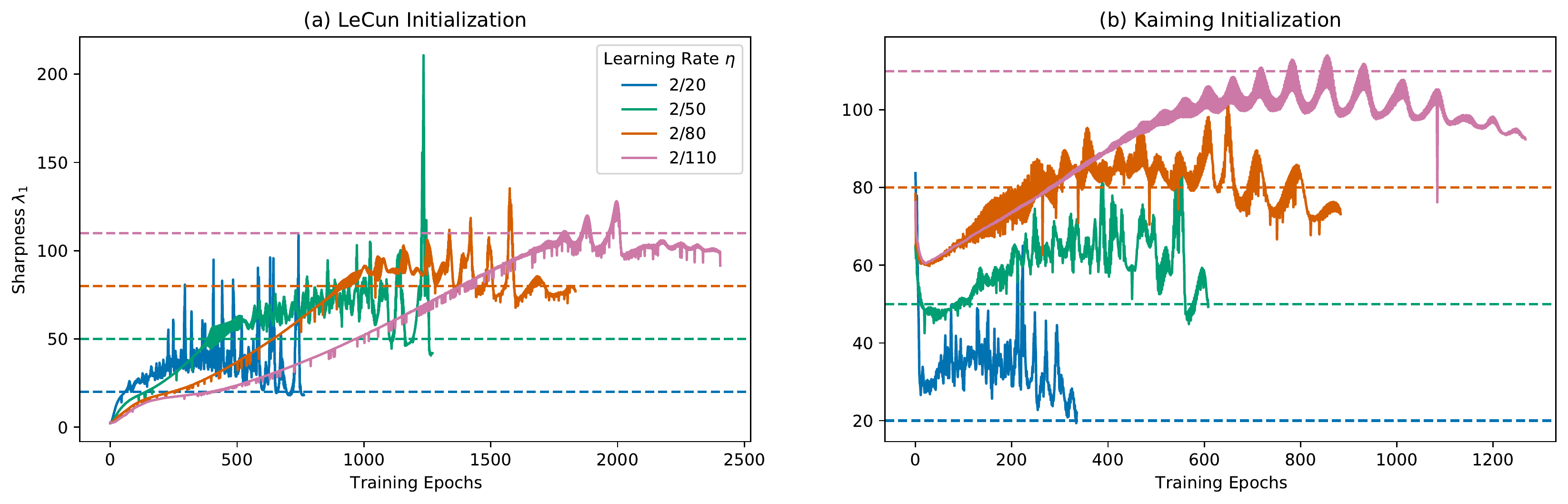}
    \caption{Sharpness $\lambda_{1}$ as a function of training epochs in full-batch gradient descent, showcasing ``progressive sharpening'' -- the tendency of $\lambda_{1}$ to continually rise until it reaches and hovers around the value of $2/\eta$ -- for (a) LeCun and (b) Kaiming initialization. We replicate the experimental setup of \citet{cohen2021gradient}, training the same network architecture until $99\%$ training accuracy is reached, for learning rates $\eta \in \{2/20,2/50,2/80,2/110\}$, on a 5k subset of CIFAR-10. Note that $\lambda_{1}$ at initialization is significantly larger for Kaiming-initialized networks than for LeCun-initialized networks (also see \autoref{fig:sharpness_stats} in the Appendix). The steep drop in sharpness at the beginning of training these networks may impact the value of $\milr$ since the maximal initial learning rate depends on the state of the network in early-stage training as well as at initialization itself.}
    \label{fig:eos_comparison}
\end{figure*}

The \emph{sharpness} $\lambda_{1}$ of a network is defined as the maximum eigenvalue of the training loss Hessian, and is often associated with the learning rate at which the network is trained. In particular, classical optimization tells us that the learning rate must be no larger than $2/\lambda_1$ to guarantee the convergence of SGD to the global minimum. However, this notion has lately been questioned in the context of deep neural networks \citep{cohen2021gradient, DBLP:journals/corr/abs-2003-02218}.

This motivates a need for a deeper understanding of sharpness and its connection to the learning rate and architecture in deep neural networks. To this end, we now explore how the expected sharpness of a network at initialization relates to architecture and maximal initial learning rate. In \autoref{fig:sharpness_const_width_depth}, we consider the value of $2/\lambda_1$ at initialization as a function of $(\text{depth}\times \text{width})^{-1}$, finding a power law relationship as with $\milr$. 

Note that the sharpness at initialization is the same regardless of whether the input layer is trained at the same or smaller learning rate as the rest of the network since no training is involved in computing sharpness at initialization.

Next, we directly compare the sharpness and the maximal initial learning rate at initialization (see \autoref{fig:sharpness_milr_comparison}, and \autoref{fig:fashion-mnist} (b) in the Appendix), using networks with $\text{width}  / \text{depth}=16$ and training the input layer at a small learning rate as in \autoref{fig:const_width_depth_smallinputlr}. While the work of \citet{cohen2021gradient} suggests that $\eta \sim 2/\lambda_{1}$ as networks converge to global optima, we find that at initialization the data closely fit a different power law, with $\ln(2/\lambda_{1}) \sim \beta \ln \milr$ for $\beta\ne 1$.
To study this comparison, we plot $2/\lambda_{1}$ as a function of $\milr$, using networks with $\text{width}  / \text{depth}=16$ and train the input layer at a small learning rate. This is done in order to compute $\milr$ that preserves the correlations observed in previous experiments.

Since judging linear fits on a log-log plot can be difficult, we provide another version of this figure in the Appendix in \autoref{fig:sharpness_vs_critlr_averaged}, averaging over initializations for each architecture. This suggests that the coefficient $\beta\neq 1$ estimated from Figure \ref{fig:sharpness_milr_comparison} is unlikely to be a product of noise.

It is also worth noting that for each point in the above plots, $\milr$ is clearly greater than $2/\lambda_{1}$. Recall that by definition, the computed $\milr$ ensures that a network initialization performs at least as well as a linear classifier on a given dataset, without diverging. This goes against the established wisdom of ``$\eta \leq 2/\lambda_{1}$'' for the convergence of SGD, hence supporting the recent lines of work that question this notion.

\subsection{Relationship to Edge of Stability} \label{section:sharpness}

In the previous experiments, we considered the sharpness at initialization, but since the definition of the maximal initial learning rate involves the ability to train a network past initialization, it makes sense that it could also be influenced by the value of the sharpness immediately following initialization. To gain further insight into this behavior, we revisit the concept of ``progressive sharpening'' identified in \citet{cohen2021gradient}. This term refers to the tendency when training at learning rate $\eta$ for the sharpness $\lambda_{1}$ to continually rise until it reaches and hovers around the value of $2/\eta$. In \autoref{fig:eos_comparison}, we replicate the experimental setup of \citet{cohen2021gradient}. In particular, we do so for both the LeCun initialization used in \citet{cohen2021gradient}, which initializes weights from a uniform distribution on $[-1/\text{fan-in},1/\text{fan-in}],$ and for Kaiming initialization, which initializes weights from a centered Gaussian with variance $2/\text{fan-in}$.

We use Kaiming initialization in this paper both because it is the more common initialization and also because in some sense it is the ``correct'' way to initialize deep ReLU networks. Namely, \citet{hanin2018start} show that in ReLU networks, Kaiming initialization prevents the mean size of the activations from growing exponentially large or small as a function of the depth, which occurs in LeCun initialization. \citet{hanin2018neural} shows a similar benefit to Kaiming initialization in reducing the problem of gradient explosion \citep{279181}. 

\autoref{fig:eos_comparison} illustrates that $\lambda_{1}$ at initialization scales quite differently for the two initialization schemes. For LeCun-initialized networks, $\lambda_{1}$ exponentially vanishes as depth increases (an effect more visible in \autoref{fig:sharpness_stats} in the Appendix, and described formally in \autoref{cor:lambda1-formal}), while it increases modestly with width and depth for Kaiming-initialized networks.

\begin{figure*}[!ht]
    \begin{minipage}[c]{0.5\textwidth}
        \includegraphics[width=0.9\textwidth]{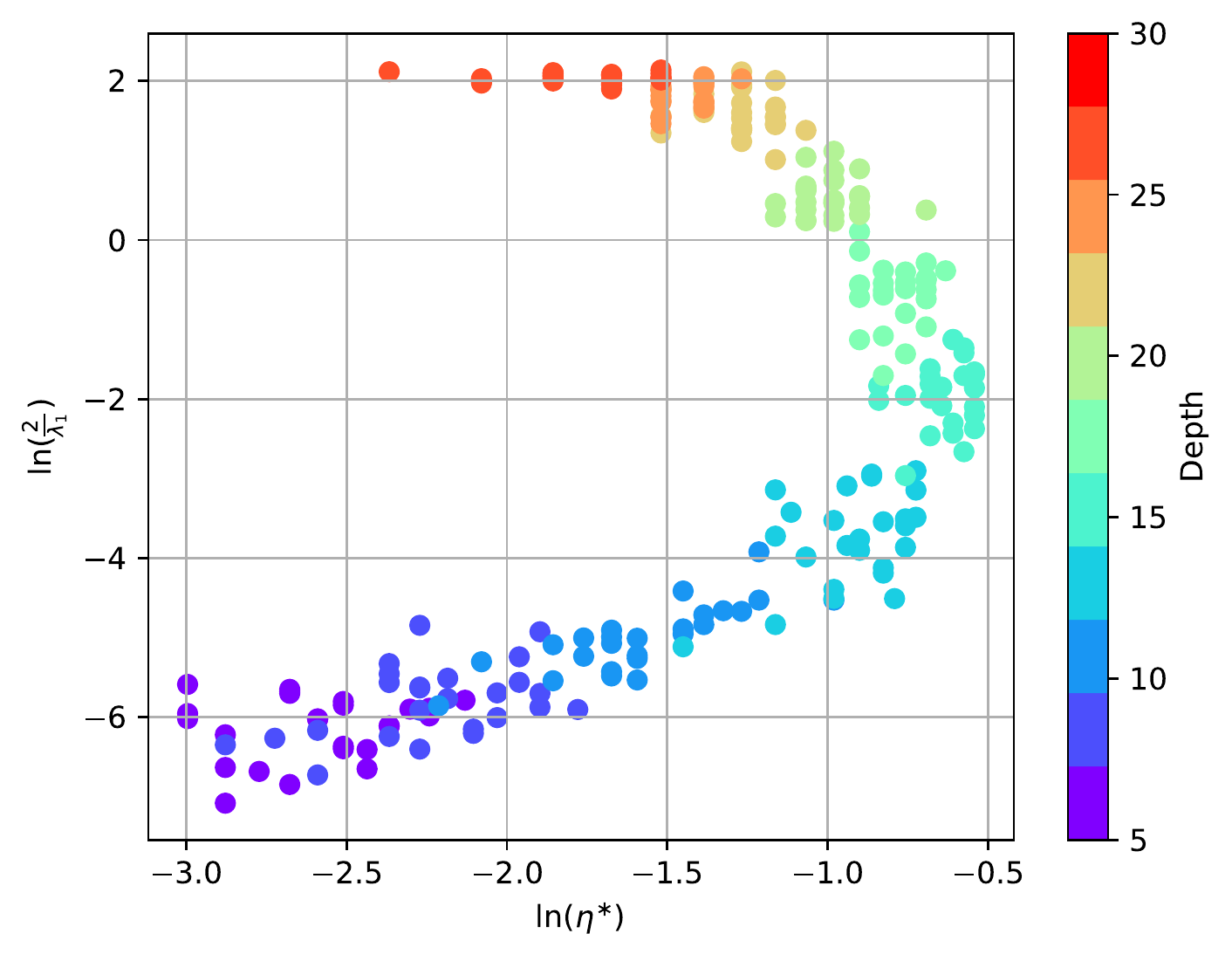}
    \end{minipage} \hfill
    \begin{minipage}[c]{0.5\textwidth}
        % \vspace{0.5cm}
        \caption{Plot of $2/\lambda_{1}$ against $\milr$ evaluated on CIFAR-10 for networks initialized with Neural Tangent Kernel (NTK) parametrization. Other experimental details are identical to those in \autoref{fig:sharpness_milr_comparison}. We find that in NTK-initialized networks, $2/\lambda_{1}$ and $\milr$ display a relationship very different from that observed in Kaiming-initialized networks. We also note key differences in empirical results between Kaiming and NTK-initialized networks: (a) $2/\lambda_{1}$ increases with depth (i.e.~$\lambda_{1}$ decreases with depth), and (b) the relationship displayed between $2/\lambda_{1}$ and $\milr$ is non-monotonic -- as $2/\lambda_{1}$ (and network depth) increases, $\milr$ first increases and then decreases.}
        \label{fig:ntp}
    \end{minipage}
\end{figure*}

We find that the ``edge of stability" phenomenon occurs for both initialization schemes, but takes slightly different forms. Namely, $\lambda_{1}$ at initialization is much larger for Kaiming-initialized networks and steeply drops off at the beginning of training, before rising (or dropping more slowly) until it hovers slightly above the $2/\eta$ line. This behavior is pertinent to our consideration of $\milr$, since the precipitous drop of $\lambda_{1}$ in very early training means that it is possible $\milr$ is able to take on larger values than it could if the sharpness remained at its initial value. This is thus a possible explanation of the behavior observed in \autoref{fig:sharpness_milr_comparison}.

The Neural Tangent Kernel (NTK) parametrization \citep{jacot2018neural,sohl2020infinite} is an initialization scheme popularly used to analyze networks in the infinite width limit. In the context of our experiments, it provides an alternate scaling of network weights with respect to layer width. In \autoref{fig:ntp} we plot $2/\lambda_{1}$ against $\milr$ for NTK-initialized networks and find that this relationship is highly different from that observed in previous experimental results.

We conclude this section by recalling \citet{gilmer2022a}, which claims that ``When the learning rate only slightly exceeds $2/\lambda_1$, optimization is unstable until the parameters move to a region with smaller $\lambda_{1}$.''
This claim is contradicted by our \autoref{fig:eos_comparison} (specifically the pink curve for Kaiming initialization and $\eta = 2/110$), from which we find that in fact the learning rate need not exceed $2/\lambda_1$ for the parameters to move to a region with smaller $\lambda_{1}$. To elaborate, note that the value of $\lambda_{1}$ (in the pink curve in \autoref{fig:eos_comparison} (b)) at the beginning of training is much smaller than $2/ \eta$ (where $\eta = 2/110$) at initialization. Even in this case, we observe that parameters move to a region with smaller $\lambda_{1}$, implying that the condition stated by \citet{gilmer2022a} is not necessary for this to occur.

\section{Estimates for the Largest Eigenvalue of the Loss Hessian at Initialization}\label{sec:thm}

In this section, we present our main theoretical result, Theorem \ref{thm:HS}, which computes the average squared Frobenius norm of the loss Hessian at initialization. Before stating it exactly, we present informally a simple corollary that gives bounds for the largest eigenvalue of the loss Hessian at initialization. 
\begin{corollary}[Informal]\label{cor:lambda1}
Consider a randomly initialized fully connected ReLU network of constant width, and denote by $\lambda_1$ the largest eigenvalue of the Hessian of an empirical MSE loss. We have the following upper bound on the largest eigenvalue:
\[
\Ee{\abs{\lambda_1}}=O( \text{depth}\times\text{width} )
\]
and the following lower bound on the sharpness:
\[
\Ee{\frac{2}{\text{sharpness}}}=\Ee{\frac{2}{\abs{\lambda_1}}}= \Omega\lrr{\frac{1}{\text{depth}\times\text{width} }},
\]
where the average in both estimates is over initialization.
\end{corollary}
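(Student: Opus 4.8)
The plan is to deduce both estimates directly from Theorem~\ref{thm:HS} using only elementary matrix and convexity inequalities; essentially all of the genuine work lives in that theorem, and the corollary is a three-line consequence. Let $H$ denote the Hessian of the empirical MSE loss at initialization, a real symmetric random matrix with eigenvalues $\lambda_1 \ge \lambda_2 \ge \cdots$, and recall $\norm{H}_F^2 = \Tr(H^2) = \sum_i \lambda_i^2$. The one structural fact we use is the domination of the spectral norm by the Frobenius norm: pointwise (i.e.\ for every realization of the Kaiming-initialized weights), $\abs{\lambda_1}^2 \le \sum_i \lambda_i^2 = \norm{H}_F^2$, hence $\abs{\lambda_1} \le \norm{H}_F$.

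For the upper bound on the largest eigenvalue, take expectations and apply Cauchy--Schwarz (equivalently Jensen for the concave map $x \mapsto \sqrt{x}$):
\[
\Ee{\abs{\lambda_1}} \;\le\; \Ee{\norm{H}_F} \;\le\; \sqrt{\Ee{\norm{H}_F^2}} \;=\; \sqrt{\Ee{\Tr(H^2)}}.
\]
By Theorem~\ref{thm:HS}, $\Ee{\Tr(H^2)}$ is given by an explicit expression of order $(\text{depth}\times\text{width})^2$, with the remaining constants depending only on quantities held fixed across architectures (the number of samples, the input and output dimensions, and the labels). Hence $\Ee{\abs{\lambda_1}} = O(\text{depth}\times\text{width})$, which is the first claim.

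For the lower bound on the expected reciprocal sharpness, apply Jensen's inequality to the convex function $x \mapsto 1/x$ on $(0,\infty)$ (with the usual convention $1/0 = +\infty$, so the bound is also vacuously valid on any event where $\lambda_1 = 0$, and no control of the lower tail of $\lambda_1$ is needed):
\[
\Ee{\frac{2}{\abs{\lambda_1}}} \;\ge\; \frac{2}{\Ee{\abs{\lambda_1}}} \;\ge\; \frac{2}{\sqrt{\Ee{\Tr(H^2)}}} \;=\; \Omega\!\lrr{\frac{1}{\text{depth}\times\text{width}}},
\]
where the second inequality is exactly the upper bound just established. One may additionally remark that $\lambda_1 > 0$ almost surely — at a generic initialization the piecewise-linear network has vanishing second derivatives in each individual weight, so the diagonal of $H$ coincides with that of the positive-semidefinite Gauss--Newton matrix and contains a.s.-positive entries — so in fact $\Ee{2/\abs{\lambda_1}}$ is an honest (possibly infinite) expectation and the statement about sharpness is nonvacuous.

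The only nontrivial ingredient is Theorem~\ref{thm:HS} itself, and I expect its proof to be the main obstacle. One has to expand $H$ into its layerwise blocks (the Gauss--Newton term $\sum_k J_k^\top J_k$ plus the residual-weighted second-derivative term $\sum_k r_k \nabla^2 f_k$), form $\Tr(H^2)$, and evaluate the expectation over the weights by exploiting (i) independence of the weight matrices across layers, (ii) the Gaussian moment structure of the entries, and (iii) the fact that ReLU derivatives enter only as $0/1$ masks whose law is governed by the forward pass; the $(\text{depth}\times\text{width})^2$ scaling should emerge from counting the surviving contractions. Once that identity (or even just an $O\!\lrr{(\text{depth}\times\text{width})^2}$ upper bound for $\Ee{\Tr(H^2)}$) is in hand, the passage to Corollary~\ref{cor:lambda1} is precisely the three displays above.
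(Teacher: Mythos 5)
Your proposal is correct and follows essentially the same route as the paper's proof: bound $\abs{\lambda_1}$ by the Frobenius norm, pass to $\sqrt{\Ee{\norm{H}_F^2}}$ via Jensen/Cauchy--Schwarz, invoke Theorem~\ref{thm:HS} for the $n^2L^2$ scaling, and get the sharpness bound from convexity of $x\mapsto 1/x$. Your added remarks (positivity of $\lambda_1$, and the outline of how Theorem~\ref{thm:HS} itself should be proved via layerwise blocks and Gaussian/ReLU-mask moment calculations) are consistent with what the paper actually does in its appendix.
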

\vspace{.1cm}

The preceding estimates show that depth times width, the key parameter which we found determines the maximal initial learning rate and the sharpness, naturally appears when computing the Frobenius norm of the loss Hessian (see also \autoref{thm:HS}) and hence can be used to obtain bounds on $\abs{\lambda_1}$ and sharpness at initialization.

\subsection{Formal Statements}
In order to state this Corollary and Theorem \ref{thm:HS} more precisely, we need some notation. We consider a ReLU network, which for an input $x\in \R^{n_0}$ outputs $z_{1}^{(L+1)}(x)\in \R$ via hidden layer pre-activations $z^{(\ell)}(x)\in \R^{n_\ell}$ as follows:
\[
z_{i}^{(\ell+1)}(x)=\begin{cases}
\sum_{j=1}^{n_\ell}W_{ij}^{(\ell)}\sigma\lrr{z_{j}^{(\ell)}(x)},&\quad \ell\geq 1\\
\sum_{j=1}^{n_\ell}W_{ij}^{(1)}x_{j},&\quad \ell=0
\end{cases}
\]
for $i = 1,\ldots,n_\ell$. Note that we have set the biases in the network to $0$. Moreover, we will assume that the weights are independent Gaussians $W_{ij}^{(\ell)}\sim \mathcal N\lrr{0,2/n_{\ell-1}}$.

Our goal is to understand the Hessian 
\[
H^{(L+1)}:= \big( \partial_{W_{ij}^{(\ell)}}\partial_{W_{i'j'}^{(\ell')}}\mathcal L\big),
\]
(here the indices summarize $1\leq \ell,\ell'\leq L+1,\, 1\leq i,i'\leq n_{\ell},\, 1\leq j,j'\leq n_{\ell-1}$) of the empirical MSE
\[
\mathcal L = \frac{1}{2k}\sum_{i=1}^k \lrr{z_1^{(L+1)}(x_i)-y_i}^2
\]
over $k$ input-output pairs $(x_i,y_i)$. Specifically, we compute the mean squared Frobenius norm of $H^{{(L+1)}}$ given by
\[
\Ee{\sum_{\ell,\ell'=1}^{L+1}\sum_{i,i'=1}^{n_{\ell}}\sum_{j,j'=1}^{n_{\ell-1}}\lrr{ \partial_{W_{ij}^{(\ell)}}\partial_{W_{i'j'}^{(\ell')}}\mathcal L }^2},
\]
where the average is over the Gaussian distribution of the weights. Our main result is
\begin{theorem}\label{thm:HS}
Fix $n_0\geq 1$ as well as $c,C>0$ and a network input $x$ satisfying $\norm{x}^2=n_0$. Then, there exists a constant $C_1$, depending only on $c,C$ with the following property. For any $L\geq 1$ there exists a constant $C_2$, depending only on $L,c,C$ such that if $cn \leq n_1,\ldots, n_L\leq Cn$, then
\[
 \abs{\Ee{\norm{H^{(L+1)}}_{F}^2} - C_1 n^2 L^2 }\leq C_2 n.
\]
\end{theorem}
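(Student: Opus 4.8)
The plan is to expand the loss Hessian via backpropagation and then evaluate the expectation layer by layer, using only the Gaussianity of the weights and elementary properties of the ReLU sign pattern. Write $f:=z_1^{(L+1)}(x)$, take $k=1$ for concreteness (the general case being identical up to bookkeeping), and set $r:=f-y_1$, so that $\mathcal L=\tfrac12 r^2$ and the Hessian in the weights is $H^{(L+1)}=\nabla f\otimes\nabla f+r\,\nabla^2 f$, whence
\[
\norm{H^{(L+1)}}_F^2=\norm{\nabla f}^4+2r\,(\nabla f)^{\!\top}(\nabla^2 f)\,\nabla f+r^2\,\norm{\nabla^2 f}_F^2 .
\]
I would estimate $\Ee{\,\cdot\,}$ of each of the three summands separately.

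\textbf{Backpropagation formulas.} Let $a^{(\ell)}=\sigma(z^{(\ell)})$ (with $a^{(0)}=x$), $D^{(\ell)}=\operatorname{diag}(\1[z^{(\ell)}>0])$, and $\delta^{(\ell)}=\partial f/\partial z^{(\ell)}$, so that $\delta^{(L+1)}=1$ and $\delta^{(\ell)}=D^{(\ell)}(W^{(\ell+1)})^{\!\top}\delta^{(\ell+1)}$. Then $\partial_{W_{ij}^{(\ell)}}f=\delta_i^{(\ell)}a_j^{(\ell-1)}$; since the ReLU activation pattern is a.s.\ locally constant, second derivatives within one weight layer vanish, and for $\ell<\ell'$ one has $\partial_{W_{ij}^{(\ell)}}\partial_{W_{i'j'}^{(\ell')}}f=a_j^{(\ell-1)}\,(\tilde B^{(\ell,\ell')})_{j'i}\,\delta_{i'}^{(\ell')}$, where $\tilde B^{(\ell,\ell')}=D^{(\ell'-1)}W^{(\ell'-1)}\cdots D^{(\ell+1)}W^{(\ell+1)}D^{(\ell)}$ is the forward Jacobian from layer $\ell$ to layer $\ell'-1$ restricted to active units ($=D^{(\ell)}$ when $\ell'=\ell+1$). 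Squaring and summing over all index tuples collapses the two outer index-sums, yielding $\norm{\nabla f}^2=\sum_{\ell}\norm{\delta^{(\ell)}}^2\norm{a^{(\ell-1)}}^2$ and
\[
\norm{\nabla^2 f}_F^2=2\sum_{\ell<\ell'}\norm{a^{(\ell-1)}}^2\,\norm{\tilde B^{(\ell,\ell')}}_F^2\,\norm{\delta^{(\ell')}}^2 ,
\]
together with an analogous sum over pairs of layers for $(\nabla f)^{\!\top}(\nabla^2 f)\,\nabla f$.

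\textbf{Layer-wise moments.} Using that the $W^{(\ell)}$ are centered Gaussian (so all odd moments vanish), that $\Ee{\sigma(z)^2}=\tfrac12\Ee{z^2}$ for symmetric $z$, and — crucially — that conditioned on its incoming activations the sign pattern of a layer depends only on their \emph{direction}, one gets the exact identities $\Ee{\norm{a^{(\ell)}}^2}=n_\ell/2$, $\Ee{\norm{\delta^{(\ell)}}^2}=1$, $\Ee{(W^{(\ell)})^{\!\top}D^{(\ell)}W^{(\ell)}\mid a^{(\ell-1)}}=\tfrac{n_\ell}{n_{\ell-1}}I$ and, telescoping the last one along the layers, $\Ee{\norm{\tilde B^{(\ell,\ell')}}_F^2}=n_{\ell'-1}/2$. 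I would then show that the joint expectations above factor into these marginals up to a relative error $O(1/n)$, the corrections being governed by $\sum_\ell 1/n_\ell=O(L/n)$ and by variance bounds $\Var(\norm{a^{(\ell)}}^2)=O(n)$, $\Var(\norm{\delta^{(\ell)}}^2)=O(1)$ of the Hanin--Rolnick type. Summing the $\Theta(L^2)$ ``bulk'' layer-pairs (those not touching the $n_0$-wide input layer $W^{(1)}$) assembles the $n^2L^2$ leading term with an explicit constant $C_1$; the pairs touching $W^{(1)}$ and all the fluctuation remainders contribute only $O(n)$ for fixed $L$, and the cross term is estimated by the same method — its $y$-dependent part vanishing by the sign symmetry $W^{(L+1)}\mapsto -W^{(L+1)}$ (which negates $f$) — so it too only affects the $O(n)$ remainder.

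\textbf{The main obstacle} is entirely in the last step: the forward factors $\norm{a^{(\ell-1)}}^2$, the middle factors $\norm{\tilde B^{(\ell,\ell')}}_F^2$, and the backward factors $\norm{\delta^{(\ell')}}^2$ are \emph{not} independent — they are coupled through the shared activation patterns $D^{(m)}$ — so the pleasant factorizations above hold only approximately, and the real work is to verify that this coupling perturbs only the $O(n)$ remainder and not the $n^2L^2$ coefficient. This is exactly where conditioning on the sign patterns (after which the magnitudes of the lower-layer pre-activations are independent of the higher-layer signs) and the second-moment concentration estimates do the work; the final step is the bookkeeping that collects the $O_L(1)$-many error contributions into the single bound $C_2 n$.
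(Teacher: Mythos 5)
You take a genuinely different route from the paper. The paper never writes the Hessian in Gauss--Newton form; it introduces the functionals $\Y{\ell}{\cdot}$, integrates out the top layer to express $\Ee{\norm{H^{(L+1)}}_F^2}$ through quantities at layer $L$ (Lemma~\ref{lem:H-rec}, Corollary~\ref{cor:H-Y}), and then derives and solves depth recursions for those functionals (Lemmas~\ref{lem:Ymu-recs} and \ref{lem:Ymunu-recs}, Corollaries~\ref{cor:Ymu-form} and \ref{cor:Ymunu-form}). Your per-layer-pair backpropagation bookkeeping is correct: the within-layer blocks of $\nabla^2 f$ vanish a.s.\ for ReLU, $\norm{\nabla f}^2=\sum_\ell \norm{\delta^{(\ell)}}^2\norm{a^{(\ell-1)}}^2$, the cross-layer block contributes $\norm{a^{(\ell-1)}}^2\norm{\tilde B^{(\ell,\ell')}}_F^2\norm{\delta^{(\ell')}}^2$, and your exact marginal identities hold up to one harmless constant (with the paper's normalization $\norm{x}^2=n_0$ and weight variance $2/\text{fan-in}$ one gets $\Ee{\norm{a^{(\ell)}}^2}=n_\ell$, not $n_\ell/2$; immaterial since $C_1$ is unspecified). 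Your leading-order accounting also matches the paper's conclusion: both $\norm{\nabla f}^4$ and the residual-times-$\norm{\nabla^2 f}_F^2$ piece feed the $n^2L^2$ coefficient (these correspond to $\Y{L}{(\partial_\mu z)^2,(\partial_\nu z)^2}$, $\Y{L}{z^2,(\partial_{\mu\nu}z)^2}$ and $\Y{L}{(\partial_{\mu\nu}z)^2}$ in Corollary~\ref{cor:H-Y}), the cross term is subleading (the corresponding $Y$'s are the $O(n^{-1})$ ones in Corollary~\ref{cor:Ymunu-form}), and your parity argument correctly kills only its $y$-dependent part. Organizing the computation per layer pair instead of as a depth recursion is a perfectly viable alternative and arguably more transparent about where the $n^2L^2$ comes from.

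That said, there is a real gap, and it sits exactly where you planted your flag. The assertion that the joint expectations factor into marginals up to relative $O(1/n)$ --- e.g.\ $\Ee{r^2\,\norm{a^{(\ell-1)}}^2\norm{\tilde B^{(\ell,\ell')}}_F^2\norm{\delta^{(\ell')}}^2}\approx \Ee{r^2}\,\Ee{\norm{a^{(\ell-1)}}^2}\,\Ee{\norm{\tilde B^{(\ell,\ell')}}_F^2}\,\Ee{\norm{\delta^{(\ell')}}^2}$ and $\Ee{\norm{\nabla f}^4}\approx\lrr{\Ee{\norm{\nabla f}^2}}^2$ --- cannot simply be quoted from Hanin--Rolnick/Hanin--Nica-type bounds: those control moments of the forward pass or of the backward pass separately, whereas here you need mixed joint moments coupling forward norms, inter-layer Jacobian Frobenius norms, backward norms and the output $f$, across pairs of layer pairs, all sharing weights and activation patterns; likewise the $f$-dependent part of the cross term is not removed by any symmetry and needs an actual estimate (in the paper this is \eqref{eq:rec3} together with the $O(n^{-1})$ classification in Corollary~\ref{cor:Ymunu-form}). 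Verifying these statements is essentially the layer-by-layer Gaussian integration plus sign-independence conditioning (Lemma~\ref{lem:ind}) that constitutes the bulk of the paper's appendix, so your proposal defers, rather than performs, the step carrying nearly all of the difficulty. One mitigating remark in your favor: the paper's own assembly only controls the error to relative $O(1/L)+O(1/n)$, so your plan, once that step is carried out at the level of rigor you sketch, would land at the same precision as the published argument.
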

\vspace{.1cm}
The preceding Theorem gives the following upper bound on the largest eigenvalue of $H^{(L+1)}$ and its reciprocal:
\begin{corollary}[Precise Statement of Corollary \ref{cor:lambda1}]\label{cor:lambda1-formal}
With the notation of Theorem \ref{thm:HS}, denote by $\lambda_1$ the largest eigenvalue of the Hessian of an empirical MSE loss. There exists a constant $K>0$ depending only on the constants $n_0, C,c$ from Theorem \ref{thm:HS} such that 
\[
\Ee{\abs{\lambda_1}}\leq KnL\lrr{1+O(n^{-1})}\quad \text{and}
\] 
\[
\Ee{2/\text{sharpness}}=\Ee{2/\abs{\lambda_1}}\geq  \frac{1}{KnL}\lrr{1+O(n^{-1})}.
\]
These results hold for Kaiming initialization. For LeCun initialization the same results hold, except $K$ must be replaced by $2^{-L/2}K$. 
\end{corollary}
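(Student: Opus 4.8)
The plan is to read the Corollary off directly from Theorem~\ref{thm:HS}, using only that the top eigenvalue of a symmetric matrix is dominated by its Frobenius norm, together with Jensen's inequality in both of its directions.

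First, since $H^{(L+1)}$ is symmetric, $\abs{\lambda_1}^2\le\sum_i\lambda_i^2=\norm{H^{(L+1)}}_F^2$, so by concavity of the square root and Jensen,
\[
\Ee{\abs{\lambda_1}}\le\Ee{\norm{H^{(L+1)}}_F}\le\sqrt{\Ee{\norm{H^{(L+1)}}_F^2}}\le\sqrt{C_1n^2L^2+C_2n}.
\]
Pulling $C_1 n^2 L^2$ out of the radical gives $\sqrt{C_1}\,nL\sqrt{1+C_2/(C_1nL^2)}=\sqrt{C_1}\,nL\lrr{1+O(n^{-1})}$ for fixed $L$, which is the first inequality with $K=\sqrt{C_1}$ (any larger $K$ works too). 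For the second inequality I would use that $t\mapsto 1/t$ is convex on $(0,\infty)$, so Jensen yields $\Ee{2/\abs{\lambda_1}}=2\,\Ee{1/\abs{\lambda_1}}\ge 2/\Ee{\abs{\lambda_1}}\ge\tfrac{1}{KnL}\lrr{1+O(n^{-1})}$; this step is valid even if $\lambda_1=0$ with positive probability or $1/\abs{\lambda_1}$ is non-integrable, since the left side is then $+\infty$. Note that only the upper half of Theorem~\ref{thm:HS} is used, and that a single $K$ serves both bounds.

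For LeCun initialization I would re-run the proof of Theorem~\ref{thm:HS} with weight variances $1/n_{\ell-1}$ in place of $2/n_{\ell-1}$. Because ReLU is positively homogeneous of degree one, scaling every weight by $1/\sqrt2$ multiplies $z_1^{(L+1)}$ and each of its first- and second-order weight derivatives by a fixed negative power of $\sqrt2$, the exponent being set by how many weight matrices the corresponding monomial involves; carrying this through the moment computation multiplies the leading term $C_1 n^2 L^2$ by $2^{-L}$, so after taking square roots $K$ is replaced by $2^{-L/2}K$, consistent with the exponential-in-depth vanishing of $\lambda_1$ noted earlier.

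The real work is Theorem~\ref{thm:HS} itself; granted it, the Corollary is essentially bookkeeping. The delicate point confined to the Corollary is the LeCun rescaling: one must pin down which family of terms supplies the leading $n^2L^2$ behaviour of $\Ee{\norm{H^{(L+1)}}_F^2}$ (heuristically, the terms contracting a second derivative of the output against the labels $y_i$, rather than the Gauss--Newton part, which scales as $2^{-2L}$ but is subleading in $n$) and check that it carries the factor $2^{-L}$; one should also confirm that the $O(n^{-1})$ remainder, whose implied constant may depend on $L$ and $n_0$, is absorbed into the stated form so that $K$ depends only on $c$ and $C$.
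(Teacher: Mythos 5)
Your argument is correct and essentially the same as the paper's: you bound $\Ee{\abs{\lambda_1}}$ by $\sqrt{\Ee{\norm{H^{(L+1)}}_F^2}}$ using Theorem~\ref{thm:HS}, get the sharpness bound from Jensen and the convexity of $x\mapsto 1/x$ on $(0,\infty)$, and handle LeCun initialization by the same weight-rescaling/homogeneity observation the paper uses to replace $K$ by $2^{-L/2}K$. The only quibble is the parenthetical heuristic that the Gauss--Newton part is ``subleading in $n$'': in the paper's computation the term $\Y{L}{(\partial_\mu z)^2,(\partial_\nu z)^2}$ contributes at the same order $n^2L^2$ as the second-derivative terms, and it is only under the LeCun rescaling (factor $2^{-2L}$ versus roughly $2^{-L}$ for the terms contracting $\partial_{\mu\nu}z$ against $y$) that it becomes subdominant in $L$ --- which is all your conclusion actually requires.
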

\begin{proof}
Since the squared Frobenius norm of $H^{(L+1)}$ is the sum of squares of its eigenvalues, Theorem \ref{thm:HS} yields
\begin{align*}
\Ee{\abs{\lambda_1}}&\leq \sqrt{\Ee{\lambda_1^2}}\leq \sqrt{ \Ee{\norm{H^{(L+1)}}_F^2}}\\
&= C_1^{1/2} n L( 1+ O(n^{-1})).    
\end{align*}
Further, since $x\mapsto 1/x$ is convex on $(0,\infty)$, we have
\begin{align*}
\Ee{\frac{2}{\abs{\lambda_1}}}\geq \frac{2}{\Ee{\abs{\lambda_1}}}\geq \frac{2}{C_1^{1/2} n L}( 1+ O(n^{-1})).    
\end{align*}
Since depth $L$ ReLU networks are homogeneous of degree $L$ in their weights, The change from Kaiming to He initialization causes the network output (and hence its derivatives) to be re-scaled by a factor of $2^{-L/2}$. 
\end{proof}

\subsection{Proof Outline for Theorem \ref{thm:HS}}
Our strategy for estimating the Frobenius norm of $H^{(L+1)}$ is to proceed recursively in $L$. To explain the main idea (full details in the Appendix) we need some notation. First, we will use $\mu,\nu$ to denote generic variables indexing network weights. Next, for any $\ell=1,\ldots, L$ and any expressions $f_k(z)$ depending on $z$ and $\partial_\mu z$ we write
\[
\Y{\ell}{f_1,\ldots, f_k} :=\Ee{ \sum_{\mu\leq \ell} \frac{1}{n_\ell^k}\sum_{j_1,\ldots, j_k=1}^{n_\ell} \prod_{i=1}^k f_{i}(z_{j_i}^{(\ell)})}.
\]
Thus, for example,
\[
\Y{\ell}{z\partial_\mu z} = \Ee{ \sum_{\mu\leq \ell} \frac{1}{n_\ell}\sum_{j=1}^{n_\ell} z_j^{(\ell)}\partial_\mu z_{j}^{(\ell)}}.
\]
In both cases, $\mu\leq \ell$ denotes the collection of weights in layers $1,\ldots, \ell$. Similarly, if the functions $f_k(z)$ depend in addition on $\partial_\nu z$ and $\partial_{\mu\nu}z$ then we we will write
\[
    \Y{\ell}{f_1,\ldots, f_k} :=\Ee{ \sum_{\mu,\nu\leq \ell}  \frac{1}{n_\ell^k}\sum_{j_1,\ldots, j_k=1}^{n_\ell} \prod_{i=1}^k f_{i}(z_{j_i}^{(\ell)})}.
\]
Thus, for example, $\Y{\ell}{\lrr{\partial_\mu z}^2,z\partial_{\mu\nu}z}$ equals
\begin{align*}
\Ee{ \sum_{\mu\leq \ell}  \frac{1}{n_\ell^2}\sum_{j_1,j_2=1}^{n_\ell}\lrr{ \partial_\mu z_{j_1}^{(\ell)}}^2 z_{j_2}^{(\ell)}\partial_{\mu\nu}z_{j_2}^{(\ell)}}.    
\end{align*}
The key steps in proving Theorem \ref{thm:HS} are now as follows:
\begin{enumerate}
    \item Integrate out the weights in layer $L+1$ to rewrite $\Ee{\norm{H^{(L+1)}}_F^2}$ in terms of various $Y^{(L)}$'s. This is done in the Appendix in Lemma \ref{lem:H-rec} and Corollary \ref{cor:H-Y}. Since the Hessian involves second derivatives, the $Y^{(L)}$ that appear depend on various combinations of $z,\partial_\mu z,\partial_\nu z, \partial_{\mu\nu }z$.
    \item Obtain recursive expressions for the $Y^{(\ell+1)}$'s that depend only $z,\partial_\mu z$ in terms of the corresponding $Y^{(\ell)}$'s at the previous layer. This is done in Lemma \ref{lem:Ymu-recs}. Each such recursion is derived by considering two cases. First, the case where the parameter $\mu$ is a weight in layer $\ell+1$. This gives expressions no longer containing any derivatives that depend only on moments of the norm of the vector of pre-activations $z^{(\ell)}$ at layer $\ell$. Such moments are well-known. Second,  the case where the parameter $\mu$ is a weight in layers $1,\ldots, \ell$. By explicitly integrating out the weights in layer $\ell+1$, we obtain expressions involving various $Y^{(\ell)}$. 
    \item Solve the recursions for the $Y^{(\ell+1)}$'s that depend only on $z,\partial_\mu z$ to understand how they grow with depth and width. This is done in Corollary \ref{cor:Ymu-form}.
    \item Obtain consistent recursive expressions for the $Y^{(\ell+1)}$'s that depend only $z,\partial_\mu z, \partial_\nu z, \partial_{\mu\nu}z$ in terms of $Y^{(\ell)}$'s that depend only on the same expressions. This is done in Lemma \ref{lem:Ymunu-recs}. The strategy is the same as in deriving Lemma \ref{lem:Ymu-recs}, except one must consider three cases: $\mu,\nu$ are both weights in layer $\ell+1$, exactly one of $\mu,\nu$ is a weight in layer $\ell+1$, and neither of $\mu,\nu$ are weights in layer $\ell+1$. 
    \item Solve the recursions for the $Y^{(\ell+1)}$'s that depend only $z,\partial_\mu z, \partial_\nu z, \partial_{\mu\nu}z$ to understand how they grow with depth and width. This is done in Corollary \ref{cor:Ymunu-form}.
    \item Combine Corollaries \ref{cor:H-Y}, \ref{cor:Ymu-form}, and \ref{cor:Ymunu-form} to obtain estimates for the average of the squared Frobenius norm of $H^{(L+1)}$. 
\end{enumerate}

\section{Conclusion}
We have introduced the maximal initial learning rate along with a  simple algorithm to compute it. We empirically show that the maximal initial learning rate is closely related to the architecture and sharpness $\lambda_1$ at initialization in Kaiming-initialized fully-connected ReLU networks through:
\begin{align*}
    \E[\ln\milr] &= -\alpha\ln(\text{depth}\times \text{width}) + \gamma_{1},\\ %\quad \alpha \approx 0.75 \\
    \ln (2/\lambda_{1}) &= \beta \ln\milr + \gamma_{2},\quad \beta \neq 1
\end{align*}
as long as the network's $\text{width}/\text{depth}$ is sufficiently large and the input layer is trained at a relatively small learning rate. Further, we formally prove bounds for the sharpness in terms only of the value of  $(\text{depth} \times \text{width})$. 

To close, we emphasize several limitations and directions for future work. First, our experiments were performed only for constant width fully-connected ReLU networks trained by vanilla SGD with a fixed batch size. It would be therefore be interesting to further understand the architecture dependence of the maximal initial learning rate on: batch size, the presence of non-constant hidden layer widths, non-ReLU activations, non-fully connected layers, and the presence of normalization (e.g. BatchNorm). These factors can significantly impact network behavior early in training, which would, in turn, limit the direct application of our results in practical settings. For a rather preliminary investigation of maximal learning rates in ResNets see \autoref{appendix:resnet_performance}. 

Second, there is a rich vein of prior work concerning the dependence of learning rate and details of the optimization protocol. It would therefore be of interest to understand how the maximal initial learning rate $\milr$ varies with batch size \cite{goyal2017accurate,jastrzebski2018factors,hoffer2017train, smith2017don,smith2021origin} as well as  momentum coefficient, $\ell_2$ regularization strength, and data augmentation scheme. Similarly, it could be useful to study the relationship between architecture and $\milr$ when using adaptive optimizers such as Adam or Adagrad.

Further, both our experiments and theoretical analyses focused on optimization with a single fixed learning rate. In practice, however, learning rate protocols ranging from a simple learning rate drop after a fixed number of epochs to more intricate schemes such as warmup \cite{goyal2017accurate} or cosine schedules can improve network performance. Developing a theory of maximal learning rates that is valid throughout training could be of significant value.

Finally, we do not have a theoretical explanation that would predict the somewhat exotic power-law exponents $\alpha,\beta$ in the dependence of $\ln \milr$ on $(\text{depth}\times \text{width})$ and on $\ln(2/\lambda_1)$, and it would be interesting to understand their origin.

\section*{Acknowledgments}
D.R.~and G.I.~gratefully acknowledge support from the Canada CIFAR AI Chairs Program and NSERC Discovery Grants program. 

B.H.~gratefully acknowledges support from the NSF through DMS-2143754, DMS-1855684, and DMS-2133806 as well as support from an ONR MURI on Foundations of Deep Learning. 

The authors would like to thank Gintare Karolina Dziugaite and Devin Kwok for their helpful feedback during the early stages of this work. In addition, the authors acknowledge material support from NVIDIA and Intel in the form of computational resources and are grateful for technical support from the Mila IDT team in maintaining the Mila Compute Cluster.

\bibliography{milr_references}
\bibliographystyle{icml2023}

%%%%%%%%%%%%%%%%%%%%%%%%%%%%%%%%%%%%%%%%%%%%%%%%%%%%%%%%%%%%%%%%%%%%%%%%%%%%%%%
%%%%%%%%%%%%%%%%%%%%%%%%%%%%%%%%%%%%%%%%%%%%%%%%%%%%%%%%%%%%%%%%%%%%%%%%%%%%%%%
% APPENDIX
%%%%%%%%%%%%%%%%%%%%%%%%%%%%%%%%%%%%%%%%%%%%%%%%%%%%%%%%%%%%%%%%%%%%%%%%%%%%%%%
%%%%%%%%%%%%%%%%%%%%%%%%%%%%%%%%%%%%%%%%%%%%%%%%%%%%%%%%%%%%%%%%%%%%%%%%%%%%%%%
\newpage
\appendix
\onecolumn
\section{Performance of Fully-Connected Networks Trained with Maximal Initial Learning Rate} \label{appendix:performance}
\begin{figure}[h!]
    \centering
    \includegraphics[width=\textwidth]{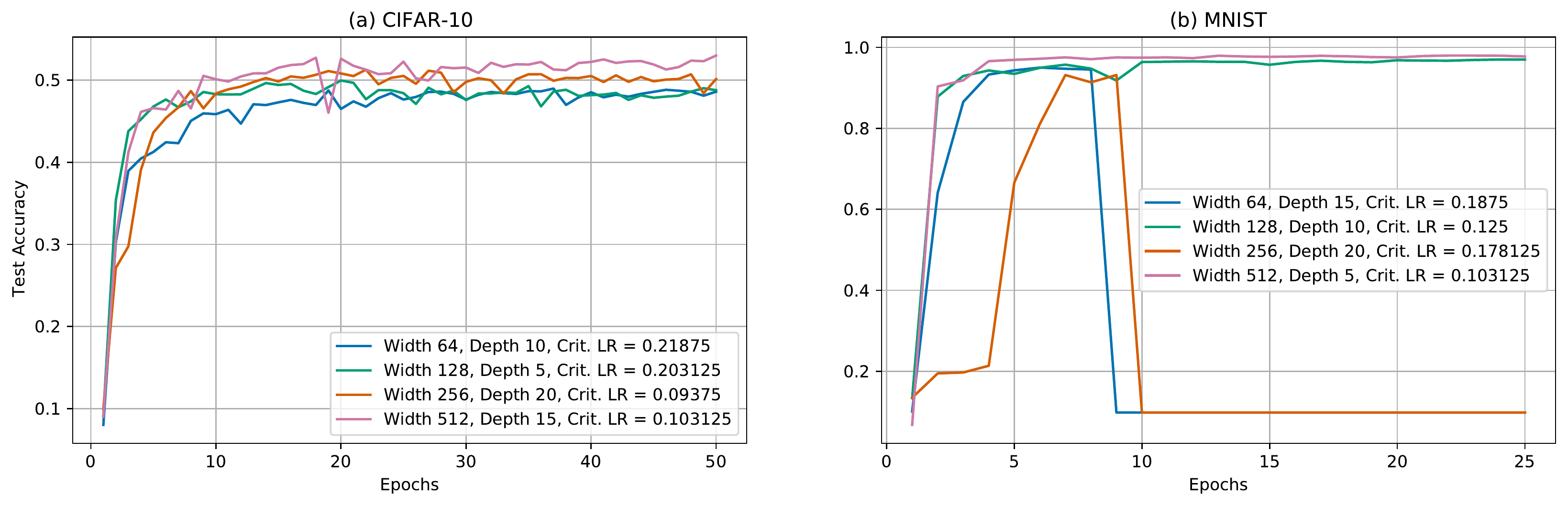}
    \caption{Performance of fully-connected networks when trained at the maximal initial learning rate $\milr$. 
    (a) Validation performance of initializations with different fully-connected network architectures when trained at learning rate $\eta = \milr$ computed by \autoref{alg:MILR} with $t = 0.34$ on CIFAR-10, over 50 training epochs. All initializations achieve $\approx 50\%$ validation accuracy, which is reasonable for fully-connected networks; (b) The same for MNIST with $t = 0.925$, over 25 training epochs. Note that training networks with $\milr$ guarantees that the network reaches the given threshold accuracy, but not long-term training stability. Refer to \S \ref{section:setup} for specifics of the experimental setup.}
    \label{fig:1}
\end{figure}

 Based on \autoref{fig:1}, we make the following observations about the maximal initial learning rate $\milr$:
\begin{enumerate}
    \item Networks train reasonably well when trained at $\milr$ -- the fully connected networks we consider achieve $\approx 50\%$ validation accuracy on CIFAR-10. Although the computed $\milr$ may not be small enough to achieve optimal performance when held constant, we believe they can serve well as large initial learning rates which can later be decayed for further improvement in performance.
    \item However, it must be noted that training at $\eta = \milr$ only guarantees that threshold performance will be achieved, and not long-term training stability. This is particularly easy to notice on ``easy'' datasets, such as MNIST. Fortunately, this can be easily overcome by simply employing early stopping, or a learning rate decay scheme.
\end{enumerate}

\section{Sharpness at Initialization for Different Initialization Schemes}
\label{appendix:sharpness_and_init}

\begin{figure*}[ht]
\centering
\includegraphics[width=\textwidth]{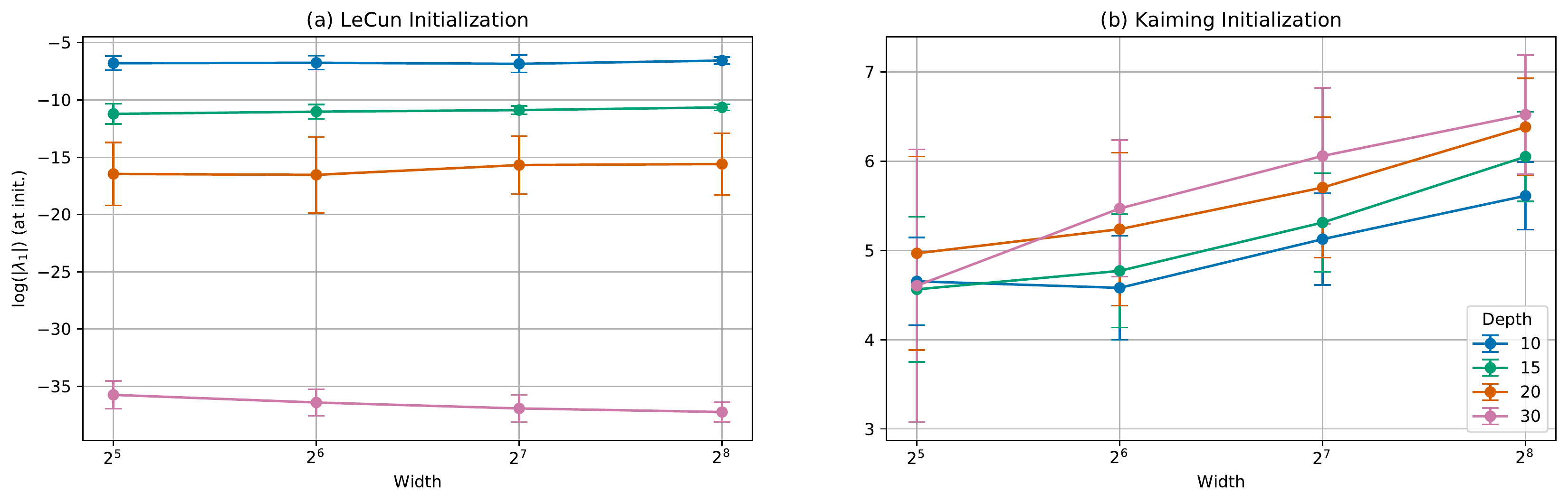}
\caption{Visualization of sharpness $\lambda_{1}$ as a function of width and depth at initialization for (a) LeCun \citep{lecun2012efficient} and (b) Kaiming \citep{He_2015_ICCV}  initialization schemes. We take absolute values and log scaling for $\lambda_{1}$ in (b) for the sake of clear representation - $\lambda_{1}$ values are extremely small in magnitude and can be positive or negative in sign. Width is given on the x-axis, and the different colors indicate different depths. There is a clear difference in how $\lambda_{1}$ scales for the considered initialization schemes -- while it becomes larger as width and depth increase for Kaiming-initialized networks, it becomes smaller with depth for LeCun-initialized networks.}
\label{fig:sharpness_stats}
\end{figure*}

\autoref{fig:sharpness_stats} compares $\lambda_{1}$ at initialization between the Kaiming and LeCun initialization schemes, as a function of changing fully-connected network architecture. Based on this figure, we make the following observations:

\begin{itemize}
    \item For Kaiming-initialized networks, $\lambda_{1}$ scales with both width and depth, and its variance primarily scales with the depth of the network architecture.
    \item LeCun-initialized networks show very little variation in $|\lambda_{1}|$ with a change in width, but considerably more with depth. Furthermore, $|\lambda_{1}|$ decreases exponentially as the depth gets larger, which is opposite to the trend we noticed for Kaiming initializations. It is also interesting to note that $\lambda_{1}$ may be positive or negative for LeCun-initialized networks, whereas Kaiming-initialized networks show only positive $\lambda_{1}$ values.
\end{itemize}

The above observations illustrate that the way in which $\lambda_{1}$ varies and scales with architecture is largely dependent on the initialization scheme employed. While we have the means to estimate the top eigenvalue(s) of the training Hessian, we do not yet \textit{exactly} understand how it is impacted by network architecture, data, and initialization. This is partly because the complete Hessian is difficult to compute and theoretically analyze in large-scale settings. We believe that a better understanding of this quantity could help us understand the role that sharpness plays in the optimization of neural networks.

\section{Maximal Initial Learning Rates with Standard Training Setup}
\label{appendix:milr_uniform_lr}

When all layers of the network are trained at the same learning rate $\milr$, the trend observed in \autoref{fig:const_width_depth_smallinputlr} breaks, and the relationship becomes non-linear at small $(\text{depth} \times \text{width})$ values, especially for networks with small $\text{width} / \text{depth}$. It is also worth noting that the linear relationship is preserved for much larger values of $(\text{depth} \times \text{width})$.

This raises a few questions: What exactly is the influence of the input layer on $\milr$? What is the correct way to initialize it so we see a linear trend between $\milr$ and $(\text{depth} \times \text{width})$? 

\begin{figure*}[htb!]
    \centering
    \includegraphics[width = \textwidth]{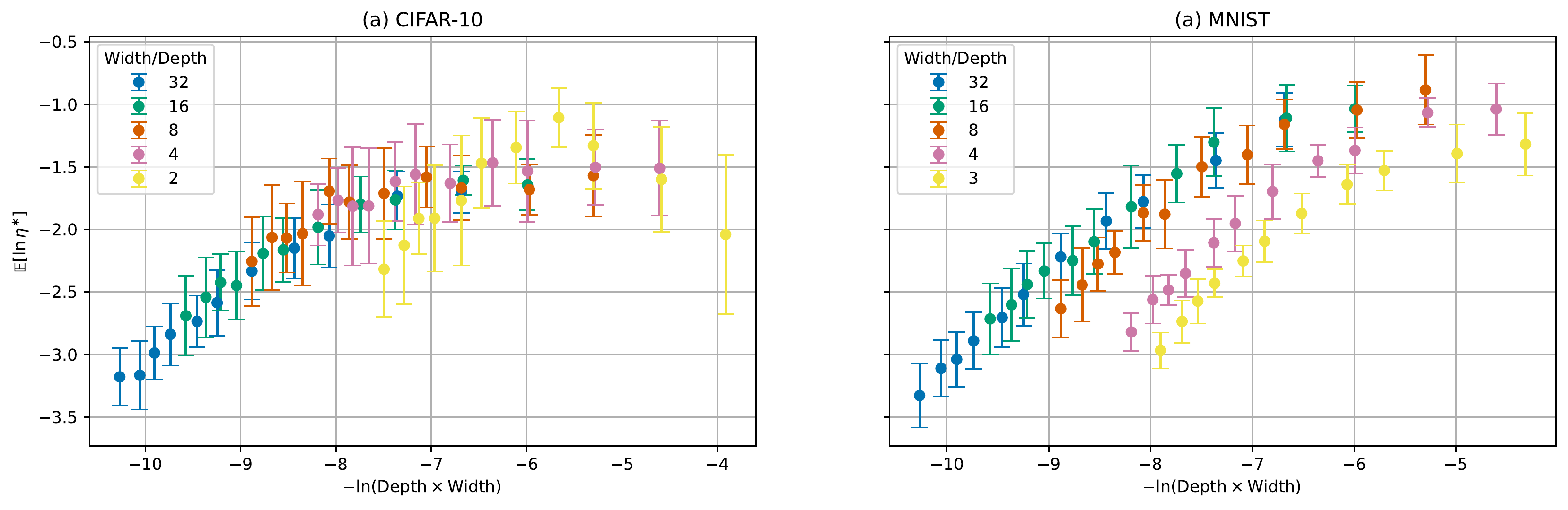}
    \caption{Relationship between the maximal initial learning rate $\milr$ and architecture for fully-connected networks with different $\text{width}  / \text{depth}$ values, trained on CIFAR-10 and MNIST. We use the same network architectures as in \autoref{fig:const_width_depth_smallinputlr}. We observe a consistent power relationship for networks with relatively large widths and small depths. However, this soon becomes non-linear for other, relatively deeper architectures.}
    \label{fig:const_width_depth_standard}
\end{figure*}

\section{Results on Fashion-MNIST}

\begin{figure}[H]
    \centering
    \includegraphics[width = \textwidth]{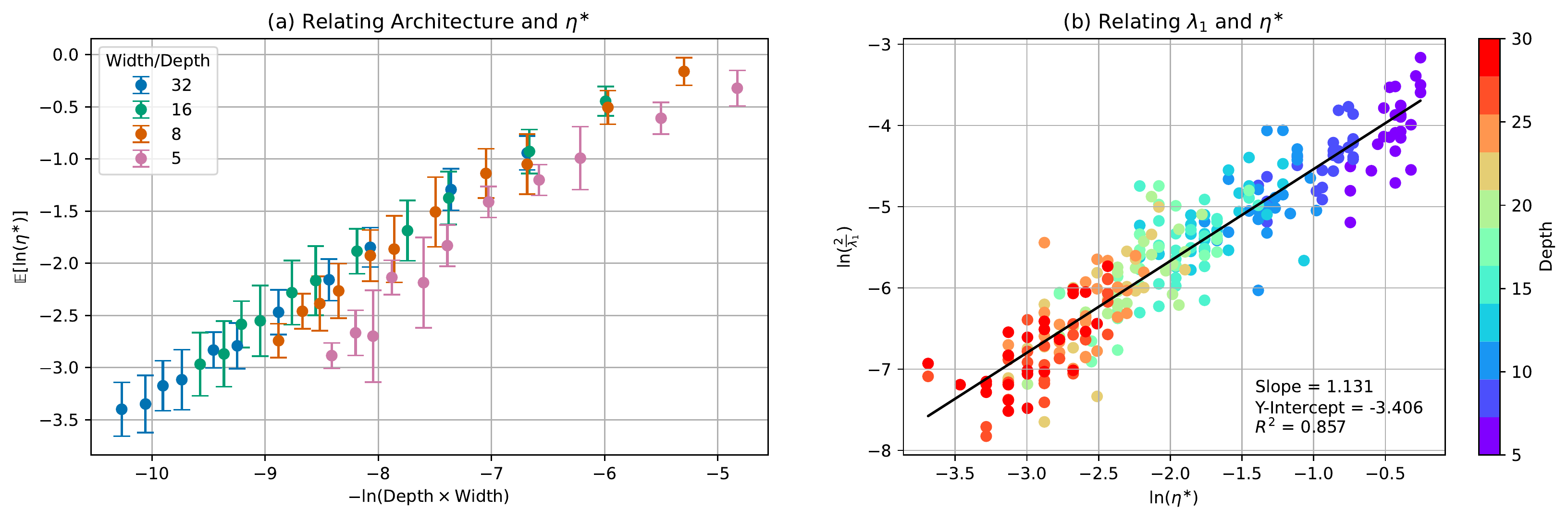}
    \caption{Experimental results for Fashion-MNIST. We obtain a threshold accuracy of 0.84 for Fashion-MNIST. The experimental setup remains identical to those in previous experiments, and the results further confirm the empirical and theoretical results obtained in this work.}
    \label{fig:fashion-mnist}
\end{figure}

\section{Results on Gaussian Data}

\begin{figure}[htb!]
    \centering
    \includegraphics[width=\textwidth]{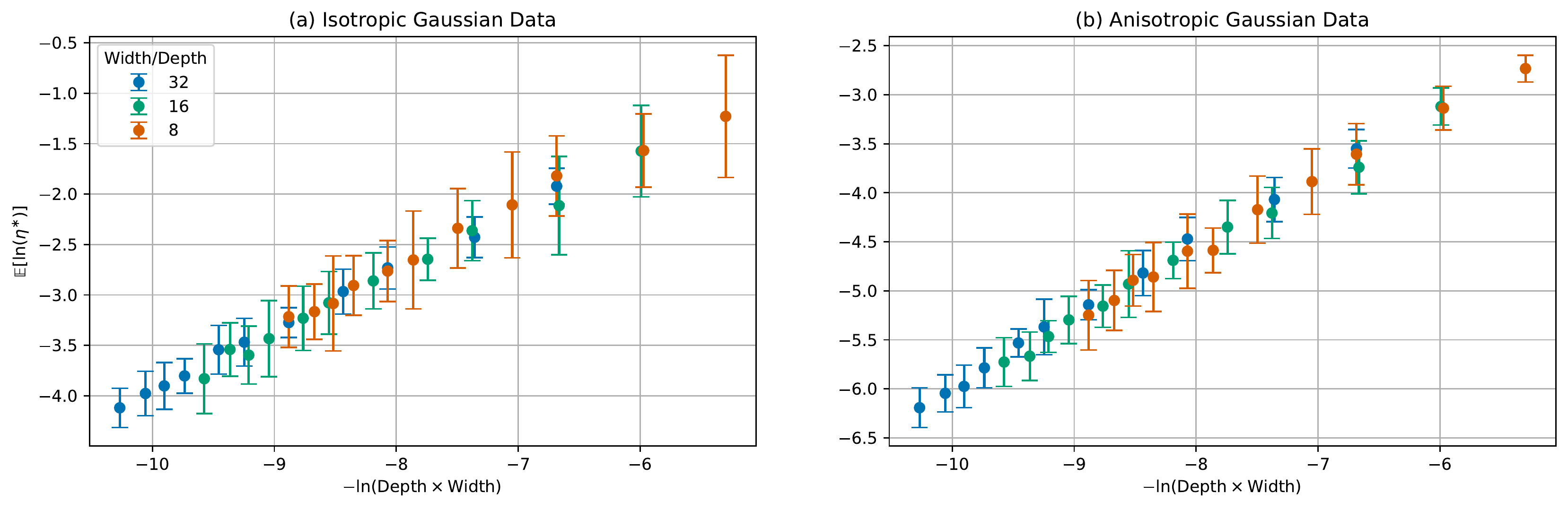}
    \subfloat[\empty]{
        \centering
        \hspace{1cm}
        \begin{tabular}{|c|c|c|}
                \hline
                \textbf{$\text{width} / \text{depth}$} & \textbf{Slope $\alpha$} & \textbf{$R^{2}$}  \\
                \hline \hline
                32 & 0.602 & 0.996 \\
                \hline
                16 & 0.599 & 0.991 \\
                \hline
                8 & 0.583 & 0.99 \\
                \hline
            \end{tabular}
        \hspace{3cm}
        \begin{tabular}{|c|c|c|}
                \hline
                \textbf{$\text{width} / \text{depth}$} & \textbf{Slope $\alpha$} & \textbf{$R^{2}$}  \\
                \hline \hline
                32 & 0.741 & 0.997 \\
                \hline
                16 & 0.714 & 0.995 \\
                \hline
                8 & 0.708 & 0.996 \\
                \hline
            \end{tabular}  
    }
    \caption{Relationship between maximal initial learning rate $\milr$ and architecture for (a) isotropic and (b) anisotropic Gaussian datasets. We use the same values for depth as in \autoref{fig:const_width_depth_smallinputlr}. Data is sampled from two multivariate normal distributions (i.e. binary classification). The training set and validation set respectively consist of 9k and 1k samples from each distribution, leading to a total of 20k samples (with 18k samples in the training set, and 2k in the validation set). Each sample is 100-dimensional, and the means are sampled from a standard normal distribution. For the anisotropic Gaussian dataset, we sample 100-dimensional covariance matrices from a standard normal distribution as well. For the isotropic and anisotropic Gaussian datasets, we obtain threshold accuracies of 1.0 and 0.81 respectively.}
    \label{fig:gaussian_data}
\end{figure}

We note that for the isotropic Gaussian data, the slope values are significantly smaller than those observed in other experiments. This emphasizes that even in the relatively simple case of fully connected ReLU networks, we do not have a theoretical explanation of the empirical scaling laws for the maximal initial learning rate as a function of architecture. Through these experiments, we hope to understand these simple situations before analyzing more complex cases.

\section{Comparison of Averaged Sharpness $\lambda_{1}$ and Maximal Initial Learning Rate {$\milr$}} \label{appendix: averaged_comparison}

\begin{figure}[H]
    \centering
    \includegraphics[width = \textwidth]{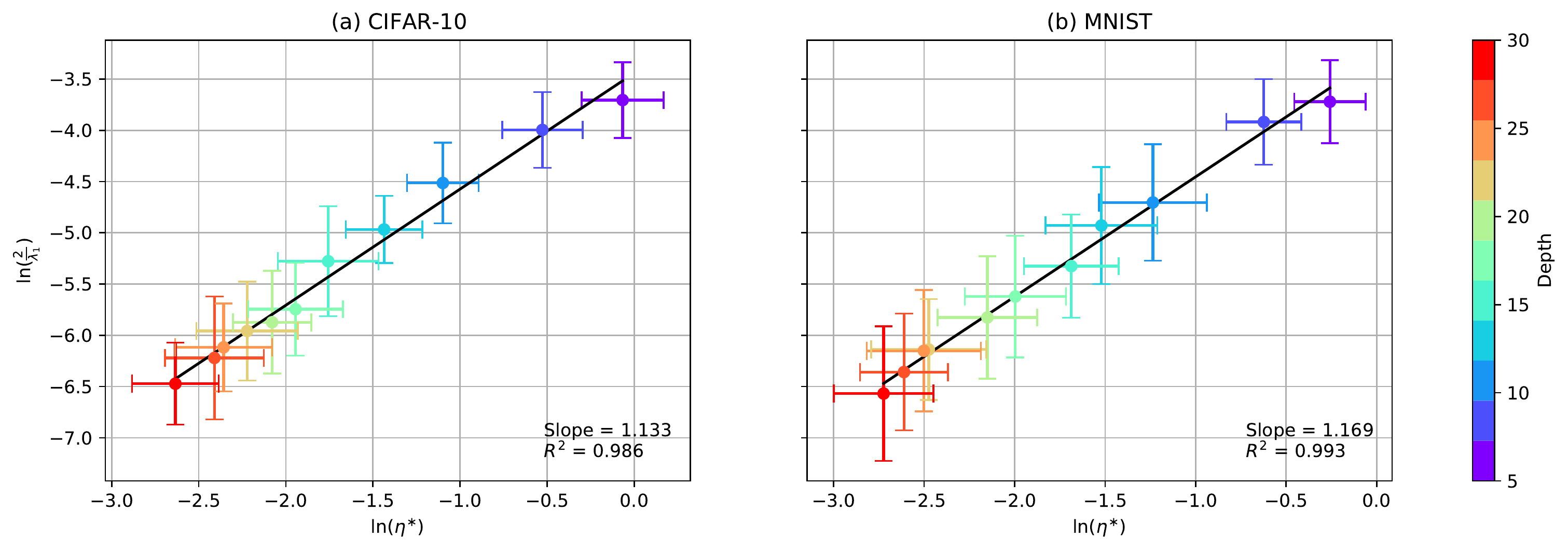}
    \caption{Correlation between $2/\lambda_1$ and $\milr$, averaged over 25 initializations per architecture. Each architecture has a sufficiently large $\text{width}  / \text{depth} = 16$, to preserve the established power relationship.}
    \label{fig:sharpness_vs_critlr_averaged}
\end{figure}

\section{Performance of ResNet-20 Networks Trained with Maximal Initial Learning Rate}
\label{appendix:resnet_performance}

\begin{figure}[htbp!]
    \centering
    \includegraphics[width=0.7\textwidth]{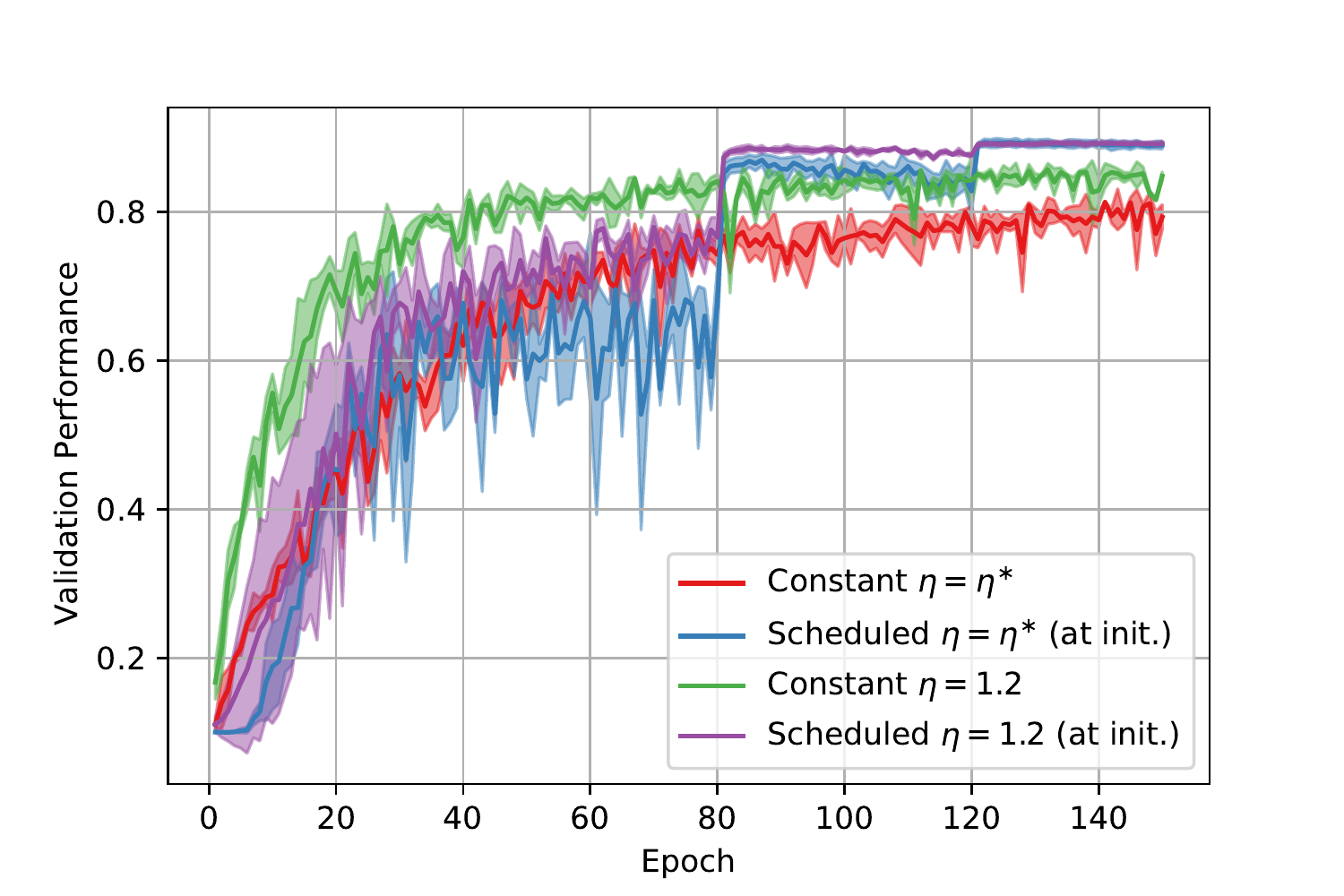}
    \caption{Performance of ResNet-20 \cite{He_2016_CVPR} networks with different learning rate setups. Each line in the figure is an average of 3 runs, along with error bars to indicate deviation in performance. A well-tuned, constant learning rate consistently beats $\milr$, but performance is competitive when using a scheduler is employed. Refer to \href{https://docs.mosaicml.com/en/v0.11.0/model_cards/cifar_resnet.html}{MosaicML's Model Card} for details of the learning rate scheduler setup.}
    \label{fig:milr_resnet_perf}
\end{figure}

\section{Proof of Theorem \ref{thm:HS}}\label{sec:pf}

\subsection{Setup and Preparatory Lemmas} Let us first recall the notation. We consider a ReLU network, which for an input $x\in \R^{n_0}$ computes $z_{1}^{(L+1)}\in \R$ via intermediate representations $z^{(\ell)}\in \R^{n_\ell}
$
\[
z_{i}^{(\ell+1)}=\begin{cases}
\sum_{j=1}^{n_\ell}W_{ij}^{(\ell)}\sigma\lrr{z_{j}^{(\ell)}},&\quad \ell\geq 1\\
\sum_{j=1}^{n_\ell}W_{ij}^{(1)}x_{j},&\quad \ell=0
\end{cases},\qquad i = 1,\ldots,n_\ell.
\]
Moreover, we will assume that the weights are independent Gaussians
\[
W_{ij}^{(\ell)}\sim \mathcal N\lrr{0,\frac{2}{n_{\ell-1}}}\quad \text{independent}.
\]
Instead of simply considering the loss Hessian as in the statement of Theorem \ref{thm:HS}, we will study a slightly more general effective Hessian
\[
H_{\text{eff}}:= \lrr{\widehat{\eta}_\mu \widehat{\eta}_\nu \partial_{\mu\nu}\left\{\frac{1}{2}\lrr{z_{1}^{(L+1)}-y}^2\right\}}_{\mu,\nu},
\]
where $\mu,\nu$ run over all network weights and for any weight $\mu= W_{ij}^{(\ell)}$ we write 
\[
\widehat{\eta}_{W_{ij}^{(\ell)}} =n_{\ell-1}^{-1/2}  \eta^{(\ell)}
\]
for the corresponding learning rates. We've introduced the rescaled learning rates $\eta^{(\ell)}$ for weights in layer $\ell$ for notational convenience in what follows. Our goal is to compute the mean of the Hilbert-Schmidt norm
\begin{equation}\label{eq:HS-def}
    \Ee{\norm{H_{\text{eff}}}_{HS}^2} = \Ee{\sum_{\mu,\nu\leq L+1}\lrr{\widehat{\eta}_\mu\widehat{\eta}_\nu \partial_\mu \partial_{\nu}\left\{\frac{1}{2}\lrr{z_{1}^{(L+1)}-y}^2\right\}}^2},
\end{equation}
where we remind the reader that for any $\ell$ the notation $\mu\leq \ell$ means the set of all weights in layers $1,\ldots, \ell.$ In order to effectively evaluate \eqref{eq:HS-def}, we need two preparatory results. The first is well-known and can be found in Theorem 3 of \citet{hanin2018neural} and Proposition 2 of \citet{hanin2018products}
\begin{lemma}\label{lem:ind}
The indicator random variables ${\bf 1}_{\set{z_{i}^{(\ell)} > 0}}$ are independent of any even function of the network weights and of each other. Their marginal distribution is Bernoulli $1/2$.
\end{lemma}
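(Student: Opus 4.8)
\emph{Proof plan.}
The statement is classical; I would prove it by a layer-by-layer conditioning argument exploiting the sign symmetry of centered Gaussians. There is no usable global symmetry of the network map here, because ReLU is not an odd function, so a naive sign-flip cannot be propagated.

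First I would condition on $\mathcal F_{\ell-1}$, the $\sigma$-algebra generated by all the weights needed to compute $z^{(\ell-1)}$. A one-line induction (starting from $x\neq 0$) shows that $a:=\sigma(z^{(\ell-1)})\neq 0$ almost surely, and conditionally on $\mathcal F_{\ell-1}$ the pre-activation $z^{(\ell)}$ equals $\widehat W a$, where $\widehat W$ is the (independent, Gaussian, independent-rowed) weight matrix mapping $\sigma(z^{(\ell-1)})$ to $z^{(\ell)}$; thus, conditionally on $\mathcal F_{\ell-1}$, $z^{(\ell)}$ is a centered Gaussian vector with independent coordinates. Two elementary facts follow: the sign pattern $\big(\mathbf 1_{\{z_i^{(\ell)}>0\}}\big)_i$ is uniform on $\{0,1\}^{n_\ell}$, and it is independent both of the magnitude vector $\big(|z_i^{(\ell)}|\big)_i$ and of the component $R^{(\ell)}$ of $\widehat W$ orthogonal to $a$ (in $\widehat W=\|a\|^{-2}z^{(\ell)}a^{\top}+R^{(\ell)}$ the two summands are independent Gaussian blocks). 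These are just the statements that the coordinate signs of a centered Gaussian vector with independent coordinates are i.i.d.\ $\mathrm{Bernoulli}(1/2)$ and independent of the coordinate magnitudes.

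To pass from one layer to the next I would use the identity $\sigma(z^{(\ell)})=\mathrm{diag}\big(\mathbf 1_{\{z_i^{(\ell)}>0\}}\big)\,|z^{(\ell)}|$: the forward pass beyond layer $\ell$ depends on $z^{(\ell)}$ only through its sign pattern and its magnitudes. Unrolling, $\mathcal F_\ell$ is generated by the sign patterns, magnitude vectors, and orthogonal blocks of layers $1,\dots,\ell$, so applying the previous paragraph at layer $\ell+1$ shows that, conditionally on all of those, the sign pattern at layer $\ell+1$ is again uniform and independent of its own magnitude vector and orthogonal block. Telescoping over $\ell=1,\dots,L$ yields that the sign patterns of all layers are mutually independent and each uniform --- hence the indicators are i.i.d.\ $\mathrm{Bernoulli}(1/2)$ and independent of one another --- and jointly independent of the remaining randomness, namely all the magnitude vectors and all the orthogonal weight blocks. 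Since every weight matrix is a deterministic function of the sign patterns together with this remaining randomness, this is the precise sense in which the indicators are independent of even functions of the weights; alternatively one may simply invoke Theorem~3 of \citet{hanin2018neural} or Proposition~2 of \citet{hanin2018products}.

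I expect the propagation step to be the only real obstacle. Because ReLU is not odd, a neuron's sign cannot be flipped and carried forward through the network, so the argument cannot be a single global symmetrization; one must instead separate, at every layer, the ``sign'' degrees of freedom from the ``magnitude plus residual weight'' degrees of freedom, and check that conditioning on all the earlier ones still leaves the current pre-activation a centered Gaussian with independent coordinates. Keeping track of which $\sigma$-algebra is conditioned on at each step, and verifying the almost-sure non-degeneracy that makes each coordinate sign well defined and uniform, is where the care is needed; the remainder is bookkeeping.
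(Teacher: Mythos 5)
The paper does not actually prove Lemma \ref{lem:ind}: it is quoted as well known from Theorem 3 of \citet{hanin2018neural} and Proposition 2 of \citet{hanin2018products}, so your fallback of ``simply invoke the cited results'' coincides with the paper's own treatment. Your self-contained sketch of the first two assertions is the standard route and is fine in outline: conditionally on $z^{(\ell-1)}$ (with $\sigma(z^{(\ell-1)})\neq 0$), the coordinates of $z^{(\ell)}$ are independent centered Gaussians, so their signs are i.i.d.\ fair coins, and telescoping over layers gives mutual independence of the indicators.

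However, two steps are genuinely flawed. First, the part of the lemma that the paper actually uses (conditional independence of the indicator from \emph{even} functions of the weights, invoked in Corollary \ref{cor:sig-z}) is not established by your argument: you never use evenness anywhere. The intermediate claim that the sign patterns are jointly independent of ``all the magnitude vectors and all the orthogonal weight blocks'' is false --- conditionally on layer $1$, each $|z^{(2)}_j|$ is a centered Gaussian magnitude with variance $\tfrac{2}{n_1}\norm{\sigma(z^{(1)})}^2=\tfrac{2}{n_1}\sum_i \mathbf 1_{\set{z^{(1)}_i>0}}\,|z^{(1)}_i|^2$, which depends on the layer-$1$ sign pattern, so the layer-$2$ magnitudes are correlated with the layer-$1$ signs. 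Moreover, even granting such a decomposition, the closing inference ``the weights are a deterministic function of the signs plus the remaining randomness, hence the indicators are independent of even functions of the weights'' points in the wrong direction: a generic function of the weights \emph{does} depend on the signs. What is needed is the symmetrization that the cited references use: conditionally on $z^{(\ell-1)}$, flipping the sign of the $i$-th row of $W^{(\ell)}$ preserves the conditional law of the weights, flips $\mathbf 1_{\set{z^{(\ell)}_i>0}}$ (a.s., when $z^{(\ell)}_i\neq 0$), and fixes every even function; this yields $\Ee{\mathbf 1_{\set{z^{(\ell)}_i>0}}\,g(f)}=\tfrac12\Ee{g(f)}$ in one line. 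Second, your ``one-line induction'' that $\sigma(z^{(\ell-1)})\neq 0$ almost surely is incorrect: with no biases, an entire layer dies with conditional probability $2^{-n_{\ell-1}}>0$, and on that event all deeper pre-activations vanish and the indicators are degenerate. This must be conditioned away (as in the paper's actual usage, which conditions on $z^{(\ell-1)}$) or absorbed into exponentially small corrections; it is a caveat to the unconditional reading of the lemma itself, but your proof explicitly leans on the false almost-sure statement, so it needs to be repaired rather than asserted.
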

The second result we need is a simple corollary of Lemma \ref{lem:ind}. To state it, we need some notation. For any $\ell=1,\ldots, L$ and any expressions $f_k(z)$ depending on $z$ and $\partial_\mu z$ we write
\[
    \Y{\ell}{f_1,\ldots, f_k} :=\Ee{ \sum_{\mu\leq \ell} \lrr{\widehat{\eta}_\mu}^2 \frac{1}{n_\ell^k}\sum_{j_1,\ldots, j_k=1}^{n_\ell} f_{1}(z_{j_1}^{(\ell)})\cdots f_k(z_{j_k}^{(\ell)})}.
\]
Thus, for example
\[
\Y{\ell}{z\partial_\mu z} = \Ee{ \sum_{\mu\leq \ell} \lrr{\widehat{\eta}_\mu}^2 \frac{1}{n_\ell}\sum_{j=1}^{n_\ell} z_j^{(\ell)}\partial_\mu z_{j}^{(\ell)}}.
\]
Similarly, if the functions $f_k(z)$ depend in addition on $\partial_\nu z$ and $\partial_{\mu\nu}z$ then we we will write
\[
    \Y{\ell}{f_1,\ldots, f_k} :=\Ee{ \sum_{\mu,\nu\leq \ell} \lrr{\widehat{\eta}_\mu\widehat{\eta}_\nu}^2 \frac{1}{n_\ell^k}\sum_{j_1,\ldots, j_k=1}^{n_\ell} f_{1}(z_{j_1}^{(\ell)})\cdots f_k(z_{j_k}^{(\ell)})}.
\]
Thus, for example
\[
\Y{\ell}{\lrr{\partial_\mu z}^2,z\partial_{\mu\nu}z} = \Ee{ \sum_{\mu\leq \ell} \lrr{\widehat{\eta}_\mu\widehat{\eta}_\nu}^2 \frac{1}{n_\ell^2}\sum_{j_1,j_2=1}^{n_\ell}\lrr{ \partial_\mu z_{j_1}^{(\ell)}}^2 z_{j_2}^{(\ell)}\partial_{\mu\nu}z_{j_2}^{(\ell)}}.
\]

We will use repeatedly the following Corollary of Lemma \ref{lem:ind}:
\begin{corollary}\label{cor:sig-z}
Fix $k\geq 1$ and suppose that 
\[
f_j(z) = \sigma(z)^{a_j}\lrr{\partial_{\mu}\sigma(z)}^{b_j}\lrr{\partial_{\nu}\sigma(z)}^{c_j}\lrr{\partial_{\mu\nu}\sigma(z)}^{d_j},\qquad j=1,\ldots,k
\] 
with $a_j+b_j+c_j+d_j$ being even for every $j$. Write
\[
\widehat{f}_j(z) := z^{a_j}\lrr{\partial_{\mu}z}^{b_j}\lrr{\partial_{\nu}z}^{c_j}\lrr{\partial_{\mu\nu}z}^{d_j},\qquad j=1,\ldots, k.
\]
Then, 
\begin{equation}\label{eq:Yf-1}
\Y{\ell}{f_1} = \frac{1}{2}\Y{\ell}{\widehat{f}_1},\qquad \widehat{f}_1(z) := z^{a}\lrr{\partial_{\mu}z}^{b}.    
\end{equation}
Further, 
\[
\Y{\ell}{f_1,f_2} = \frac{1}{4}\left[\Y{\ell}{\widehat{f}_1, \widehat{f}_2}+\frac{1}{n_\ell}\Y{\ell}{\widehat{f}_1\cdot \widehat{f}_2}\right].
\]
\end{corollary}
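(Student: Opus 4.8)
The plan is to derive Corollary \ref{cor:sig-z} directly from Lemma \ref{lem:ind} by expanding the $Y$-quantities in terms of the pre-activations $z^{(\ell)}_{j}$ and the ReLU indicators $\1_{\set{z^{(\ell)}_j>0}}$. The key observation is that for ReLU, $\sigma(z) = z\cdot\1_{\set{z>0}}$, and since $\sigma'(z)=\1_{\set{z>0}}$ wherever it is defined, differentiating through the network gives $\partial_\mu \sigma(z^{(\ell)}_j) = \1_{\set{z^{(\ell)}_j>0}}\,\partial_\mu z^{(\ell)}_j$, and likewise $\partial_\nu \sigma(z^{(\ell)}_j) = \1_{\set{z^{(\ell)}_j>0}}\,\partial_\nu z^{(\ell)}_j$ and $\partial_{\mu\nu}\sigma(z^{(\ell)}_j) = \1_{\set{z^{(\ell)}_j>0}}\,\partial_{\mu\nu} z^{(\ell)}_j$ (the second derivative of $\sigma$ vanishes almost surely, so no extra terms appear). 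Consequently each $f_j(z^{(\ell)}_{j_i})$ equals $\widehat f_j(z^{(\ell)}_{j_i})$ multiplied by a power $\1_{\set{z^{(\ell)}_{j_i}>0}}^{a_j+b_j+c_j+d_j}$, and because the indicator is idempotent, this equals $\1_{\set{z^{(\ell)}_{j_i}>0}}$ whenever the exponent $a_j+b_j+c_j+d_j$ is a positive even integer (and $1$ when the exponent is $0$, which we can treat separately or fold in).

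First I would handle the single-function case \eqref{eq:Yf-1}. Writing $f_1(z^{(\ell)}_{j}) = \1_{\set{z^{(\ell)}_j>0}}\widehat f_1(z^{(\ell)}_j)$ inside the definition of $\Y{\ell}{f_1}$, I would invoke Lemma \ref{lem:ind}: since $\widehat f_1(z^{(\ell)}_j)$, the learning-rate factors $\widehat\eta_\mu$, and the sum over $\mu\le\ell$ all depend on the weights only through even functions (the pre-activations $z^{(\ell)}$ and their derivatives enter in combinations making the total parity even, by hypothesis on $a_j+b_j+c_j+d_j$), the indicator $\1_{\set{z^{(\ell)}_j>0}}$ is independent of everything else and has mean $1/2$. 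Taking the expectation factorizes and yields the factor $\tfrac12$, giving $\Y{\ell}{f_1}=\tfrac12\Y{\ell}{\widehat f_1}$. One subtlety worth a sentence: one must verify the claimed form $\widehat f_1(z)=z^a(\partial_\mu z)^b$ in the statement is consistent with having also $\partial_\nu$ and $\partial_{\mu\nu}$ factors present; I read this as the case where $c_1=d_1=0$ in the single-variable setting, and the general bookkeeping is the same.

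For the two-function case, I would write $f_1(z^{(\ell)}_{j_1})f_2(z^{(\ell)}_{j_2}) = \1_{\set{z^{(\ell)}_{j_1}>0}}\1_{\set{z^{(\ell)}_{j_2}>0}}\widehat f_1(z^{(\ell)}_{j_1})\widehat f_2(z^{(\ell)}_{j_2})$ and split the double sum $\sum_{j_1,j_2}$ into the diagonal $j_1=j_2$ and the off-diagonal $j_1\ne j_2$. Off-diagonal, the two indicators are distinct, hence (by Lemma \ref{lem:ind}) independent of each other and of the even remainder, contributing a factor $\tfrac14$; on the diagonal, idempotency collapses the two indicators into one, contributing a factor $\tfrac12$, and the remaining sum is exactly $\sum_j \widehat f_1(z^{(\ell)}_j)\widehat f_2(z^{(\ell)}_j)$, which after accounting for the $1/n_\ell^k$ normalization is the $\tfrac{1}{n_\ell}\Y{\ell}{\widehat f_1\cdot\widehat f_2}$ term. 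Recombining: $\Y{\ell}{f_1,f_2} = \tfrac14\big(\text{full double sum}\big) + (\tfrac12 - \tfrac14)\big(\text{diagonal}\big) = \tfrac14\Y{\ell}{\widehat f_1,\widehat f_2} + \tfrac14\cdot\tfrac1{n_\ell}\Y{\ell}{\widehat f_1\cdot\widehat f_2}$, matching the claim.

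The main obstacle is not any single computation but rather making the application of Lemma \ref{lem:ind} airtight: one must check that, after stripping off the relevant indicator(s), the residual expression $\widehat f_j(z^{(\ell)})$ together with the summation range $\mu\le\ell$ and the factors $(\widehat\eta_\mu)^2$ (or $(\widehat\eta_\mu\widehat\eta_\nu)^2$) really is an \emph{even} function of the network weights, so that the independence hypothesis of Lemma \ref{lem:ind} applies — this is where the parity assumption $a_j+b_j+c_j+d_j$ even is used, since each derivative $\partial_\mu z$ carries one factor of a weight-sign indicator further up the chain, and one needs the total count at layer $\ell$ to be even. I would state this parity check explicitly (perhaps as a one-line sub-lemma, or by induction on $\ell$ tracking how derivatives propagate) before concluding, since it is the linchpin that licenses pulling the $\1_{\set{z^{(\ell)}_j>0}}$ factors out of the expectation.
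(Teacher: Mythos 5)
Your proposal is correct and follows essentially the same route as the paper's own proof: writing $\sigma(z)=z\,\mathbf{1}_{\{z>0\}}$ (and its derivatives) so that each $f_j$ becomes $\widehat f_j$ times an indicator, invoking Lemma~\ref{lem:ind} to integrate the indicators out with a factor $1/2$ each, and for $k=2$ splitting the double sum into diagonal and off-diagonal pieces before recombining into the $\frac{1}{4}\left[\Y{\ell}{\widehat f_1,\widehat f_2}+\frac{1}{n_\ell}\Y{\ell}{\widehat f_1\cdot\widehat f_2}\right]$ form. The evenness check you flag is handled in the paper exactly as you anticipate --- by conditioning on $z^{(\ell-1)}$ and observing that the stripped expression is an even function of the layer-$\ell$ weights --- so your argument matches the paper's in both structure and detail.
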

\begin{proof}
When $k=1$, we have
\begin{align*}
    \Y{\ell}{f_1} &=\Ee{\sum_{\mu,\nu \leq \ell}\lrr{\widehat{\eta}_\mu \widehat{\eta}_\nu}^2 \frac{1}{n_\ell}\sum_{j=1}^{n_\ell} \lrr{\sigma(z_j^{(\ell)})}^{a_j}\lrr{\partial_{\mu}\sigma(z_j^{(\ell)})}^{b_j}\lrr{\partial_{\nu}\sigma(z_j^{(\ell)})}^{c_j}\lrr{\partial_{\mu\nu}\sigma(z_j^{(\ell)})}^{d_j}}\\
    &=\Ee{\sum_{\mu,\nu \leq \ell}\lrr{\widehat{\eta}_\mu \widehat{\eta}_\nu}^2 \frac{1}{n_\ell}\sum_{j=1}^{n_\ell} \lrr{z_j^{(\ell)}}^{a_j}\lrr{\partial_{\mu}z_j^{(\ell)}}^{b_j}\lrr{\partial_{\nu}z_j^{(\ell)}}^{c_j}\lrr{\partial_{\mu\nu}z_j^{(\ell)}}^{d_j}{\bf 1}_{\set{z_j^{(\ell)}\geq 0}}}\\
    &=\Ee{\Ee{\sum_{\mu,\nu \leq \ell}\lrr{\widehat{\eta}_\mu \widehat{\eta}_\nu}^2 \frac{1}{n_\ell}\sum_{j=1}^{n_\ell} \lrr{z_j^{(\ell)}}^{a_j}\lrr{\partial_{\mu}z_j^{(\ell)}}^{b_j}\lrr{\partial_{\nu}z_j^{(\ell)}}^{c_j}\lrr{\partial_{\mu\nu}z_j^{(\ell)}}^{d_j}{\bf 1}_{\set{z_j^{(\ell)}\geq 0}}~\bigg|~z^{(\ell-1)}}}.
\end{align*}
In the inner conditional expectation, the term $\lrr{z_j^{(\ell)}}^{a_j}\lrr{\partial_{\mu}z_j^{(\ell)}}^{b_j}\lrr{\partial_{\nu}z_j^{(\ell)}}^{c_j}\lrr{\partial_{\mu\nu}z_j^{(\ell)}}^{d_j}{\bf 1}_{\set{z_j^{(\ell)}\geq 0}}$ is an even function of the weights in layer $\ell$. Hence, by Lemma \ref{lem:ind}, it is independent of the indicator function. This yields  \eqref{eq:Yf-1}. Similarly, we have
\begin{align*}
    \Y{\ell}{f_1,f_2} &=\Ee{\sum_{\mu,\nu \leq \ell}\lrr{\widehat{\eta}_\mu \widehat{\eta}_\nu}^2 \frac{1}{n_\ell^2}\sum_{k_1,k_2=1}^{n_\ell} \prod_{j=1}^2 \lrr{\sigma(z_{k_j}^{(\ell)})}^{a_j}\lrr{\partial_{\mu}\sigma(z_{k_j}^{(\ell)})}^{b_j}\lrr{\partial_{\nu}\sigma(z_{k_j}^{(\ell)})}^{c_j}\lrr{\partial_{\mu\nu}\sigma(z_{k_j}^{(\ell)})}^{d_j}}\\
    &=\Ee{\sum_{\mu,\nu \leq \ell}\lrr{\widehat{\eta}_\mu \widehat{\eta}_\nu}^2 \frac{1}{n_\ell^2}\sum_{k_1,k_2=1}^{n_\ell} \prod_{j=1}^2 \lrr{z_{k_j}^{(\ell)}}^{a_j}\lrr{\partial_{\mu}z_{k_j}^{(\ell)}}^{b_j}\lrr{\partial_{\nu}z_{k_j}^{(\ell)}}^{c_j}\lrr{\partial_{\mu\nu}z_{k_j}^{(\ell)}}^{d_j} {\bf 1}_{\set{z_{k_1}^{(\ell)}\geq 0}}{\bf 1}_{\set{z_{k_2}^{(\ell)}\geq 0}}}\\
    &= \frac{1}{n_\ell} \Ee{\sum_{\mu,\nu \leq \ell}\lrr{\widehat{\eta}_\mu \widehat{\eta}_\nu}^2 \lrr{z_{1}^{(\ell)}}^{a_1+a_2}\lrr{\partial_{\mu}z_{1}^{(\ell)}}^{b_1+b_2}\lrr{\partial_{\nu}z_{1}^{(\ell)}}^{c_1+c_2}\lrr{\partial_{\mu\nu}z_{1}^{(\ell)}}^{d_1+d_2} {\bf 1}_{\set{z_{j_1}^{(\ell)}\geq 0}}}\\
    &+\lrr{1-\frac{1}{n_\ell}} \Ee{\sum_{\mu,\nu \leq \ell}\lrr{\widehat{\eta}_\mu \widehat{\eta}_\nu}^2 \prod_{j=1}^2 \lrr{z_{j}^{(\ell)}}^{a_j}\lrr{\partial_{\mu}z_{j}^{(\ell)}}^{b_j}\lrr{\partial_{\nu}z_{j}^{(\ell)}}^{c_j}\lrr{\partial_{\mu\nu}z_{j}^{(\ell)}}^{d_j} {\bf 1}_{\set{z_{j}^{(\ell)}\geq 0}}},
\end{align*}
where the last equality follows by symmetry. Again conditioning on $z^{(\ell-1)}$ we thus find
\begin{align*}
    \Y{\ell}{f_1,f_2} 
    &= \frac{1}{2n_\ell} \Ee{\sum_{\mu,\nu \leq \ell}\lrr{\widehat{\eta}_\mu \widehat{\eta}_\nu}^2 \lrr{z_{1}^{(\ell)}}^{a_1+a_2}\lrr{\partial_{\mu}z_{1}^{(\ell)}}^{b_1+b_2}\lrr{\partial_{\nu}z_{1}^{(\ell)}}^{c_1+c_2}\lrr{\partial_{\mu\nu}z_{1}^{(\ell)}}^{d_1+d_2}}\\
    &+\frac{1}{4}\lrr{1-\frac{1}{n_\ell}} \Ee{\sum_{\mu,\nu \leq \ell}\lrr{\widehat{\eta}_\mu \widehat{\eta}_\nu}^2 \prod_{j=1}^2 \lrr{z_{j}^{(\ell)}}^{a_j}\lrr{\partial_{\mu}z_{j}^{(\ell)}}^{b_j}\lrr{\partial_{\nu}z_{j}^{(\ell)}}^{c_j}\lrr{\partial_{\mu\nu}z_{j}^{(\ell)}}^{d_j} }.
\end{align*}
Running the above symmetry argument in reverse yields
\[
 \Y{\ell}{f_1,f_2}  =\frac{1}{4}\left[ \Y{\ell}{\widehat{f}_1,\widehat{f}_2} +\frac{1}{n_\ell}\Y{\ell}{\widehat{f}_1\cdot \widehat{f}_2} \right],
 \]
 as claimed. 
\end{proof}
In what follows we will use Lemma \ref{lem:ind} and Corollary \ref{cor:sig-z} without mention.

\subsection{Reducing $\Ee{\norm{H_{eff}}_{F}^2}$ to $Y^{(\ell)}$'s}
To make progress on evaluating the expression \eqref{eq:HS-def}, let us first write
\[
\partial_\mu \partial_{\nu}\left\{\frac{1}{2}\lrr{z_{1}^{(L+1)}-y}^2\right\} = \partial_{\mu} \lrr{\partial_\nu z_{1}^{(L+1)}\lrr{z_{1}^{(L+1)}-y}} = \partial_{\mu\nu} z_{1}^{(L+1)}\lrr{z_{1}^{(L+1)}-y} + \partial_\mu z_{1}^{(L+1)} \partial_\nu z_{1}^{(L+1)}.
\]
Hence, 
\begin{align*}
    \lrr{\partial_\mu \partial_{\nu}\left\{\frac{1}{2}\lrr{z_{1}^{(L+1)}-y}^2\right\} }^2&= \lrr{\partial_{\mu\nu} z_{1}^{(L+1)}\lrr{z_{1}^{(L+1)}-y} }^2 + 2\partial_{\mu\nu} z_{1}^{(L+1)}\partial_\mu z_{1}^{(L+1)} \partial_\nu z_{1}^{(L+1)}\lrr{z_{1}^{(L+1)}-y}\\
    &+ \lrr{\partial_\mu z_{1}^{(L+1)} \partial_\nu z_{1}^{(L+1)}}^2.
\end{align*}
Using that $y$ has mean $0$ and variance $1$ as well as the fact that any term with an odd number of $z_{1}^{(L+1)}$'s has zero mean shows that
\begin{align}
\label{eq:d2L}     \Ee{\lrr{\partial_\mu \partial_{\nu}\left\{\frac{1}{2}\lrr{z_{1}^{(L+1)}-y}^2\right\} }^2}&= \Ee{\lrr{\partial_{\mu\nu} z_{1}^{(L+1)}z_{1}^{(L+1)}}^2}+ \Ee{\lrr{\partial_{\mu\nu} z_{1}^{(L+1)}}^2}\\
\notag     &+2\Ee{\partial_{\mu\nu} z_{1}^{(L+1)}\partial_\mu z_{1}^{(L+1)} \partial_\nu z_{1}^{(L+1)}z_{1}^{(L+1)}}+\Ee{\lrr{\partial_\mu z_{1}^{(L+1)} \partial_\nu z_{1}^{(L+1)}}^2}.
\end{align}
Our goal is to evaluate the sums of such terms over $\mu,\nu$ recursively in $L$. We do this by first integrating out the weights in the last layer to reduce computing the expected squared Hilbert-Schmidt norm of the loss hessian to various $Y$'s.

\begin{lemma}\label{lem:H-rec}
    We have
    \begin{align}
       \notag & \sum_{\mu,\nu\leq L+1} \Ee{\lrr{\widehat{\eta}_\mu\widehat{\eta}_\nu}^2\lrr{\partial_{\mu\nu} z_{1}^{(L+1)}z_{1}^{(L+1)}}^2}\\
       &\notag \qquad =\lrr{\eta^{(L+1)}}^2\left[\Y{L}{(\partial_\mu z)^2,z^2}+\frac{1}{n_L}\Y{L}{(z\partial_\mu z)^2}\right]+\Y{L}{(\partial_{\mu\nu}z)^2,z^2}+\frac{1}{n_L} \Y{L}{(z\partial_{\mu\nu}z)^2}\\
 \label{eq:rec1}  &\qquad +2\Y{L}{z\partial_{\mu\nu }z,z\partial_{\mu\nu }z}+\frac{2}{n_L}\Y{L}{(z\partial_{\mu\nu}z)^2}\\
      \notag  &\sum_{\mu,\nu\leq L+1}  \Ee{\lrr{\widehat{\eta}_\mu\widehat{\eta}_\nu}^2\lrr{\partial_{\mu\nu} z_{1}^{(L+1)}}^2}\\
      \label{eq:rec2}  &\qquad = \lrr{\eta^{(L+1)}}^2 \Y{L}{\lrr{\partial_\mu z}^2}
      +\Y{L}{\lrr{\partial_{\mu\nu} z}^2}\\
    \notag    & \sum_{\mu,\nu\leq L+1} \Ee{\lrr{\widehat{\eta}_\mu\widehat{\eta}_\nu}^2\partial_{\mu\nu} z_{1}^{(L+1)}\partial_\mu z_{1}^{(L+1)}\partial_\nu z_{1}^{(L+1)}z_{1}^{(L+1)}}\\
     \notag &\qquad =\lrr{\eta^{(L+1)}}^2\lrr{\Y{L}{z\partial_\mu z,z\partial_\mu z}+ \frac{1}{n_L}\Y{L}{(z\partial_\mu z)^2}}\\
     \label{eq:rec3}  &\qquad +2\Y{L}{\partial_{\mu \nu}z \partial_\mu z, z\partial_\nu z }+\Y{L}{z\partial_{\mu\nu}z, \partial_\mu z\partial_\nu z}+\frac{3}{n_L}\Y{L}{z\partial_{\mu \nu}z \partial_\mu z \partial_\nu z}\\
    \notag   &  \Ee{\lrr{\widehat{\eta}_\mu\widehat{\eta}_\nu}^2\lrr{\partial_{\mu} z_{1}^{(L+1)}\partial_\nu z_{1}^{(L+1)}}^2}\\
    &\notag \qquad =\frac{1}{2}\lrr{\eta^{(L+1)}}^4 \lrr{\Ee{\lrr{\frac{1}{n_L}\norm{z^{(L)}}_2^2}^2}+\frac{1}{n_L}\Ee{\frac{1}{n_L}\norm{z^{(L)}}_4^4}}\\
    &\notag \qquad + \lrr{\eta^{(L+1)}}^2\left[\Y{L}{z^2,\lrr{ \partial_\mu z}^2}+\frac{1}{n_L}\Y{L}{\lrr{z\partial_\mu z}^2}\right]\\
    &\label{eq:rec4} \qquad +2\Y{L}{\partial_\mu z \partial_\nu z,\partial_\mu z \partial_\nu z}+\Y{L}{\lrr{\partial_\mu z}^2,\lrr{\partial_\nu z}^2}+\frac{3}{n_L}\Y{L}{\lrr{\partial_\mu z \partial_\nu z}^2}
    \end{align}
\end{lemma}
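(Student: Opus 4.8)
\textbf{Proof plan for Lemma~\ref{lem:H-rec}.} The plan is to prove all four identities by one and the same mechanism: condition on the weights in layers $1,\dots,L$, write each derivative of $z_1^{(L+1)}$ out explicitly, and then integrate over the layer-$(L+1)$ weights $W^{(L)}_{1\cdot}$, which are i.i.d.\ centered Gaussians independent of layers $1,\dots,L$. For a weight $\mu=W^{(L)}_{1a}$ in layer $L+1$ one has $\partial_\mu z_1^{(L+1)}=\sigma(z_a^{(L)})$ and $\partial_{\mu\nu}z_1^{(L+1)}=\partial_\nu\sigma(z_a^{(L)})$ -- in particular $\partial_{\mu\nu}z_1^{(L+1)}=0$ when $\mu,\nu$ both lie in layer $L+1$ -- whereas for $\mu$ in layers $1,\dots,L$ one has $\partial_\mu z_1^{(L+1)}=\sum_j W^{(L)}_{1j}\,\partial_\mu\sigma(z_j^{(L)})$ and $\partial_{\mu\nu}z_1^{(L+1)}=\sum_j W^{(L)}_{1j}\,\partial_{\mu\nu}\sigma(z_j^{(L)})$. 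So in each of the four sums I split the double sum over $(\mu,\nu)$ into three regimes: (i) both in layer $L+1$, (ii) exactly one in layer $L+1$, and (iii) neither in layer $L+1$.

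In regime~(iii) the summand is a polynomial in the Gaussian vector $W^{(L)}_{1\cdot}$, and integrating it out is an application of Wick's theorem, the relevant $t_j,r_j,s_j,q_j$ (products of $\sigma(z^{(L)}),\partial_\mu\sigma(z^{(L)}),\dots$) being deterministic after conditioning. For \eqref{eq:rec2} this is a single self-contraction, $\mathbb{E}[(\sum_j W_{1j}t_j)^2]=\tfrac{2}{n_L}\sum_j t_j^2$; for the quartic summands in \eqref{eq:rec1} and \eqref{eq:rec4} the three pairings yield $\mathbb{E}[(\sum W r)^2(\sum W q)^2]=\tfrac{4}{n_L^2}\big[(\sum_j r_j^2)(\sum_k q_k^2)+2(\sum_j r_j q_j)^2\big]$, and for the mixed summand in \eqref{eq:rec3} they yield $\mathbb{E}[(\sum W t)(\sum W r)(\sum W s)(\sum W q)]=\tfrac{4}{n_L^2}\big[(\sum t r)(\sum s q)+(\sum t s)(\sum r q)+(\sum t q)(\sum r s)\big]$. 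In regime~(ii) at most one pair of layer-$(L+1)$ weights is contracted and the remaining interior-layer factor is a constant once we have conditioned; in regime~(i) there is nothing to integrate, so for \eqref{eq:rec1}, \eqref{eq:rec2}, \eqref{eq:rec3} this regime vanishes ($\partial_{\mu\nu}z_1^{(L+1)}=0$), while for \eqref{eq:rec4} the sum over the two output-layer indices collapses it to a multiple of $\mathbb{E}\big[\norm{\sigma(z^{(L)})}_2^4\big]$, which after the standard sign-reduction is exactly the line of \eqref{eq:rec4} carrying $\mathbb{E}[(n_L^{-1}\norm{z^{(L)}}_2^2)^2]$ and $\mathbb{E}[n_L^{-2}\norm{z^{(L)}}_4^4]$.

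Every sum over $(\mu,\nu)$ in layers $1,\dots,L$ that survives the reduction has exactly the shape $\mathbb{E}\big[\sum_{\mu,\nu\le L}(\widehat\eta_\mu\widehat\eta_\nu)^2\,n_L^{-k}\sum_{j_1,\dots,j_k}\prod_i f_i(z_{j_i}^{(L)})\big]$, i.e.\ it is one of the $\Y{L}{\cdots}$ built from $\sigma(z),\partial_\mu\sigma(z),\partial_\nu\sigma(z),\partial_{\mu\nu}\sigma(z)$; the definition of $\widehat\eta$ supplies the $n_L^{-1}$ factors and the powers of $\eta^{(L+1)}$, and the $\mu\leftrightarrow\nu$ symmetry of the summands supplies the factors of $2$. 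It then remains to convert each such $\Y{L}{\cdots}$ to the $z,\partial z$ form in the lemma using Corollary~\ref{cor:sig-z}: its $k=1$ identity turns a lone $\sigma$-factor into a factor $\tfrac12$, and its $k=2$ identity produces the off-diagonal term together with the $\tfrac1{n_L}$-weighted ``diagonal'' term. Collecting the three regimes for each of the four quantities gives the claimed identities.

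I do not anticipate a conceptual obstacle: the lemma is a careful but routine bookkeeping computation. The step most prone to error is the quartic Wick expansion in regime~(iii) for the cross term \eqref{eq:rec3}, where the four distinct expressions $z,\partial_\mu z,\partial_\nu z,\partial_{\mu\nu}z$ must be paired correctly and one must notice which two of the three resulting pairings coincide once the $\mu\leftrightarrow\nu$ symmetry is used; one must then track the $\tfrac1{n_L}$-suppressed diagonal contributions coming from Corollary~\ref{cor:sig-z} consistently across \eqref{eq:rec1}, \eqref{eq:rec2}, \eqref{eq:rec3}, and \eqref{eq:rec4}.
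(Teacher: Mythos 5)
Your plan is correct and follows essentially the same route as the paper's proof: split the sum over $(\mu,\nu)$ according to which of them lies in layer $L+1$, note that $\partial_{\mu\nu}z_1^{(L+1)}$ vanishes when both do, integrate out the last-layer Gaussian weights via Wick's theorem (with the stated second- and fourth-moment pairings), and convert the resulting $\sigma$-expressions into the $Y^{(L)}$'s with the $\tfrac12$, $\tfrac14$, and $\tfrac1{n_L}$-diagonal factors from Corollary~\ref{cor:sig-z}. The only blemish is notational (you write $W^{(L)}_{1\cdot}$ for the layer-$(L+1)$ weights), which does not affect the argument.
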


\begin{proof}
We begin with deriving (\eqref{eq:rec1}). We have
\begin{align*}
    \sum_{\mu,\nu\leq L+1} \Ee{\lrr{\widehat{\eta}_\mu\widehat{\eta}_\nu}^2\lrr{\partial_{\mu\nu} z_{1}^{(L+1)}z_{1}^{(L+1)}}^2}&=
    \sum_{\mu,\nu \in L+1} \Ee{\lrr{\widehat{\eta}_\mu\widehat{\eta}_\nu}^2\lrr{\partial_{\mu\nu} z_{1}^{(L+1)}z_{1}^{(L+1)}}^2}\\
    &+\sum_{\mu\leq L,\, \nu \in L+1} \Ee{\lrr{\widehat{\eta}_\mu\widehat{\eta}_\nu}^2\lrr{\partial_{\mu\nu} z_{1}^{(L+1)}z_{1}^{(L+1)}}^2}\\
    &+\sum_{\mu\in L+1,\, \nu \leq L+1} \Ee{\lrr{\widehat{\eta}_\mu\widehat{\eta}_\nu}^2\lrr{\partial_{\mu\nu} z_{1}^{(L+1)}z_{1}^{(L+1)}}^2}\\
    &+\sum_{\mu,\nu\leq L} \Ee{\lrr{\widehat{\eta}_\mu\widehat{\eta}_\nu}^2\lrr{\partial_{\mu\nu} z_{1}^{(L+1)}z_{1}^{(L+1)}}^2}.
\end{align*}
Note first that if $\mu,\nu$ are both weights in layer $L+1$, then $\partial_{\mu\nu}z_{1}^{(L+1)}=0$. Thus, the first sum vanishes. Next, the second and third sums are equal. To evaluate them we proceed as follows:
\begin{align*}
    \sum_{\mu\leq L,\, \nu \in L+1} \Ee{\lrr{\widehat{\eta}_\mu\widehat{\eta}_\nu}^2\lrr{\partial_{\mu\nu} z_{1}^{(L+1)}z_{1}^{(L+1)}}^2}&=\sum_{\mu\leq L}\lrr{\widehat{\eta}_\mu}^2 \Ee{\frac{\lrr{\eta^{(L+1)}}^2}{n_L}\sum_{j_1=1}^{n_L}\lrr{\partial_\mu \sigma(z_{j_1}^{(L)}) z_{1}^{(L+1)}}^2}\\
    &=\sum_{\mu\leq L}\lrr{\widehat{\eta}_\mu}^2 \Ee{\frac{\lrr{\eta^{(L+1)}}^2}{n_L}\sum_{j_1=1}^{n_L}\lrr{\partial_\mu \sigma(z_{j_1}^{(L)}) \sum_{j_2=1}^{n_L}W_{1j_2}^{(L+1)}\sigma(z_{j_2}^{(L)})}^2}\\
    &=\sum_{\mu\leq L}\lrr{\widehat{\eta}_\mu}^2 \Ee{\frac{2\lrr{\eta^{(L+1)}}^2}{n_L^2}\sum_{j_1,j_2=1}^{n_L}\lrr{\partial_\mu \sigma(z_{j_1}^{(L)})\sigma(z_{j_2}^{(L)})}^2}\\
    &=\sum_{\mu\leq L}\lrr{\widehat{\eta}_\mu}^2 2\lrr{\eta^{(L+1)}}^2\Ee{\frac{1}{n_{L}}\lrr{\partial_\mu \sigma(z_{1}^{(L)})\sigma(z_{1}^{(L)})}^2}\\
    &+\sum_{\mu\leq L}\lrr{\widehat{\eta}_\mu}^2 2\lrr{\eta^{(L+1)}}^2\Ee{\lrr{1-\frac{1}{n_{L}}}\lrr{\partial_\mu \sigma(z_{1}^{(L)})\sigma(z_{2}^{(L)})}^2}\\
    &=\sum_{\mu\leq L}\lrr{\widehat{\eta}_\mu}^2 \lrr{\eta^{(L+1)}}^2\Ee{\frac{1}{n_{L}}\lrr{\partial_\mu z_{1}^{(L)} z_{1}^{(L)}}^2}\\
    &+\frac{1}{2}\sum_{\mu\leq L}\lrr{\widehat{\eta}_\mu}^2 \lrr{\eta^{(L+1)}}^2\Ee{\lrr{1-\frac{1}{n_{L}}}\lrr{\partial_\mu z_{1}^{(L)}z_{2}^{(L)})}^2}\\
    &=\frac{\lrr{\eta^{(L+1)}}^2}{2}\left[\Y{L}{(\partial_\mu z)^2,z^2}+\frac{1}{n_L}\Y{L}{(z\partial_\mu z)^2}\right].
\end{align*}
Finally, the fourth sum is
\begin{align*}
    \sum_{\mu,\nu\leq L} \Ee{\lrr{\widehat{\eta}_\mu\widehat{\eta}_\nu}^2\lrr{\partial_{\mu\nu} z_{1}^{(L+1)}z_{1}^{(L+1)}}^2}&= \sum_{\mu,\nu\leq L} \Ee{\lrr{\widehat{\eta}_\mu\widehat{\eta}_\nu}^2\lrr{\sum_{j_1,j_2=1}^{N_L}W_{1j_1}^{(L+1)}W_{1j_2}^{(L+1)}\sigma(z_{j_1}^{(L)})\partial_{\mu\nu} \sigma(z_{j_2}^{(L)})}^2}\\
    &=\sum_{\mu,\nu\leq L} \Ee{\lrr{\widehat{\eta}_\mu\widehat{\eta}_\nu}^2\frac{4}{n_L^2}\sum_{j_1,j_2=1}^{N_L}\lrr{\sigma(z_{j_1}^{(L)})\partial_{\mu\nu} \sigma(z_{j_2}^{(L)})}^2}\\
    &+2\sum_{\mu,\nu\leq L} \Ee{\lrr{\widehat{\eta}_\mu\widehat{\eta}_\nu}^2\frac{4}{n_L^2}\sum_{j_1,j_2=1}^{N_L}\sigma(z_{j_1}^{(L)})\partial_{\mu\nu} \sigma(z_{j_1}^{(L)})\sigma(z_{j_2}^{(L)})\partial_{\mu\nu} \sigma(z_{j_2}^{(L)})}.
\end{align*}
To proceed we evaluate the first term as follows:
\begin{align*}
    \sum_{\mu,\nu\leq L} \Ee{\lrr{\widehat{\eta}_\mu\widehat{\eta}_\nu}^2\frac{4}{n_L^2}\sum_{j_1,j_2=1}^{N_L}\lrr{\sigma(z_{j_1}^{(L)})\partial_{\mu\nu} \sigma(z_{j_2}^{(L)})}^2}&=\sum_{\mu,\nu\leq L} \Ee{\lrr{\widehat{\eta}_\mu\widehat{\eta}_\nu}^2\frac{4}{n_L}\lrr{\sigma(z_{1}^{(L)})\partial_{\mu\nu} \sigma(z_{1}^{(L)})}^2}\\
    &+\sum_{\mu,\nu\leq L} \Ee{\lrr{\widehat{\eta}_\mu\widehat{\eta}_\nu}^24\lrr{1-\frac{1}{n_L}}\lrr{\sigma(z_{1}^{(L)})\partial_{\mu\nu} \sigma(z_{2}^{(L)})}^2}\\
    &=\Y{L}{(\partial_{\mu\nu}z)^2,z^2}+\frac{1}{n_L} \Y{L}{(z\partial_{\mu\nu}z)^2}.
\end{align*}
Further, the second term is
\begin{align*}
    &\sum_{\mu,\nu\leq L} \Ee{\lrr{\widehat{\eta}_\mu\widehat{\eta}_\nu}^2\frac{4}{n_L^2}\sum_{j_1,j_2=1}^{N_L}\sigma(z_{j_1}^{(L)})\partial_{\mu\nu} \sigma(z_{j_1}^{(L)})\sigma(z_{j_2}^{(L)})\partial_{\mu\nu} \sigma(z_{j_2}^{(L)})}\\
    &\qquad = \sum_{\mu,\nu\leq L} \Ee{\lrr{\widehat{\eta}_\mu\widehat{\eta}_\nu}^2\frac{4}{n_L}\lrr{\sigma(z_{1}^{(L)})\partial_{\mu\nu} \sigma(z_{1}^{(L)})}^2}\\
    &\qquad + \sum_{\mu,\nu\leq L} \Ee{\lrr{\widehat{\eta}_\mu\widehat{\eta}_\nu}^24\lrr{1-\frac{1}{n_L}}\sigma(z_{1}^{(L)})\partial_{\mu\nu} \sigma(z_{1}^{(L)})\sigma(z_{2}^{(L)})\partial_{\mu\nu} \sigma(z_{2}^{(L)})}\\
    &\qquad =\Y{L}{z\partial_{\mu\nu }z,z\partial_{\mu\nu }z}+\frac{1}{n_L}\Y{L}{(z\partial_{\mu\nu}z)^2}
\end{align*}
Putting all this together yields
\begin{align*}
   \sum_{\mu,\nu\leq L+1} \Ee{\lrr{\widehat{\eta}_\mu\widehat{\eta}_\nu}^2\lrr{\partial_{\mu\nu} z_{1}^{(L+1)}z_{1}^{(L+1)}}^2}&= \lrr{\eta^{(L+1)}}^2\left[\Y{L}{(\partial_\mu z)^2,z^2}+\frac{1}{n_L}\Y{L}{(z\partial_\mu z)^2}\right]\\
   &+\Y{L}{(\partial_{\mu\nu}z)^2,z^2}+\frac{1}{n_L} \Y{L}{(z\partial_{\mu\nu}z)^2}\\
   &+2\Y{L}{z\partial_{\mu\nu }z,z\partial_{\mu\nu }z}+\frac{2}{n_L}\Y{L}{(z\partial_{\mu\nu}z)^2},
\end{align*}
which is precisely the statement of \eqref{eq:rec1}. Next, we establish \eqref{eq:rec2} in a similar manner. We have
\begin{align*}
    \sum_{\mu,\nu\leq L+1}  \Ee{\lrr{\widehat{\eta}_\mu\widehat{\eta}_\nu}^2\lrr{\partial_{\mu\nu} z_{1}^{(L+1)}}^2}&=\sum_{\mu,\nu\in L+1}  \Ee{\lrr{\widehat{\eta}_\mu\widehat{\eta}_\nu}^2\lrr{\partial_{\mu\nu}z_{1}^{(L+1)}}^2}\\
    &+\sum_{\mu\leq L,\nu\in L+1}  \Ee{\lrr{\widehat{\eta}_\mu\widehat{\eta}_\nu}^2\lrr{\partial_{\mu\nu} z_{1}^{(L+1)}}^2}\\
    &+\sum_{\mu\in L+1,\nu\leq L}  \Ee{\lrr{\widehat{\eta}_\mu\widehat{\eta}_\nu}^2\lrr{\partial_{\mu\nu} z_{1}^{(L+1)}}^2}\\
    &+\sum_{\mu,\nu\leq L}  \Ee{\lrr{\widehat{\eta}_\mu\widehat{\eta}_\nu}^2\lrr{\partial_{\mu\nu} z_{1}^{(L+1)}}^2}.
\end{align*}
Again the first sum vanishes since $\partial_{\mu\nu}z_{1}^{(L+1)}=0$ if $\mu,\nu$ are weights in the final layer. Next, the second and third terms are equal and can be written as follows:
\begin{align*}
    \sum_{\mu\leq L,\nu\in L+1}  \Ee{\lrr{\widehat{\eta}_\mu\widehat{\eta}_\nu}^2\lrr{\partial_{\mu\nu} z_{1}^{(L+1)}}^2}&=\lrr{\eta^{(L+1)}}^2\sum_{\mu\leq L} \frac{1}{n_L}\sum_{j=1}^{n_L} \Ee{\lrr{\widehat{\eta}_\mu}^2\lrr{\partial_{\mu} \sigma(z_j^{(L)})}^2}\\
    &=\frac{1}{2}\lrr{\eta^{(L+1)}}^2\Y{L}{(\partial_\mu z)^2}.
\end{align*}
Finally, the fourth term in the sum can be rewritten in the following manner
\begin{align*}
    \sum_{\mu,\nu\leq L}  \Ee{\lrr{\widehat{\eta}_\mu\widehat{\eta}_\nu}^2\lrr{\partial_{\mu\nu} z_{1}^{(L+1)}}^2}&=\sum_{\mu,\nu\leq L}  \Ee{\lrr{\widehat{\eta}_\mu\widehat{\eta}_\nu}^2\lrr{\sum_{j=1}^{n_L}W_{1j}^{(L+1)}\partial_{\mu\nu} \sigma(z_j^{(L)})}^2}\\
    &=\sum_{\mu,\nu\leq L}  \Ee{\lrr{\widehat{\eta}_\mu\widehat{\eta}_\nu}^2\frac{2}{n_L}\sum_{j=1}^{n_L}\lrr{\partial_{\mu\nu} \sigma(z_j^{(L)})}^2}\\
    &=\Y{L}{(\partial_{\mu\nu}z)^2}.
\end{align*}
Hence, altogether, we find
\begin{align*}
    \sum_{\mu,\nu\leq L+1}  \Ee{\lrr{\widehat{\eta}_\mu\widehat{\eta}_\nu}^2\lrr{\partial_{\mu\nu} z_{1}^{(L+1)}}^2}&=\lrr{\eta^{(L+1)}}^2\Y{L}{(\partial_\mu z)^2} + \Y{L}{(\partial_{\mu\nu}z)^2},
\end{align*}
which is the statement of \eqref{eq:rec2}. Next, we establish \eqref{eq:rec3}. We have
\begin{align*}
    &\sum_{\mu,\nu\leq L+1} \Ee{\lrr{\widehat{\eta}_\mu\widehat{\eta}_\nu}^2\partial_{\mu\nu} z_{1}^{(L+1)}\partial_\mu z_{1}^{(L+1)}\partial_\nu z_{1}^{(L+1)}z_{1}^{(L+1)}}\\
    &\qquad =\sum_{\mu,\nu\in L+1} \Ee{\lrr{\widehat{\eta}_\mu\widehat{\eta}_\nu}^2\partial_{\mu\nu} z_{1}^{(L+1)}\partial_\mu z_{1}^{(L+1)}\partial_\nu z_{1}^{(L+1)}z_{1}^{(L+1)}}\\
    &\qquad +\sum_{\mu\leq L,\nu\in L+1} \Ee{\lrr{\widehat{\eta}_\mu\widehat{\eta}_\nu}^2\partial_{\mu\nu} z_{1}^{(L+1)}\partial_\mu z_{1}^{(L+1)}\partial_\nu z_{1}^{(L+1)}z_{1}^{(L+1)}}\\
    &\qquad +\sum_{\mu\in L+1,\nu\leq L} \Ee{\lrr{\widehat{\eta}_\mu\widehat{\eta}_\nu}^2\partial_{\mu\nu} z_{1}^{(L+1)}\partial_\mu z_{1}^{(L+1)}\partial_\nu z_{1}^{(L+1)}z_{1}^{(L+1)}}\\
    &\qquad +\sum_{\mu,\nu\leq L} \Ee{\lrr{\widehat{\eta}_\mu\widehat{\eta}_\nu}^2\partial_{\mu\nu} z_{1}^{(L+1)}\partial_\mu z_{1}^{(L+1)}\partial_\nu z_{1}^{(L+1)}z_{1}^{(L+1)}}.
\end{align*}
The first sum vanishes since $\partial_{\mu\nu} z_{1}^{(L+1)}=0$ when $\mu,\nu$ are weights in the final layer. The second and third terms are again the same and equal
\begin{align*}
   &\sum_{\mu\leq L,\nu\in L+1} \Ee{\lrr{\widehat{\eta}_\mu\widehat{\eta}_\nu}^2\partial_{\mu\nu} z_{1}^{(L+1)}\partial_\mu z_{1}^{(L+1)}\partial_\nu z_{1}^{(L+1)}z_{1}^{(L+1)}}\\
   &=\lrr{\eta^{(L+1)}}^2\sum_{\mu\leq L} \Ee{\lrr{\widehat{\eta}_\mu}^2\frac{1}{n_L}\sum_{j_1=1}^{n_L}\partial_{\mu} \sigma(z_{j_1}^{(L)})\partial_\mu z_{1}^{(L+1)}\sigma(z_{j_1}^{(L)})z_{1}^{(L+1)}}\\
   &=\lrr{\eta^{(L+1)}}^2\sum_{\mu\leq L} \Ee{\lrr{\widehat{\eta}_\mu}^2\frac{1}{n_L}\sum_{j_1=1}^{n_L}\partial_{\mu} \sigma(z_{j_1}^{(L)})\sigma(z_{j_1}^{(L)})\sum_{j_2=1}^{n_L}W_{1j_2}^{(L+1)}W_{1j_2}^{(L+1)}\partial_\mu \sigma(z_{j_2}^{(L)})\sigma(z_{j_2}^{(L)})}\\
   &=\lrr{\eta^{(L+1)}}^2\sum_{\mu\leq L} \Ee{\lrr{\widehat{\eta}_\mu}^2\frac{2}{n_L^2}\sum_{j_1,j_2=1}^{n_L}\partial_{\mu} \sigma(z_{j_1}^{(L)})\sigma(z_{j_1}^{(L)})\partial_\mu \sigma(z_{j_2}^{(L)})\sigma(z_{j_2}^{(L)})}\\
   &=\lrr{\eta^{(L+1)}}^2\sum_{\mu\leq L} \Ee{\lrr{\widehat{\eta}_\mu}^2\frac{2}{n_L}\lrr{\partial_{\mu} \sigma(z_{1}^{(L)})\sigma(z_{1}^{(L)})}^2}\\
   &+\lrr{\eta^{(L+1)}}^2\sum_{\mu\leq L} \Ee{\lrr{\widehat{\eta}_\mu}^22\lrr{1-\frac{1}{n_L}}\partial_{\mu} \sigma(z_{1}^{(L)})\sigma(z_{1}^{(L)})\partial_\mu \sigma(z_{2}^{(L)})\sigma(z_{2}^{(L)})}\\
   &=\frac{1}{2}\lrr{\eta^{(L+1)}}^2\lrr{\Y{L}{z\partial_\mu z,z\partial_\mu z}+ \frac{1}{n_L}\Y{L}{(z\partial_\mu z)^2}}.
\end{align*}
Finally, the fourth term is
\begin{align*}
    &\sum_{\mu,\nu\leq L} \Ee{\lrr{\widehat{\eta}_\mu\widehat{\eta}_\nu}^2\partial_{\mu\nu} z_{1}^{(L+1)}\partial_\mu z_{1}^{(L+1)}\partial_\nu z_{1}^{(L+1)}z_{1}^{(L+1)}}\\
    &\qquad =\sum_{\mu,\nu\leq L} \Ee{\lrr{\widehat{\eta}_\mu\widehat{\eta}_\nu}^2\sum_{j_1,j_2,j_3,j_4=1}^{n_L} W_{1j_1}^{(L+1)}W_{1j_2}^{(L+1)}W_{1j_3}^{(L+1)}W_{1j_4}^{(L+1)}\partial_{\mu\nu} \sigma(z_{j_1}^{(L)})\partial_\mu \sigma(z_{j_2}^{(L)})\partial_\nu \sigma(z_{j_3}^{(L)})\sigma(z_{j_4}^{(L)})}\\
    &\qquad =2\sum_{\mu,\nu\leq L} \Ee{\lrr{\widehat{\eta}_\mu\widehat{\eta}_\nu}^2\frac{4}{n_L^2}\sum_{j_1,j_2}^{n_L} \partial_{\mu\nu} \sigma(z_{j_1}^{(L)})\partial_\mu \sigma(z_{j_1}^{(L)})\partial_\nu \sigma(z_{j_2}^{(L)})\sigma(z_{j_2}^{(L)})}\\
    &\qquad +\sum_{\mu,\nu\leq L} \Ee{\lrr{\widehat{\eta}_\mu\widehat{\eta}_\nu}^2\frac{4}{n_L^2}\sum_{j_1,j_2}^{n_L} \partial_{\mu\nu} \sigma(z_{j_1}^{(L)})\sigma(z_{j_1}^{(L)})\partial_\mu \sigma(z_{j_2}^{(L)})\partial_\nu \sigma(z_{j_2}^{(L)})}\\
    &\qquad =2\Y{L}{\partial_{\mu \nu}z \partial_\mu z, z\partial_\nu z }+\Y{L}{z\partial_{\mu\nu}z, \partial_\mu z\partial_\nu z}+\frac{2}{n_L}\Y{L}{z\partial_{\mu \nu}z \partial_\mu z \partial_\nu z}.
\end{align*}
Putting this all together yields
\begin{align*}
     &\sum_{\mu,\nu\leq L+1} \Ee{\lrr{\widehat{\eta}_\mu\widehat{\eta}_\nu}^2\lrr{\partial_{\mu\nu} z_{1}^{(L+1)}z_{1}^{(L+1)}}^2}\\
     &\qquad = \lrr{\eta^{(L+1)}}^2\lrr{\Y{L}{z\partial_\mu z,z\partial_\mu z}+ \frac{1}{n_L}\Y{L}{(z\partial_\mu z)^2}}\\
     &\qquad +2\Y{L}{\partial_{\mu \nu}z \partial_\mu z, z\partial_\nu z }+\Y{L}{z\partial_{\mu\nu}z, \partial_\mu z\partial_\nu z}+\frac{2}{n_L}\Y{L}{z\partial_{\mu \nu}z \partial_\mu z \partial_\nu z},
\end{align*}
which precisely the statement of \eqref{eq:rec3}. Finally, it remains to check \eqref{eq:rec4}. We have
\begin{align*}
    \sum_{\mu,\nu\leq L+1}\Ee{\lrr{\widehat{\eta}_\mu\widehat{\eta}_\nu}^2\lrr{\partial_{\mu} z_{1}^{(L+1)}\partial_\nu z_{1}^{(L+1)}}^2}&=\sum_{\mu,\nu\in L+1}\Ee{\lrr{\widehat{\eta}_\mu\widehat{\eta}_\nu}^2\lrr{\partial_{\mu} z_{1}^{(L+1)}\partial_\nu z_{1}^{(L+1)}}^2}\\
    &+\sum_{\mu\leq L,\nu\in L+1}\Ee{\lrr{\widehat{\eta}_\mu\widehat{\eta}_\nu}^2\lrr{\partial_{\mu} z_{1}^{(L+1)}\partial_\nu z_{1}^{(L+1)}}^2}\\
    &+\sum_{\mu\in L+1,\nu\leq  L}\Ee{\lrr{\widehat{\eta}_\mu\widehat{\eta}_\nu}^2\lrr{\partial_{\mu} z_{1}^{(L+1)}\partial_\nu z_{1}^{(L+1)}}^2}\\
    &+\sum_{\mu,\nu\leq  L}\Ee{\lrr{\widehat{\eta}_\mu\widehat{\eta}_\nu}^2\lrr{\partial_{\mu} z_{1}^{(L+1)}\partial_\nu z_{1}^{(L+1)}}^2}.
\end{align*}
The first term equals
\begin{align*}
   \sum_{\mu,\nu\in L+1}\Ee{\lrr{\widehat{\eta}_\mu\widehat{\eta}_\nu}^2\lrr{\partial_{\mu} z_{1}^{(L+1)}\partial_\nu z_{1}^{(L+1)}}^2}& = \lrr{\eta^{(L+1)}}^4 \Ee{\frac{1}{n_L^2}\sum_{j_1,j_2=1}^{n_L} \lrr{\sigma(z_{j_1}^{(L)})\sigma(z_{j_2}^{(L)})}^2}\\
   &=\frac{1}{2}\lrr{\eta^{(L+1)}}^4 \lrr{\Ee{\lrr{\frac{1}{n_L}\norm{z^{(L)}}_2^2}^2}+\frac{1}{n_L}\Ee{\frac{1}{n_L}\norm{z^{(L)}}_4^4}}.
\end{align*}
The second and third terms are the same and equal
\begin{align*}
    &\sum_{\mu\leq L,\nu\in L+1}\Ee{\lrr{\widehat{\eta}_\mu\widehat{\eta}_\nu}^2\lrr{\partial_{\mu} z_{1}^{(L+1)}\partial_\nu z_{1}^{(L+1)}}^2}\\
    &\qquad =\lrr{\eta^{(L+1)}}^2\sum_{\mu\leq L}\Ee{\lrr{\widehat{\eta}_\mu}^2\frac{1}{n_L}\sum_{j_1,j_2,j_3=1}^{n_L}\lrr{\sigma(z_{j_1}^{(\ell)})}^2 W_{1,j_2}^{(L+1)}W_{1,j_3}^{(L+1)}\partial_\mu \sigma(z_{j_2}^{(L)})\partial_\mu \sigma(z_{j_3}^{(L)})}\\
    &\qquad =\lrr{\eta^{(L+1)}}^2\sum_{\mu\leq L}\Ee{\lrr{\widehat{\eta}_\mu}^2\frac{2}{n_L^2}\sum_{j_1,j_2=1}^{n_L}\lrr{\sigma(z_{j_1}^{(\ell)}) \partial_\mu \sigma(z_{j_2}^{(L)})}^2}\\
    &=\lrr{\eta^{(L+1)}}^2\left[\Y{L}{z^2,\lrr{ \partial_\mu z}^2}+\frac{1}{n_L}\Y{L}{\lrr{z\partial_\mu z}^2}\right]
\end{align*}
The fourth term equals
\begin{align*}
    &\sum_{\mu,\nu\leq L}\Ee{\lrr{\widehat{\eta}_\mu\widehat{\eta}_\nu}^2\lrr{\partial_{\mu} z_{1}^{(L+1)}\partial_\nu z_{1}^{(L+1)}}^2}\\
    &\qquad =\sum_{\mu,\nu\leq L}\Ee{\lrr{\widehat{\eta}_\mu\widehat{\eta}_\nu}^2\sum_{j_1,j_2,j_3,j_4=1}^{n_L}W_{1j_1}^{(L+1)}W_{1j_2}^{(L+1)}W_{1j_3}^{(L+1)}W_{1j_4}^{(L+1)}\partial_{\mu} \sigma(z_{j_1}^{(L)})\partial_\nu \sigma(z_{j_2}^{(L+1)})\partial_{\mu} \sigma(z_{j_3}^{(L)})\partial_\nu \sigma(z_{j_4}^{(L+1)})}\\
    &\qquad =2\sum_{\mu,\nu\leq L}\Ee{\lrr{\widehat{\eta}_\mu\widehat{\eta}_\nu}^2\frac{4}{n_L^2}\sum_{j_1,j_2=1}^{n_L}\partial_{\mu} \sigma(z_{j_1}^{(L)})\partial_\nu \sigma(z_{j_1}^{(L+1)})\partial_{\mu} \sigma(z_{j_2}^{(L)})\partial_\nu \sigma(z_{j_2}^{(L+1)})}\\
    &\qquad + \sum_{\mu,\nu\leq L}\Ee{\lrr{\widehat{\eta}_\mu\widehat{\eta}_\nu}^2\frac{4}{n_L^2}\sum_{j_1,j_2=1}^{n_L}\lrr{\partial_{\mu} \sigma(z_{j_1}^{(L)})\partial_\nu  \sigma(z_{j_2}^{(L+1)})}^2}\\
    &\qquad =2\Y{L}{\partial_\mu z \partial_\nu z,\partial_\mu z \partial_\nu z}+\Y{L}{\lrr{\partial_\mu z}^2,\lrr{\partial_\nu z}^2}+\frac{3}{n_L}\Y{L}{\lrr{\partial_\mu z \partial_\nu z}^2}.
\end{align*}
Putting this all together yields
\begin{align*}
    & \sum_{\mu,\nu\leq L+1}\Ee{\lrr{\widehat{\eta}_\mu\widehat{\eta}_\nu}^2\lrr{\partial_{\mu} z_{1}^{(L+1)}\partial_\nu z_{1}^{(L+1)}}^2}\\
    &\qquad = \frac{1}{2}\lrr{\eta^{(L+1)}}^4 \lrr{\Ee{\lrr{\frac{1}{n_L}\norm{z^{(L)}}_2^2}^2}+\frac{1}{n_L}\Ee{\frac{1}{n_L}\norm{z^{(L)}}_4^4}}\\
    &\qquad + 2\lrr{\eta^{(L+1)}}^2\left[\Y{L}{z^2,\lrr{ \partial_\mu z}^2}+\frac{1}{n_L}\Y{L}{\lrr{z\partial_\mu z}^2}\right]\\
    &\qquad +2\Y{L}{\partial_\mu z \partial_\nu z,\partial_\mu z \partial_\nu z}+\Y{L}{\lrr{\partial_\mu z}^2,\lrr{\partial_\nu z}^2}+\frac{3}{n_L}\Y{L}{\lrr{\partial_\mu z \partial_\nu z}^2},
\end{align*}
which is precisely the statement of \eqref{eq:rec4}.
\end{proof}

In particular, combining \eqref{eq:HS-def} and \eqref{eq:d2L} with the result of the preceding Lemma yields the following result.
\begin{corollary}\label{cor:H-Y}
We have,
\begin{align*}
    &\Ee{\norm{H_{\text{eff}}}_{HS}^2}\\
    &\qquad =\lrr{\eta^{(L+1)}}^2\left[\Y{L}{(\partial_\mu z)^2,z^2}+\frac{1}{n_L}\Y{L}{(z\partial_\mu z)^2}\right]+\Y{L}{(\partial_{\mu\nu}z)^2,z^2}+\frac{1}{n_L} \Y{L}{(z\partial_{\mu\nu}z)^2}\\
  &\qquad +2\Y{L}{z\partial_{\mu\nu }z,z\partial_{\mu\nu }z}+\frac{2}{n_L}\Y{L}{(z\partial_{\mu\nu}z)^2}\\
&\qquad +\lrr{\eta^{(L+1)}}^2 \Y{L}{\lrr{\partial_\mu z}^2}
      +\Y{L}{\lrr{\partial_{\mu\nu} z}^2}\\
 &\qquad +2\lrr{\eta^{(L+1)}}^2\lrr{\Y{L}{z\partial_\mu z,z\partial_\mu z}+ \frac{1}{n_L}\Y{L}{(z\partial_\mu z)^2}}\\
       &\qquad +4\Y{L}{\partial_{\mu \nu}z \partial_\mu z, z\partial_\nu z }+2\Y{L}{z\partial_{\mu\nu}z, \partial_\mu z\partial_\nu z}+\frac{6}{n_L}\Y{L}{z\partial_{\mu \nu}z \partial_\mu z \partial_\nu z}\\
    &\qquad +\frac{1}{2}\lrr{\eta^{(L+1)}}^4 \lrr{\Ee{\lrr{\frac{1}{n_L}\norm{z^{(L)}}_2^2}^2}+\frac{1}{n_L}\Ee{\frac{1}{n_L}\norm{z^{(L)}}_4^4}}\\
    &\notag \qquad + \lrr{\eta^{(L+1)}}^2\left[\Y{L}{z^2,\lrr{ \partial_\mu z}^2}+\frac{1}{n_L}\Y{L}{\lrr{z\partial_\mu z}^2}\right]\\
    &\qquad +2\Y{L}{\partial_\mu z \partial_\nu z,\partial_\mu z \partial_\nu z}+\Y{L}{\lrr{\partial_\mu z}^2,\lrr{\partial_\nu z}^2}+\frac{3}{n_L}\Y{L}{\lrr{\partial_\mu z \partial_\nu z}^2}.
\end{align*}
\end{corollary}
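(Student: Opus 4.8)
The plan is to treat this Corollary as a bookkeeping assembly of three ingredients already in hand: the definition \eqref{eq:HS-def} of the squared Hilbert--Schmidt norm, the pointwise expansion \eqref{eq:d2L} of the squared second derivative of the loss, and the four sum-evaluations \eqref{eq:rec1}--\eqref{eq:rec4} supplied by Lemma \ref{lem:H-rec}. No new estimate is required; the entire content is matching up coefficients.

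First I would start from \eqref{eq:HS-def} and observe that the rescaled learning rate $\widehat\eta_\mu\widehat\eta_\nu$ sits outside the differential operator, so that for each fixed pair $\mu,\nu$ we may factor the square as $\lrr{\widehat\eta_\mu\widehat\eta_\nu}^2\lrr{\partial_\mu\partial_\nu\{\tfrac12(z_1^{(L+1)}-y)^2\}}^2$. Linearity of expectation then gives
\[
\Ee{\norm{H_{\text{eff}}}_{HS}^2}=\sum_{\mu,\nu\leq L+1}\lrr{\widehat\eta_\mu\widehat\eta_\nu}^2\,\Ee{\lrr{\partial_\mu\partial_\nu\left\{\tfrac12\lrr{z_1^{(L+1)}-y}^2\right\}}^2}.
\]
I would then substitute the pointwise identity \eqref{eq:d2L}, which — after using that $y$ has mean $0$ and variance $1$ and that any term carrying an odd number of factors of $z_1^{(L+1)}$ integrates to zero — writes each summand as a sum of four expectations. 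Distributing $\sum_{\mu,\nu\leq L+1}\lrr{\widehat\eta_\mu\widehat\eta_\nu}^2$ across those four terms produces precisely the four quantities whose values Lemma \ref{lem:H-rec} records: the first sum is the left-hand side of \eqref{eq:rec1}, the second of \eqref{eq:rec2}, the third (retaining its coefficient $2$ from \eqref{eq:d2L}) of \eqref{eq:rec3}, and the fourth of \eqref{eq:rec4}.

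Finally I would substitute the right-hand sides of \eqref{eq:rec1}, \eqref{eq:rec2}, \eqref{eq:rec4}, and twice that of \eqref{eq:rec3}, and collect the terms in the order displayed in the statement. The only care needed is coefficient bookkeeping: the factor $2$ on the cross term doubles every $Y^{(L)}$ coming from \eqref{eq:rec3} — for instance $2\Y{L}{\partial_{\mu\nu}z\partial_\mu z,z\partial_\nu z}$ becomes $4\Y{L}{\partial_{\mu\nu}z\partial_\mu z,z\partial_\nu z}$, and the $\tfrac1{n_L}$-weighted $\Y{L}{z\partial_{\mu\nu}z\partial_\mu z\partial_\nu z}$ is likewise doubled — while the various $\lrr{\eta^{(L+1)}}^2$-, $\lrr{\eta^{(L+1)}}^4$-, and $\tfrac1{n_L}$-weighted contributions must be tallied without collision. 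Because each of \eqref{eq:rec1}--\eqref{eq:rec4} is already an exact identity, there is no genuine obstacle here; the proof concludes once all the $Y^{(L)}$ terms, together with the two norm expectations $\Ee{\lrr{\tfrac1{n_L}\norm{z^{(L)}}_2^2}^2}$ and $\tfrac1{n_L}\Ee{\tfrac1{n_L}\norm{z^{(L)}}_4^4}$, have been written out as shown.
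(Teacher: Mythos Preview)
Your proposal is correct and follows exactly the approach the paper takes: the paper simply states that combining \eqref{eq:HS-def} and \eqref{eq:d2L} with Lemma \ref{lem:H-rec} yields the corollary, and you have spelled out that combination in detail. The only minor imprecision is that the $\Y{L}{z\partial_{\mu\nu}z\partial_\mu z\partial_\nu z}$ term in \eqref{eq:rec3} carries the coefficient $\tfrac{3}{n_L}$ (not $\tfrac{1}{n_L}$), so doubling gives the $\tfrac{6}{n_L}$ shown in the statement; your description of the mechanism is nonetheless correct.
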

\subsection{Self-Consistent Recursions for $Y^{(\ell)}$'s}

Our next task is to develop and then solve self-consistent recursions for those $Y$'s that contain only one $\mu$. 
\begin{lemma}\label{lem:Ymu-recs}
We have
\begin{align}
 \label{eq:Ymu-rec1}   \Y{\ell+1}{(\partial_\mu z)^2} &= \frac{1}{2}\lrr{\eta^{(\ell+1)}}^2 \frac{1}{n_0}\norm{x}_2^2 + \Y{\ell}{(\partial_\mu z)^2}\\
\label{eq:Ymu-rec2}    \Y{\ell+1}{(\partial_\mu z)^2,z^2} &= \lrr{{\eta}^{(\ell+1)}}^2\Ee{\lrr{\frac{1}{n_\ell}||\sigma^{(\ell)}||^2}^2}\\
\notag &+\Y{\ell}{z^2, (\partial_\mu z)^2}  + \frac{2}{n_{\ell+1}}\Y{\ell}{z\partial_\mu, z\partial_\mu}+ \frac{1}{n_\ell}\lrr{1+\frac{2}{n_{\ell+1}}}\Y{\ell}{(z\partial_\mu z)^2}\\
\label{eq:Ymu-rec3}    \Y{\ell+1}{z\partial_\mu z,z\partial_\mu z} &= 2\frac{\lrr{\eta^{(\ell+1)}}^2}{n_{\ell+1}}\Ee{\lrr{\frac{1}{n_\ell}||\sigma^{(\ell)}||^2}^2}\\
\notag    &+\lrr{1+\frac{1}{n_{\ell+1}}}\Y{\ell}{z\partial_\mu z, z\partial_\mu z} + \frac{1}{n_\ell}\lrr{1+\frac{2}{n_{\ell+1}}} \Y{\ell}{(z\partial_\mu z)^2} + \frac{1}{n_{\ell+1}} \Y{\ell}{z^2, \partial_\mu z^2}\\
\label{eq:Ymu-rec4}    \Y{\ell+1}{(z\partial_\mu z)^2}&= 2\lrr{\eta^{(\ell+1)}}^2\Ee{\lrr{\frac{1}{n_\ell}||\sigma^{(\ell)}||^2}^2}\\
\notag &+2\Y{\ell}{z\partial_\mu z, z\partial_\mu z} + \Y{\ell}{z^2,\partial_\mu z^2} + \frac{3}{n_\ell} \Y{\ell}{(z\partial_\mu z)^2}.
\end{align}
\end{lemma}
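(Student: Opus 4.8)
The plan is to prove all four recursions at once using the device already employed in Lemma \ref{lem:H-rec}: in each $\Y{\ell+1}{\cdots}$, split the sum over network weights $\mu\leq \ell+1$ according to whether $\mu$ is a weight in layer $\ell+1$ or a weight in one of the earlier layers $1,\ldots,\ell$. The layer-$(\ell+1)$ piece will produce the explicit source terms (the one involving $\frac{1}{n_0}\norm{x}_2^2$ in \eqref{eq:Ymu-rec1}, and the ones involving $\Ee{\lrr{\frac{1}{n_\ell}\norm{\sigma^{(\ell)}}_2^2}^2}$ in \eqref{eq:Ymu-rec2}--\eqref{eq:Ymu-rec4}), while the earlier-layer piece, after integrating out the Gaussian weights of layer $\ell+1$, produces the linear combinations of $Y^{(\ell)}$'s. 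Since \eqref{eq:Ymu-rec2} and \eqref{eq:Ymu-rec3} involve two output coordinates $j_1,j_2$ rather than one, one must there additionally separate the contributions with $j_1=j_2$ from those with $j_1\neq j_2$; the $j_1=j_2$ contributions are exactly what carries the $\frac{1}{n_{\ell+1}}$ prefactors in the statement.

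For the layer-$(\ell+1)$ piece, write $\mu=W_{ab}^{(\ell+1)}$, so that $\partial_\mu z_i^{(\ell+1)}=\delta_{ia}\sigma(z_b^{(\ell)})$ and $\lrr{\widehat{\eta}_\mu}^2=\lrr{\eta^{(\ell+1)}}^2/n_\ell$. Substituting this makes every $\partial_\mu$ disappear; the sum over $a$ cancels an $n_{\ell+1}$, the sum over $b$ produces a factor $\norm{\sigma^{(\ell)}}_2^2=\norm{z^{(\ell)}}_2^2$, and integrating out the remaining layer-$(\ell+1)$ weights that still sit inside $z^{(\ell+1)}$ itself (via $\Ee{W_{ic}^{(\ell+1)}W_{ic'}^{(\ell+1)}}=\frac{2}{n_\ell}\delta_{cc'}$) produces a second factor of $\norm{\sigma^{(\ell)}}_2^2$ in \eqref{eq:Ymu-rec2}--\eqref{eq:Ymu-rec4}. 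For \eqref{eq:Ymu-rec1} no further weights survive, and one uses that the expected squared pre-activation norm is preserved by Kaiming initialization together with the ReLU halving $\Ee{\sigma(z)^2}=\frac12\Ee{z^2}$ (Lemma \ref{lem:ind}) to rewrite this first moment as $\frac12\cdot\frac{1}{n_0}\norm{x}_2^2$; for \eqref{eq:Ymu-rec2}--\eqref{eq:Ymu-rec4} the second moment $\Ee{\lrr{\frac{1}{n_\ell}\norm{\sigma^{(\ell)}}_2^2}^2}$ is left untouched.

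For the earlier-layer piece, write $\mu\leq \ell$, so $z_i^{(\ell+1)}=\sum_b W_{ib}^{(\ell+1)}\sigma(z_b^{(\ell)})$ and $\partial_\mu z_i^{(\ell+1)}=\sum_b W_{ib}^{(\ell+1)}\partial_\mu\sigma(z_b^{(\ell)})$, and integrate out the layer-$(\ell+1)$ weights. For \eqref{eq:Ymu-rec1} only the second moment of those weights is needed; for \eqref{eq:Ymu-rec2}--\eqref{eq:Ymu-rec4} one needs the fourth moment, whose Wick expansion $\frac{4}{n_\ell^2}\lrr{\delta_{bc}\delta_{b'c'}+\delta_{bb'}\delta_{cc'}+\delta_{bc'}\delta_{b'c}}$ yields three groups of terms --- two ``crossed'' pairings, which reconstitute $\Y{\ell}{z\partial_\mu z,z\partial_\mu z}$-type terms, and one ``uncrossed'' pairing, which reconstitutes $\Y{\ell}{z^2,(\partial_\mu z)^2}$ --- each group also contributing a coincidence ($b=c$, etc.) term that collapses one sum and produces a $\frac{1}{n_\ell}$ weight. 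A last application of Corollary \ref{cor:sig-z}, with the one-index factor $\frac12$ and the two-index factor $\frac14\lrr{1+\frac{1}{n_\ell}}$, replaces every surviving $\sigma$-monomial by the corresponding $z$-monomial and fixes all remaining numerical coefficients. The main obstacle here is purely combinatorial: carrying the three Wick pairings, the $j_1=j_2$ versus $j_1\neq j_2$ split, and the $\sigma\to z$ conversion factors simultaneously while assigning each resulting term the correct $\frac{1}{n_\ell}$ or $\frac{1}{n_{\ell+1}}$ prefactor --- and, most importantly, verifying that the right-hand sides close on precisely the four quantities $\Y{\ell}{(\partial_\mu z)^2}$, $\Y{\ell}{z^2,(\partial_\mu z)^2}$, $\Y{\ell}{z\partial_\mu z,z\partial_\mu z}$, $\Y{\ell}{(z\partial_\mu z)^2}$ that already appear on the left, so that the system is self-consistent and can be solved in Corollary \ref{cor:Ymu-form}.
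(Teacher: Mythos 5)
Your plan is correct and follows essentially the same route as the paper's proof: split each sum over $\mu$ according to whether $\mu$ is a weight in layer $\ell+1$ (producing the $\frac{1}{n_0}\norm{x}_2^2$ and $\Ee{\lrr{\frac{1}{n_\ell}\norm{\sigma^{(\ell)}}^2}^2}$ source terms) or in layers $1,\ldots,\ell$ (integrate out the layer-$(\ell+1)$ Gaussians by Wick, then convert $\sigma$-monomials to $z$-monomials via Corollary \ref{cor:sig-z}). Your explicit $j_1=j_2$ versus $j_1\neq j_2$ bookkeeping for the two-slot quantities is precisely where the $1/n_{\ell+1}$ prefactors in \eqref{eq:Ymu-rec2}--\eqref{eq:Ymu-rec3} come from, and is if anything slightly more careful than the paper's own write-up of \eqref{eq:Ymu-rec2}.
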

\begin{proof}
We start with \eqref{eq:Ymu-rec1}. We have
\begin{align*}
    \Y{\ell+1}{(\partial_\mu z)^2} &= \Ee{\sum_{\mu\in  \ell+1}\lrr{\widehat{\eta}_\mu}^2\frac{1}{n_{\ell+1}} \sum_{j=1}^{n_{\ell+1}} \lrr{\partial_\mu z_{j}^{(\ell+1)}}^2}\\
    &+\Ee{\sum_{\mu\leq  \ell}\lrr{\widehat{\eta}_\mu}^2\frac{1}{n_{\ell+1}} \sum_{j=1}^{n_{\ell+1}} \lrr{\partial_\mu z_{j}^{(\ell+1)}}^2}.
\end{align*}
The first term is 
\begin{align*}
    \Ee{\sum_{\mu\in  \ell+1}\lrr{\widehat{\eta}_\mu}^2\frac{1}{n_{\ell+1}} \sum_{j=1}^{n_{\ell+1}} \lrr{\partial_\mu z_{j}^{(\ell+1)}}^2}&= \lrr{\eta^{(\ell+1)}}^2\Ee{\frac{1}{n_\ell}\norm{\sigma^{(\ell)}}^2}=\frac{\norm{x}^2}{2n_0}\lrr{\eta^{(\ell+1)}}^2.
\end{align*}
Next, for the second term, we have
\begin{align*}
    \Ee{\sum_{\mu\leq  \ell}\lrr{\widehat{\eta}_\mu}^2\frac{1}{n_{\ell+1}} \sum_{j=1}^{n_{\ell+1}} \lrr{\partial_\mu z_{j}^{(\ell+1)}}^2}&=\Ee{\sum_{\mu\leq  \ell}\lrr{\widehat{\eta}_\mu}^2\frac{1}{n_{\ell+1}} \sum_{j=1}^{n_{\ell+1}} \sum_{k_1,k_2=1}^{n_L} W_{jk_1}^{(\ell+1)} W_{jk_2}^{(\ell+1)} \partial_\mu \sigma(z_{k_1}^{(\ell)})\sigma(z_{k_2}^{(\ell)})}\\
    &=\Ee{\sum_{\mu\leq  \ell}\lrr{\widehat{\eta}_\mu}^2\frac{2}{n_\ell} \sum_{k=1}^{n_L}  \lrr{\partial_\mu \sigma(z_{k}^{(\ell)})}^2}\\
    &=\Y{\ell}{(\partial_\mu z)^2}.
\end{align*}
Combining the preceding expressions yields
\begin{align*}
\Y{\ell+1}{(\partial_\mu z)^2} &=\frac{\norm{x}^2}{2n_0}\lrr{\eta^{(\ell+1)}}^2+\Y{\ell}{(\partial_\mu z)^2},
\end{align*}    
which is precisely \eqref{eq:Ymu-rec1}. Next, let us derive \eqref{eq:Ymu-rec2}. We have
\begin{align*}
     \Y{\ell+1}{(\partial_\mu z)^2,z^2}&= \Ee{\sum_{\mu\in \ell+1} \lrr{\widehat{\eta}_{\mu}}^2 \frac{1}{n_{\ell+1}}\sum_{j=1}^{n_{\ell+1}} \lrr{z_j^{(\ell+1)}\partial_\mu z_j^{(\ell+1)}}^2}\\
     &+\Ee{\sum_{\mu\leq \ell} \lrr{\widehat{\eta}_{\mu}}^2 \frac{1}{n_{\ell+1}}\sum_{j=1}^{n_{\ell+1}} \lrr{z_j^{(\ell+1)}\partial_\mu z_j^{(\ell+1)}}^2}.
\end{align*}
The first term is 
\begin{align*}
     \Ee{\sum_{\mu\in \ell+1} \lrr{\widehat{\eta}_{\mu}}^2 \frac{1}{n_{\ell+1}}\sum_{j=1}^{n_{\ell+1}} \lrr{z_j^{(\ell+1)}\partial_\mu z_j^{(\ell+1)}}^2}&= \lrr{{\eta}^{(\ell+1)}}^2\Ee{\frac{1}{n_\ell}\sum_{j_1=1}^{n_\ell}\sigma(z_{j_1}^{(\ell)})^2 \frac{1}{n_{\ell+1}}\sum_{j=1}^{n_{\ell+1}} \lrr{z_j^{(\ell+1)}}^2}\\
     &=\lrr{{\eta}^{(\ell+1)}}^2\Ee{\frac{1}{n_\ell^2}\sum_{j_2=1}^{n_\ell}\sigma(z_{j_1}^{(\ell)})^2\sigma(z_{j_2}^{(\ell)})^2 }\\
     &=\lrr{{\eta}^{(\ell+1)}}^2\Ee{\lrr{\frac{1}{n_\ell}||\sigma^{(\ell)}||^2}^2}.
\end{align*}
In contrast, using Wick's theorem, the second term is
\begin{align*}
    &\Ee{\sum_{\mu\leq \ell} \lrr{\widehat{\eta}_{\mu}}^2 \frac{1}{n_{\ell+1}}\sum_{j=1}^{n_{\ell+1}} \lrr{z_j^{(\ell+1)}\partial_\mu z_j^{(\ell+1)}}^2}\\
    &\qquad = \Ee{\sum_{\mu\leq \ell} \lrr{\widehat{\eta}_{\mu}}^2 \frac{1}{n_{\ell+1}}\sum_{j=1}^{n_{\ell+1}} \sum_{j_1,j_2,j_3,j_4=1}^{n_\ell} W_{jj_1}^{(\ell+1)}W_{jj_2}^{(\ell+1)}W_{jj_3}^{(\ell+1)}W_{jj_4}^{(\ell+1)}\sigma(z_{j_1}^{(\ell)})\sigma(z_{j_2}^{(\ell)})\partial_\mu \sigma(z_{j_3}^{(\ell)})\partial_\mu \sigma(z_{j_4}^{(\ell)})}\\
    &\qquad = \Ee{\sum_{\mu\leq \ell} \lrr{\widehat{\eta}_{\mu}}^2 \frac{4}{n_\ell^2}\sum_{j_1,j_2}^{n_\ell} \lrr{\sigma(z_{j_1}^{(\ell)})\partial_\mu \sigma(z_{j_2}^{(\ell)})}^2 + 2\sigma(z_{j_1}^{(\ell)})\partial_\mu \sigma(z_{j_1}^{(\ell)})\sigma(z_{j_2}^{(\ell)})\partial_\mu \sigma(z_{j_2}^{(\ell)})}\\
    &\qquad = \Y{\ell}{z^2, (\partial_\mu z)^2}  + 2\Y{\ell}{z\partial_\mu, z\partial_\mu}+ \frac{3}{n_\ell} \Y{\ell}{(z\partial_\mu z)^2}.
\end{align*}
Hence, altogether, we find
\begin{align*}
     \Y{\ell+1}{(\partial_\mu z)^2,z^2}&=\lrr{{\eta}^{(\ell+1)}}^2\Ee{\lrr{\frac{1}{n_\ell}||\sigma^{(\ell)}||^2}^2}+\Y{\ell}{z^2, (\partial_\mu z)^2}  + 2\Y{\ell}{z\partial_\mu, z\partial_\mu}+ \frac{3}{n_\ell} \Y{\ell}{(z\partial_\mu z)^2},
\end{align*}
which is precisely \eqref{eq:Ymu-rec2}. Next, we derive \eqref{eq:Ymu-rec3}. We have
\begin{align*}
    \Y{\ell+1}{z\partial_\mu z,z\partial_\mu z}&= \Ee{\sum_{\mu\in \ell+1}\lrr{\widehat{\eta}_\mu}^2 \frac{1}{n_{\ell+1}^2}\sum_{j_1,j_2=1}^{n_{\ell+1}} z_{j_1}^{(\ell+1)}z_{j_2}^{(\ell+1)}\partial_\mu z_{j_1}^{(\ell+1)}\partial_\mu z_{j_2}^{(\ell+1)}}\\
    &= \Ee{\sum_{\mu\leq \ell}\lrr{\widehat{\eta}_\mu}^2 \frac{1}{n_{\ell+1}^2}\sum_{j_1,j_2=1}^{n_{\ell+1}} z_{j_1}^{(\ell+1)}z_{j_2}^{(\ell+1)}\partial_\mu z_{j_1}^{(\ell+1)}\partial_\mu z_{j_2}^{(\ell+1)}}.
\end{align*}
The first term equals
\begin{align*}
    \Ee{\sum_{\mu\in \ell+1}\lrr{\widehat{\eta}_\mu}^2 \frac{1}{n_{\ell+1}^2}\sum_{j_1,j_2=1}^{n_{\ell+1}} z_{j_1}^{(\ell+1)}z_{j_2}^{(\ell+1)}\partial_\mu z_{j_1}^{(\ell+1)}\partial_\mu z_{j_2}^{(\ell+1)}}&= \frac{\lrr{\eta^{(\ell+1)}}^2}{n_{\ell+1}} \Ee{\frac{1}{n_{\ell}}\sum_{j_1=1}^{n_{\ell}}\sigma(z_{j_1}^{(\ell)^2})\frac{1}{n_{\ell+1}}\sum_{j_2=1}^{n_{\ell+1}}\lrr{z_{j_2}^{(\ell+1)}}^2}\\
    &= 2\frac{\lrr{\eta^{(\ell+1)}}^2}{n_{\ell+1}}\Ee{\lrr{\frac{1}{n_\ell}||\sigma^{(\ell)}||^2}^2}.
\end{align*}
The second term is
\begin{align*}
    &\Ee{\sum_{\mu\leq \ell}\lrr{\widehat{\eta}_\mu}^2 \frac{1}{n_{\ell+1}^2}\sum_{j_1,j_2=1}^{n_{\ell+1}} z_{j_1}^{(\ell+1)}z_{j_2}^{(\ell+1)}\partial_\mu z_{j_1}^{(\ell+1)}\partial_\mu z_{j_2}^{(\ell+1)}}\\
    &\qquad =  \Ee{\sum_{\mu\leq \ell} \lrr{\widehat{\eta}_{\mu}}^2 \frac{1}{n_{\ell+1}^2}\sum_{j_1,j_2=1}^{n_{\ell+1}} \sum_{k_1,k_2,k_3,k_4=1}^{n_\ell} W_{j_1k_1}^{(\ell+1)}W_{j_1k_2}^{(\ell+1)}W_{j_2k_3}^{(\ell+1)}W_{j_2k_4}^{(\ell+1)}\sigma(z_{k_1}^{(\ell)})\partial_\mu \sigma(z_{k_2}^{(\ell)})\sigma(z_{k_3}^{(\ell)})\partial_\mu \sigma(z_{k_4}^{(\ell)})}\\
    &\qquad =  \Ee{\sum_{\mu\leq \ell} \lrr{\widehat{\eta}_{\mu}}^2 \frac{4}{n_{\ell}^2}\sum_{k_1,k_2=1}^{n_{\ell}}  \sigma(z_{k_1}^{(\ell)})\partial_\mu \sigma(z_{k_1}^{(\ell)})\sigma(z_{k_2}^{(\ell)})\partial_\mu \sigma(z_{k_2}^{(\ell)}) + \frac{2}{n_{\ell+1}}\lrr{\sigma(z_{k_1}^{(\ell)})\partial_\mu \sigma(z_{k_2}^{(\ell)})}^2}\\
    &\qquad =\Y{\ell}{z\partial_\mu z, z\partial_\mu z} + \frac{1}{n_\ell}\lrr{1+\frac{2}{n_{\ell+1}}} \Y{\ell}{(z\partial_\mu z)^2} + \frac{2}{n_{\ell+1}} \Y{\ell}{z^2, \partial_\mu z^2}.
\end{align*}
Putting this together yields
\begin{align*}
    \Y{\ell+1}{z\partial_\mu z,z\partial_\mu z}&=  2\frac{\lrr{\eta^{(\ell+1)}}^2}{n_{\ell+1}}\Ee{\lrr{\frac{1}{n_\ell}||\sigma^{(\ell)}||^2}^2}\\
    &=\Y{\ell}{z\partial_\mu z, z\partial_\mu z} + \frac{1}{n_\ell}\lrr{1+\frac{2}{n_{\ell+1}}} \Y{\ell}{(z\partial_\mu z)^2} + \frac{2}{n_{\ell+1}} \Y{\ell}{z^2, \partial_\mu z^2},
\end{align*}
which is precisely \eqref{eq:Ymu-rec3}. Finally, we derive \eqref{eq:Ymu-rec4}. We have 
\begin{align*}
     \Y{\ell+1}{(z\partial_\mu z)^2}&= \Ee{\sum_{\mu\in \ell+1}\lrr{\widehat{\eta}_\mu}^2 \frac{1}{n_{\ell+1}}\sum_{j=1}^{n_{\ell+1}}\lrr{z_j^{(\ell+1)}\partial_\mu z_j^{(\ell+1)}}^2}\\
     &+\Ee{\sum_{\mu\leq \ell}\lrr{\widehat{\eta}_\mu}^2 \frac{1}{n_{\ell+1}}\sum_{j=1}^{n_{\ell+1}}\lrr{z_j^{(\ell+1)}\partial_\mu z_j^{(\ell+1)}}^2}.
\end{align*}
The first term is
\begin{align*}
    \Ee{\sum_{\mu\in \ell+1}\lrr{\widehat{\eta}_\mu}^2 \frac{1}{n_{\ell+1}}\sum_{j=1}^{n_{\ell+1}}\lrr{z_j^{(\ell+1)}\partial_\mu z_j^{(\ell+1)}}^2}&=\lrr{\eta^{(\ell+1)}}^2\Ee{\frac{1}{n_\ell}\sum_{j_1=1}^{n_\ell}\sigma(z_{j_1}^{(\ell)})  \frac{1}{n_{\ell+1}}\sum_{j=1}^{n_{\ell+1}}\lrr{z_j^{(\ell+1)}}^2}\\
     &=2\lrr{\eta^{(\ell+1)}}^2\Ee{\lrr{\frac{1}{n_\ell}||\sigma^{(\ell)}||^2}^2}.
\end{align*}
The second term is
\begin{align*}
    &\Ee{\sum_{\mu\leq \ell}\lrr{\widehat{\eta}_\mu}^2 \frac{1}{n_{\ell+1}}\sum_{j=1}^{n_{\ell+1}}\lrr{z_j^{(\ell+1)}\partial_\mu z_j^{(\ell+1)}}^2}\\
    &\qquad = \Ee{\sum_{\mu\leq \ell}\lrr{\widehat{\eta}_\mu}^2 \frac{1}{n_{\ell+1}}\sum_{j=1}^{n_{\ell+1}}\sum_{k_1,k_2,k_3,k_4=1}^{n_\ell}W_{jk_1}^{(\ell+1)}W_{jk_2}^{(\ell+1)}W_{jk_3}^{(\ell+1)}W_{jk_4}^{(\ell+1)}\sigma(z_{k_1}^{(\ell)})\sigma(z_{k_2}^{(\ell)})\partial_\mu \sigma(z_{k_3}^{(\ell)})\partial_\mu \sigma(z_{k_4}^{(\ell)})}\\
    &\qquad =\Ee{\sum_{\mu\leq \ell}\lrr{\widehat{\eta}_\mu}^2 \frac{4}{n_\ell}\sum_{k_1,k_2=1}^{n_\ell}\lrr{\sigma(z_{k_1}^{(\ell)})\partial_\mu \sigma(z_{k_2}^{(\ell)})}^2+2\sigma(z_{k_1}^{(\ell)})\partial_\mu \sigma(z_{k_1}^{(\ell)})\sigma(z_{k_2}^{(\ell)})\partial_\mu \sigma(z_{k_2}^{(\ell)})}\\
    &\qquad = 2\Y{\ell}{z\partial_\mu z, z\partial_\mu z} + \Y{\ell}{z^2,\partial_\mu z^2} + \frac{3}{n_\ell} \Y{\ell}{(z\partial_\mu z)^2}.
\end{align*}
So all together this yields
\begin{align*}
\Y{\ell+1}{(z\partial_\mu z)^2}&= 2\lrr{\eta^{(\ell+1)}}^2\Ee{\lrr{\frac{1}{n_\ell}||\sigma^{(\ell)}||^2}^2}\\
&+2\Y{\ell}{z\partial_\mu z, z\partial_\mu z} + \Y{\ell}{z^2,\partial_\mu z^2} + \frac{3}{n_\ell} \Y{\ell}{(z\partial_\mu z)^2},
\end{align*}
which is precisely \eqref{eq:Ymu-rec4}.
\end{proof}

Inspecting the recursions in Lemma \ref{lem:Ymu-recs} immediately shows that, for $\ell=1,\ldots, L$, we have that as $n\gives \infty$
\[
\Y{\ell}{(\partial_\mu z)^2}, \Y{\ell}{(\partial_\mu z)^2,z^2}, \Y{\ell}{(z\partial_\mu z)^2}  = O(1),\qquad \Y{\ell}{z\partial_\mu z,z\partial_\mu z} = O(n^{-1}).
\]
Thus, we obtain 
\begin{corollary}\label{cor:Ymu-form}
for $\ell=1,\ldots, L$, we have that 
\begin{align*}
     \Y{\ell+1}{(\partial_\mu z)^2} &= \frac{1}{2n_0}\norm{x}_2^2\sum_{\ell'=0}^\ell \lrr{\eta^{(\ell'+1)}}^2\\
     \Y{\ell+1}{(\partial_\mu z)^2,z^2} &= \frac{1}{2}\sum_{\ell'=0}^\ell \lrr{\eta^{(\ell'+1)}}^2\Ee{\lrr{\frac{1}{n_{\ell'}}||z^{(\ell')}||_2^2}^2} + O(n^{-1}).
\end{align*}
In particular, specializing to the case where $\eta^{(\ell)}=\eta$ is independent of $\ell$, we obtain 
\begin{align*}
     \Y{\ell+1}{(\partial_\mu z)^2} &= \frac{\ell+1}{2n_0}\norm{x}_2^2\eta^2\\
     \Y{\ell+1}{(\partial_\mu z)^2,z^2} &= \frac{\eta^2}{2}\lrr{\frac{||x||_2^2}{n_0}}^2\sum_{\ell'=0}^\ell  \prod_{\ell''=1}^{\ell'} \lrr{1+\frac{2}{n_{\ell''}}}+ O(n^{-1}).
\end{align*}
\end{corollary}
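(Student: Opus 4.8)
The plan is to obtain both displayed formulas by iterating the recursions of Lemma~\ref{lem:Ymu-recs} down to the trivial base layer $\ell=0$, at which every $Y^{(0)}$ vanishes because the sum over $\mu\leq 0$ is empty. The only preliminary is to justify the order-of-magnitude estimates stated just above the Corollary, namely $\Y{\ell}{(\partial_\mu z)^2},\Y{\ell}{(\partial_\mu z)^2,z^2},\Y{\ell}{(z\partial_\mu z)^2}=O(1)$ and $\Y{\ell}{z\partial_\mu z,z\partial_\mu z}=O(n^{-1})$, with implied constants uniform in $n$ (depending on $L,c,C,n_0$ and the $\eta^{(\ell)}$). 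The easiest piece is \eqref{eq:Ymu-rec1}: it is a closed first-order recursion in $\Y{\cdot}{(\partial_\mu z)^2}$ with the \emph{exact} deterministic increment $\frac12(\eta^{(\ell+1)})^2\frac1{n_0}\norm{x}_2^2$ (exact, with no $1/n$ correction, because $\Ee{\frac1{n_\ell}\norm{\sigma^{(\ell)}}^2}$ is a deterministic constant, as computed inside the proof of that lemma). Telescoping from $\ell=0$ gives $\Y{\ell+1}{(\partial_\mu z)^2}=\frac1{2n_0}\norm{x}_2^2\sum_{\ell'=0}^{\ell}(\eta^{(\ell'+1)})^2$ exactly, and the constant-$\eta$ form $\frac{\ell+1}{2n_0}\norm{x}_2^2\eta^2$ is immediate.

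\textbf{Order estimates.} I would prove these by induction on $\ell$, the case $\ell=0$ being trivial. For the step, substitute the inductive hypothesis into \eqref{eq:Ymu-rec1}--\eqref{eq:Ymu-rec4}, using $cn\le n_1,\dots,n_L\le Cn$ (so $1/n_\ell=O(n^{-1})$) together with the standard forward-pass moment bound $\Ee{(\frac1{n_\ell}\norm{\sigma^{(\ell)}}^2)^2}=O(1)$. The key observation is that, besides $\Y{\ell}{(\partial_\mu z)^2}$ and $\Y{\ell}{(\partial_\mu z)^2,z^2}$, the only $O(1)$ quantity appearing is $\Y{\ell}{(z\partial_\mu z)^2}$, and it always enters multiplied by $1/n_\ell$, while $\Y{\ell}{z\partial_\mu z,z\partial_\mu z}$ already carries $O(n^{-1})$; hence \eqref{eq:Ymu-rec1}, \eqref{eq:Ymu-rec2} and \eqref{eq:Ymu-rec4} close at $O(1)$ and \eqref{eq:Ymu-rec3} closes at $O(n^{-1})$.

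\textbf{The remaining formula.} Given the order estimates, \eqref{eq:Ymu-rec2} collapses to $\Y{\ell+1}{(\partial_\mu z)^2,z^2}=(\eta^{(\ell+1)})^2\Ee{(\frac1{n_\ell}\norm{\sigma^{(\ell)}}^2)^2}+\Y{\ell}{(\partial_\mu z)^2,z^2}+O(n^{-1})$, the two coupling terms being $O(n^{-1})$. Since $L$ is fixed, telescoping to $\ell=0$ accumulates only $O(1)$ many such errors, yielding $\Y{\ell+1}{(\partial_\mu z)^2,z^2}=\sum_{\ell'=0}^{\ell}(\eta^{(\ell'+1)})^2\Ee{(\frac1{n_{\ell'}}\norm{\sigma^{(\ell')}}^2)^2}+O(n^{-1})$. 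To convert $\sigma$-norms to $z$-norms I would write $\norm{\sigma^{(\ell')}}^2=\sum_j(z_j^{(\ell')})^2{\bf 1}_{\set{z_j^{(\ell')}>0}}$, square it, and invoke Lemma~\ref{lem:ind}: each off-diagonal pair contributes $\frac14\Ee{(z_j^{(\ell')})^2(z_k^{(\ell')})^2}$, each of the $n_{\ell'}$ diagonal terms contributes $\frac12\Ee{(z_j^{(\ell')})^4}$, and since $\Ee{(z_j^{(\ell')})^4}=O(1)$ the diagonal only contributes $O(n^{-1})$; together with the leading coefficient of \eqref{eq:Ymu-rec2} this produces exactly the stated prefactor $\frac12$ in front of $\Ee{(\frac1{n_{\ell'}}\norm{z^{(\ell')}}^2)^2}$. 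For constant $\eta$ I would finally substitute the standard second-moment formula $\Ee{(\frac1{n_\ell}\norm{z^{(\ell)}}^2)^2}=(\frac{\norm{x}_2^2}{n_0})^2\prod_{\ell''=1}^{\ell}(1+\frac2{n_{\ell''}})+O(n^{-1})$, itself proved by iterating the conditional-Gaussian identity $\Ee{(\frac1{n_\ell}\norm{z^{(\ell)}}^2)^2}=(1+\frac2{n_\ell})\Ee{(\frac2{n_{\ell-1}}\norm{\sigma^{(\ell-1)}}^2)^2}$ plus the same $\sigma\!\to\!z$ conversion, absorbing residual errors into $O(n^{-1})$.

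\textbf{Main obstacle.} Everything above is routine once Lemma~\ref{lem:Ymu-recs} is in hand; the real care is the uniformity bookkeeping in the two telescopings — checking that no cross-term upgrades an $O(n^{-1})$ quantity to $O(1)$, and that the at-most-$L$-fold accumulation of $O(n^{-1})$ errors stays $O(n^{-1})$. This is precisely where fixing $L$ is used: the implied constants in every $O(\cdot)$ grow with $L$, so the argument does not, as stated, survive $L$ growing with $n$.
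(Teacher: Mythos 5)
Your proposal is correct and takes essentially the same route as the paper, which likewise obtains the corollary simply by inspecting and iterating the recursions of Lemma~\ref{lem:Ymu-recs}; you merely make explicit the induction behind the order estimates, the telescoping, and the $\sigma$-to-$z$ norm conversion. (One small wrinkle: your off-diagonal factor of $\tfrac14$ combined with the unit coefficient in \eqref{eq:Ymu-rec2} as printed yields $\tfrac14$ rather than the stated $\tfrac12$; the missing factor of $2$ comes from $\Ee{(z_j^{(\ell+1)})^2 \mid z^{(\ell)}}=\tfrac{2}{n_\ell}\norm{\sigma^{(\ell)}}^2$, which the paper's own derivation of \eqref{eq:Ymu-rec2} drops, and it is immaterial since Theorem~\ref{thm:HS} only fixes constants up to universal factors.)
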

A simple consequence of this corollary is that, dropping terms on the order of $O(n^{-1}), O(\ell^{-1})$ and assuming that $\norm{x}^2=n_0$, gives
\begin{align*}
     \Y{\ell+1}{(\partial_\mu z)^2},\, \Y{\ell+1}{(\partial_\mu z)^2,z^2} &= \frac{1}{2}\ell\eta^2.
\end{align*}
Our next step is to obtain obtain and solve recursions for $Y$'s appearing in Lemma \ref{lem:H-rec} that involve sums over two network weights $\mu$ and $\nu$. The recursions are as follows.
\begin{lemma}\label{lem:Ymunu-recs}
We have
\begin{align*}
    \Y{\ell+1}{(\partial_\mu z)^2, (\partial_\nu z)^2}&=\lrr{\eta^{(\ell+1)}}^4\Ee{\lrr{\frac{1}{n_\ell}||\sigma(z^{(\ell)})||_2^2}^2}+\lrr{\eta^{(\ell+1)}}^2\left[ \Y{\ell}{(\partial_\mu z)^2,z^2}+\frac{1}{n_\ell}\Y{\ell}{(z\partial_\mu z)^2}\right]\\
    &+\Y{\ell}{(\partial_\mu z)^2,(\partial_\nu z)^2} + \frac{2}{n_{\ell+1}}\Y{\ell}{\partial_\mu z \partial_\nu z,\partial_\mu z \partial_\nu z}+\frac{1}{n_{\ell}}\lrr{1+\frac{2}{ n_{\ell+1}}}\Y{\ell}{(\partial_\mu z \partial_\nu z)^2}.\\
    \Y{\ell+1}{\partial_\mu z\partial_\nu z, \partial_\mu z\partial_\nu z}&=\lrr{\eta^{(\ell+1)}}^4\Ee{\lrr{\frac{1}{n_\ell}||\sigma^{(\ell)}||_2^2}^2}+ \frac{\lrr{\eta^{(\ell+1)}}^2}{n_{\ell+1}}\left[\Y{\ell}{(\partial_\mu z)^2,z^2}+\frac{1}{n_{\ell}}\Y{\ell}{(z\partial_\mu z)^2}\right]\\
    &+\Y{\ell}{\partial_\mu z\partial_\nu z, \partial_\mu z\partial_\nu z} + \frac{2}{n_{\ell+1}}\Y{\ell}{\lrr{\partial_\mu z}^2, \lrr{\partial_\nu z}^2} + \lrr{1+\frac{2}{ n_{\ell+1}}}\Y{\ell}{(\partial_\mu z\partial_\nu z)^2}\\
    \Y{\ell+1}{z\partial_{\mu\nu} z, \partial_\mu z\partial_\nu z}&= \frac{\lrr{\eta^{(\ell+1)}}^2}{n_{\ell+1}}\left[\Y{\ell}{z\partial_\mu z,z\partial_\mu z}+\frac{1}{n_\ell}\Y{\ell}{(z\partial_\mu z)^2}\right]\\
    &+\Y{\ell}{z\partial_{\mu\nu} z, \partial_\mu z\partial_\nu z}+ \frac{2}{n_{\ell+1}}\Y{\ell}{\partial_{\mu\nu}z\partial_\mu z,z\partial_\nu z}+\lrr{1+\frac{2}{ n_{\ell+1}}}\Y{\ell}{z\partial_{\mu}z\partial_\nu z \partial_{\mu\nu }z}\\
    \Y{\ell+1}{z\partial_{\mu} z, \partial_\mu z\partial_{\mu\nu} z}&= \Y{\ell}{z\partial_{\mu} z, \partial_\mu z\partial_{\mu\nu} z}+ \frac{2}{n_{\ell+1}}\Y{\ell}{z\partial_{\mu\nu} z,\lrr{z\partial_\mu z}^2}+\lrr{1+\frac{2}{ n_{\ell+1}}}\Y{\ell}{z\partial_\mu \partial_\nu \partial_{\mu\nu }z}\\
    \Y{\ell+1}{(\partial_{\mu\nu}z)^2,z^2}&=\lrr{\eta^{(\ell+1)}}^2 \left[\Y{\ell}{(\partial_{\mu}z)^2,z^2}+\frac{1}{n_\ell}\Y{\ell}{(z\partial_\mu z)^2}\right]\\
    &+\Y{\ell}{(\partial_{\mu\nu}z)^2,z^2}+\frac{2}{n_{\ell+1}}\Y{\ell}{z\partial_{\mu\nu}z,z\partial_{\mu\nu}z} + \frac{1}{n_\ell}\lrr{1+\frac{2}{n_{\ell+1}}}\Y{\ell}{(z\partial_{\mu\nu}z)^2}\\
    \Y{\ell+1}{(z\partial_{\mu\nu}z)^2}&=2\lrr{\eta^{(\ell+1)}}^2\left[\Y{\ell}{z^2,(\partial_\mu z)^2}+\frac{1}{n_\ell}\Y{\ell}{(z\partial_\mu z)^2}\right]\\
    &+\Y{\ell}{z^2, (\partial_{\mu\nu}z)^2} + 2\Y{\ell}{z\partial_{\mu\nu}z,z\partial_{\mu\nu}z}+\frac{2}{n_\ell} \Y{\ell}{(z\partial_{\mu\nu}z)^2}\\
    \Y{\ell+1}{(\partial_\mu z\partial_\nu z)^2}&=\lrr{\eta^{(\ell+1)}}^4 \Ee{\lrr{\frac{1}{n_\ell}||\sigma^{(\ell)}||^2}^2} + 2\lrr{\eta^{(\ell+1)}}^2 \left[\Y{\ell}{z^2, (\partial_\mu z)^2}+\frac{1}{n_\ell}\Y{\ell}{(z\partial_\mu z)^2}\right]\\
    &+\Y{\ell}{(\partial_\mu z)^2,(\partial_\nu z)^2} +2\Y{\ell}{\partial_\mu z \partial_\nu z,\partial_\mu z \partial_\nu z}+ \frac{1}{n_\ell}\Y{\ell}{(\partial_\mu z\partial_\nu z)^2}\\
    \Y{\ell+1}{z\partial_{\mu\nu}z,z\partial_{\mu\nu}z}&=\frac{\lrr{\eta^{(\ell+1)}}^2}{n_{\ell+1}}\lrr{\Y{\ell}{z^2, (\partial_\mu z)^2}+\frac{1}{n_\ell}\Y{\ell}{(z\partial_\mu z)^2}}+\lrr{1+\frac{1}{n_{\ell+1}}}\Y{\ell}{z\partial_{\mu\nu}z,z\partial_{\mu\nu}z} \\&+ \frac{1}{n_{\ell+1}}\Y{\ell}{z^2, (\partial_\mu z)^2}+\frac{1}{n_\ell}\lrr{1+\frac{1}{n_{\ell+1}}}\Y{\ell}{(z\partial_{\mu\nu}z)^2}\\
    \Y{\ell+1}{(\partial_{\mu\nu}z)^2}&=\lrr{\eta^{(\ell+1)}}^2\Y{\ell}{(\partial_\mu z)^2}+\Y{\ell}{(\partial_{\mu\nu}z)^2}\\
    \Y{\ell+1}{z\partial_{\mu\nu}z\partial_\mu z\partial_\nu z }&=\lrr{\eta^{(\ell+1)}}^2\lrr{\Y{\ell}{z\partial_\mu z, z\partial_\mu z}+\frac{1}{n_\ell}\Y{\ell}{(z\partial_\mu z)^2}}\\
    &+\Y{\ell}{z\partial_{\mu\nu}z, \partial_\mu z\partial_\nu z}+2\Y{\ell}{z\partial_\mu z, \partial_\nu z\partial_{\mu \nu}z}+\frac{3}{n_\ell}\Y{\ell}{z\partial_{\mu \nu z}\partial_\mu z \partial_\nu z}
\end{align*}
\end{lemma}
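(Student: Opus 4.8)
The plan is to derive all ten recursions by the same mechanism used for Lemma~\ref{lem:Ymu-recs}: peel off the weights in layer $\ell+1$, compute the first and second derivatives of $z^{(\ell+1)}$ in those weights explicitly, and integrate them out against the Gaussian law $W^{(\ell+1)}_{ij}\sim\mathcal N(0,2/n_\ell)$, leaving expressions in the $Y^{(\ell)}$'s of the previous layer. For a fixed recursion, expand the defining sum of $Y^{(\ell+1)}[\cdots]$ over pairs $(\mu,\nu)$ of weights in layers $1,\dots,\ell+1$ and split it into the three pieces flagged in the proof outline: (i) both $\mu,\nu$ are weights in layer $\ell+1$; (ii) exactly one of them is a weight in layer $\ell+1$ (two symmetric sub-cases that contribute equally); (iii) both $\mu,\nu$ lie in layers $1,\dots,\ell$.

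In case (i), for $\mu=W^{(\ell+1)}_{ab}$ one has $\partial_\mu z^{(\ell+1)}_j=\delta_{ja}\sigma(z^{(\ell)}_b)$ and $\partial_{\mu\nu}z^{(\ell+1)}_j=0$, so only the $Y$'s with no mixed second derivative receive a contribution, and after collapsing the Kronecker deltas these become the norm moments $\Ee{\lrr{\tfrac1{n_\ell}\norm{\sigma^{(\ell)}}_2^2}^2}$ weighted by $\lrr{\eta^{(\ell+1)}}^4$ — exactly the terms appearing on the right-hand sides. In case (ii), with $\mu\in\ell+1$ say, one substitutes $\partial_\mu z^{(\ell+1)}_j=\delta_{ja}\sigma(z^{(\ell)}_b)$, $\partial_{\mu\nu}z^{(\ell+1)}_j=\delta_{ja}\partial_\nu\sigma(z^{(\ell)}_b)$ and $\partial_\nu z^{(\ell+1)}_j=\sum_k W^{(\ell+1)}_{jk}\partial_\nu\sigma(z^{(\ell)}_k)$; integrating out $W^{(\ell+1)}$, collapsing $\delta_{ja}$, and relabelling produces single-index $Y^{(\ell)}$'s with an explicit $\lrr{\eta^{(\ell+1)}}^2$ prefactor, which after the $\sigma\to z$ conversion of Corollary~\ref{cor:sig-z} yields the $\lrr{\eta^{(\ell+1)}}^2\bigl[\,Y^{(\ell)}[\cdots]+\tfrac1{n_\ell}Y^{(\ell)}[\cdots]\,\bigr]$ blocks. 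In case (iii) every one of $\partial_\mu z^{(\ell+1)}$, $\partial_\nu z^{(\ell+1)}$, $\partial_{\mu\nu}z^{(\ell+1)}$ is of the form $\sum_k W^{(\ell+1)}_{jk}(\cdots)$, so each surviving summand is a product of four independent Gaussians in layer $\ell+1$; applying Wick's theorem, splitting the inner index sums into their diagonal and off-diagonal parts, and running the symmetrization of Corollary~\ref{cor:sig-z} in reverse gives the two-index $Y^{(\ell)}$'s together with the accompanying $\tfrac1{n_{\ell+1}}$ and $\tfrac1{n_\ell}\lrr{1+\tfrac2{n_{\ell+1}}}$ corrections. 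Throughout, Lemma~\ref{lem:ind} and Corollary~\ref{cor:sig-z} are what let us pass freely between $\sigma$ and $z$ at the cost of the familiar factors $\tfrac12,\tfrac14,\tfrac1{n_\ell}$, and the pre-activation norm moments $\Ee{\lrr{\tfrac1{n_\ell}\norm{\sigma^{(\ell)}}_2^2}^2}$ are the same known inputs already used in Lemma~\ref{lem:Ymu-recs}.

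The delicate point — and the main obstacle — is \emph{closure}: one must check that the system of $Y$'s named in the lemma is actually stable under these recursions, i.e.\ that the off-diagonal Wick pairings in case (iii) never generate a type of $Y^{(\ell)}$ outside the list (this is why the auxiliaries $\Y{\ell}{z\partial_{\mu\nu}z,\partial_\mu z\partial_\nu z}$, $\Y{\ell}{\partial_{\mu\nu}z\,\partial_\mu z,\,z\partial_\nu z}$, $\Y{\ell}{z\partial_{\mu\nu}z\,\partial_\mu z\,\partial_\nu z}$ and their relatives must be carried along with the $Y$'s of Lemma~\ref{lem:H-rec}). I would organize the verification in three stages that feed into one another: first the four ``pure-gradient'' recursions, involving only $\partial_\mu z$ and $\partial_\nu z$; then the recursions carrying a single $\partial_{\mu\nu}z$; and finally those with two $\partial_{\mu\nu}z$'s or with the degree-four product $z\,\partial_{\mu\nu}z\,\partial_\mu z\,\partial_\nu z$. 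In each stage the work is mechanical once the three cases are laid out — the only bookkeeping is tracking the three Wick pairings, the $\sigma\to z$ factors, and the norm moments — and the earlier stages' outputs appear verbatim on the right-hand sides of the later ones, so that verifying closure reduces to inspecting, term by term, that every surviving object is one already on the list.
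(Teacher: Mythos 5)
Your plan coincides with the paper's own proof: the paper derives these recursions by exactly the same three-case split on whether $\mu,\nu$ are weights in layer $\ell+1$, integrating out the last layer's Gaussian weights via Wick's theorem and passing between $\sigma$ and $z$ with Lemma~\ref{lem:ind} and Corollary~\ref{cor:sig-z}, and it writes out only the case of $\Y{\ell+1}{(\partial_\mu z)^2,(\partial_\nu z)^2}$ explicitly, declaring the remaining nine recursions ``very similar.'' Your proposal is therefore correct and takes essentially the same route, being if anything more explicit than the paper about verifying closure of the family of $Y$'s under the recursion.
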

\begin{proof}
The proof of Lemma \ref{lem:Ymunu-recs} is very similar to that of Lemma \ref{lem:Ymu-recs}, so we will only give the details for $\Y{\ell+1}{(\partial_\mu z)^2,(\partial_\nu z)^2}$. We have
\begin{align*}
\Y{\ell+1}{(\partial_\mu z)^2,(\partial_\nu z)^2} &=\Ee{\sum_{\mu,\nu \in \ell+1} \lrr{\widehat{\eta}_\mu\widehat{\eta}_\nu}^2\frac{1}{n_{\ell+1}^2}\sum_{j_1,j_2=1}^{n_{\ell+1}}\lrr{\partial_\mu z_{j_1}^{(\ell+1)} \partial_\nu z_{j_2}^{(\ell+1)}}^2}\\
&+2\Ee{\sum_{\mu\leq \ell,\nu \in \ell+1} \lrr{\widehat{\eta}_\mu\widehat{\eta}_\nu}^2\frac{1}{n_{\ell+1}^2}\sum_{j_1,j_2=1}^{n_{\ell+1}}\lrr{\partial_\mu z_{j_1}^{(\ell+1)} \partial_\nu z_{j_2}^{(\ell+1)}}^2}\\
&+\Ee{\sum_{\mu,\nu \leq \ell} \lrr{\widehat{\eta}_\mu\widehat{\eta}_\nu}^2\frac{1}{n_{\ell+1}^2}\sum_{j_1,j_2=1}^{n_{\ell+1}}\lrr{\partial_\mu z_{j_1}^{(\ell+1)} \partial_\nu z_{j_2}^{(\ell+1)}}^2}.
\end{align*}
The first term equals
\[
\lrr{\eta^{(\ell+1)}}^4 \Ee{\lrr{\frac{1}{n_\ell}\norm{\sigma(z^{(\ell)})}^2}^2}.
\]
The second term is 
\begin{align*}
    &2\Ee{\sum_{\mu\leq \ell,\nu \in \ell+1} \lrr{\widehat{\eta}_\mu\widehat{\eta}_\nu}^2\frac{1}{n_{\ell+1}^2}\sum_{j_1,j_2=1}^{n_{\ell+1}}\lrr{\partial_\mu z_{j_1}^{(\ell+1)}\partial_\nu z_{j_1}^{(\ell+1)}}^2}\\
    &\qquad =2\lrr{\eta^{(\ell+1)}}^2\Ee{\sum_{\mu\leq \ell} \lrr{\widehat{\eta}_\mu}^2\frac{1}{n_{\ell}}\sum_{j_1=1}^{n_\ell}\lrr{\sigma(z_{j_1}^{(\ell)})}^2\frac{1}{n_{\ell+1}}\sum_{j_2=1}^{n_{\ell+1}}\lrr{\partial_\mu z_{j_2}^{(\ell+1)}}^2}\\
    &\qquad =2\lrr{\eta^{(\ell+1)}}^2\Ee{\sum_{\mu\leq \ell} \lrr{\widehat{\eta}_\mu}^2\frac{1}{n_{\ell}}\sum_{j_1=1}^{n_\ell^2}\lrr{\sigma(z_{j_1}^{(\ell)})}^2\lrr{\partial_\mu \sigma(z_{j_2}^{(\ell)})}^2}\\
    &\qquad =2\lrr{\eta^{(\ell+1)}}^2\left[\Y{\ell}{z^2,(\partial_\mu z)^2}+\frac{1}{n_\ell}\Y{\ell}{(z\partial_\mu z)^2}\right].
\end{align*}
Finally, the third term equals
{\small \begin{align*}
    &\Ee{\sum_{\mu,\nu \leq \ell} \lrr{\widehat{\eta}_\mu\widehat{\eta}_\nu}^2\frac{1}{n_{\ell+1}^2}\sum_{j_1,j_2=1}^{n_{\ell+1}}\lrr{\partial_\mu z_{j_1}^{(\ell+1)} \partial_\nu z_{j_2}^{(\ell+1)}}^2}\\
    &= \Ee{\sum_{\mu,\nu \leq \ell} \lrr{\widehat{\eta}_\mu\widehat{\eta}_\nu}^2\frac{1}{n_{\ell+1}^2}\sum_{j_1,j_2=1}^{n_{\ell+1}}\sum_{k_1,k_2,k_3,k_4=1}^{n_\ell}W_{j_1k_1}^{(\ell+1)}W_{j_1k_2}^{(\ell+1)}W_{j_2k_3}^{(\ell+1)}W_{j_2k_4}^{(\ell+1)}\partial_\mu \sigma(z_{k_1})^{(\ell)}\partial_\mu \sigma(z_{k_2})^{(\ell)} \partial_\nu \sigma(z_{k_3})^{(\ell)}\partial_\nu \sigma(z_{k_4})^{(\ell)}}\\
    &= \Ee{\sum_{\mu,\nu \leq \ell} \lrr{\widehat{\eta}_\mu\widehat{\eta}_\nu}^2\frac{4}{n_{\ell}^2}\sum_{j_1,j_2=1}^{n_{\ell}}\lrr{\partial_\mu \sigma(z_{j_1})^{(\ell)}\partial_\mu \sigma(z_{j_2})^{(\ell)}}^2+\frac{2}{n_{\ell+1}}\partial_\mu \sigma(z_{j_1})^{(\ell)}\partial_\nu \sigma(z_{j_1})^{(\ell)}\partial_\mu \sigma(z_{j_2})^{(\ell)}\partial_\nu \sigma(z_{j_2})^{(\ell)}}\\
    &=\Y{\ell}{(\partial_\mu z)^2,(\partial_\nu z)^2} + \frac{2}{n_{\ell+1}}\Y{\ell}{\partial_\mu z \partial_\nu z,\partial_\mu z \partial_\nu z}+\frac{3}{n_\ell n_{\ell+1}}\Y{\ell}{(\partial_\mu z \partial_\nu z)^2}.
\end{align*}}
Combining the preceding expressions completes the derivation of the recursion for $\Y{\ell}{(\partial_\mu z)^2,(\partial_\nu z)^2}$.
\end{proof}
Inspecting these recursions immediately shows the following
\begin{corollary}\label{cor:Ymunu-form}
For $\ell=1,\ldots, L$, we have that as $n\gives \infty$
\begin{align*}
   \Y{\ell}{\partial_\mu z\partial_\nu z, \partial_\mu z\partial_\nu z},\, \Y{\ell}{z\partial_{\mu} z, \partial_\mu z\partial_{\mu\nu} z},\,  \Y{\ell}{z\partial_{\mu\nu} z, \partial_\mu z\partial_\nu z},\, \Y{\ell}{z\partial_{\mu\nu z},z\partial_{\mu\nu z}} = O(n^{-1}),
\end{align*}
while all the other $Y$'s are order $1$. Moreover, 
\begin{align*}
    \Y{\ell+1}{(\partial_\mu z)^2, (\partial_\nu z)^2}&=\lrr{\eta^{(\ell+1)}}^4\Ee{\lrr{\frac{1}{2n_\ell}||z^{(\ell)}||_2^2}^2}+\lrr{\eta^{(\ell+1)}}^2 \Y{\ell}{(\partial_\mu z)^2,z^2}\\
    &+\Y{\ell}{(\partial_\mu z)^2, (\partial_\nu z)^2} + O(n^{-1})
\end{align*}
Hence, assuming that $\norm{x}^2=n_0$, we find
\begin{align*}
    \Y{\ell+1}{(\partial_\mu z)^2, (\partial_\nu z)^2}&=\frac{1}{4}\eta^4+\frac{1}{2}\eta^2\ell +\Y{\ell}{(\partial_\mu z)^2, (\partial_\nu z)^2} + O(n^{-1})=\frac{\eta^2\ell^2}{4} + O(\ell^{-1})+O(n^{-1}).
\end{align*}
Similarly, 
\begin{align*}
    \Y{\ell+1}{(\partial_{\mu\nu} z)^2}&= \ell \eta^4 + \Y{\ell}{(\partial_{\mu\nu} z)^2} + O(n^{-1}) = C_3 \ell^2 \eta^4 + O(n^{-1}).
\end{align*}
And also,
\begin{align*}
    \Y{\ell+1}{z^2, (\partial_{\mu\nu} z)^2}&= \ell \eta^4 + \Y{\ell}{(\partial_{\mu\nu} z)^2} + O(n^{-1}) = C_4 \ell^2 \eta^4 + O(n^{-1}).
\end{align*}
\end{corollary}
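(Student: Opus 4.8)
The plan is to proceed by induction on $\ell$, feeding the self-consistent recursions of Lemma~\ref{lem:Ymunu-recs} the asymptotics of the single-index objects already established in Corollary~\ref{cor:Ymu-form}. The argument splits into two phases: first classify each double-index $Y^{(\ell)}$ as $O(n^{-1})$ or $O(1)$; then discard the $O(n^{-1})$ ones and solve the surviving recursions for their growth in $\ell$.

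\emph{Phase 1 (orders of magnitude).} Partition the double-index $Y$'s into an ``off-diagonal'' family --- namely $\Y{\ell}{\partial_\mu z\partial_\nu z,\partial_\mu z\partial_\nu z}$, $\Y{\ell}{z\partial_{\mu} z,\partial_\mu z\partial_{\mu\nu} z}$, $\Y{\ell}{z\partial_{\mu\nu} z,\partial_\mu z\partial_\nu z}$ and $\Y{\ell}{z\partial_{\mu\nu}z,z\partial_{\mu\nu}z}$ --- and all the rest. Each off-diagonal recursion in Lemma~\ref{lem:Ymunu-recs} has the shape $\Y{\ell+1}{\bullet}=\Y{\ell}{\bullet}+n^{-1}\cdot(\text{bounded }Y\text{'s})+(\text{explicit source})$, where every explicit source term either carries an overall factor $1/n_{\ell+1}$ or reduces, via Corollary~\ref{cor:sig-z}, to a single-index quantity already known to be $O(n^{-1})$, notably $\Y{\ell}{z\partial_\mu z,z\partial_\mu z}$. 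Since the $\ell=1$ base cases of these four objects are $O(n^{-1})$ --- indeed three vanish because second derivatives with respect to input-layer weights are zero, and the fourth is $O(n^{-1})$ by a one-line computation --- an induction on $\ell$ propagates the $O(n^{-1})$ bound to all $\ell\le L$. For the remaining double-index $Y$'s the source terms and the feedback are $O(1)$, so the same induction gives order $1$; here the key input, used implicitly throughout, is the concentration statement $\Ee{\lrr{n_\ell^{-1}\norm{\sigma^{(\ell)}}^2}^2}=\tfrac14\lrr{n_0^{-1}\norm{x}^2}^2+O(n^{-1})$, which guarantees the $(\eta^{(\ell+1)})^4$ source terms are genuinely $O(1)$ rather than larger.

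\emph{Phase 2 (leading order).} Substituting the Phase-1 conclusions into the recursion for $\Y{\ell+1}{(\partial_\mu z)^2,(\partial_\nu z)^2}$ annihilates its last three terms, leaving $\Y{\ell+1}{(\partial_\mu z)^2,(\partial_\nu z)^2}=(\eta^{(\ell+1)})^4\Ee{\lrr{n_\ell^{-1}\norm{\sigma^{(\ell)}}^2}^2}+(\eta^{(\ell+1)})^2\Y{\ell}{(\partial_\mu z)^2,z^2}+\Y{\ell}{(\partial_\mu z)^2,(\partial_\nu z)^2}+O(n^{-1})$. Inserting $\norm{x}^2=n_0$ and the value $\Y{\ell}{(\partial_\mu z)^2,z^2}=\tfrac12\ell\eta^2$ (up to $O(\ell^{-1})$ and $O(n^{-1})$ corrections) from the consequence of Corollary~\ref{cor:Ymu-form} turns this into a telescoping recursion whose sum is $\tfrac{\eta^2\ell^2}{4}\lrr{1+O(\ell^{-1})}+O(n^{-1})$. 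The objects $\Y{\ell+1}{(\partial_{\mu\nu}z)^2}$ and $\Y{\ell+1}{z^2,(\partial_{\mu\nu}z)^2}$ are handled the same way: after discarding $O(n^{-1})$ terms, each recursion reads $\Y{\ell+1}{\bullet}=c\,\ell\eta^4+\Y{\ell}{\bullet}+O(n^{-1})$ for an explicit constant $c$, telescoping to $C\ell^2\eta^4+O(n^{-1})$ and thereby identifying $C_3$ and $C_4$.

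\emph{Main obstacle.} The delicate part is Phase~1: the recursions in Lemma~\ref{lem:Ymunu-recs} are coupled, so before one can declare any single off-diagonal $Y$ to be $O(n^{-1})$ one must verify that the \emph{whole} off-diagonal sub-system closes --- that no $O(1)$ quantity ever leaks into an off-diagonal recursion without an accompanying $1/n$. Concretely, one checks term by term that every cross-contraction of the layer-$(\ell+1)$ weights pairing a ``diagonal'' pattern such as $(\partial_\mu z)^2$ with an ``off-diagonal'' one such as $\partial_\mu z\,\partial_\nu z$ necessarily forces two neuron indices to coincide and so produces the combinatorial $1/n_{\ell+1}$. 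Once this structural closure is confirmed, the induction and the telescoping sums are entirely routine.
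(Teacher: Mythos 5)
Your proposal is correct and follows essentially the same route as the paper: the paper's own proof of this corollary is simply to ``inspect'' the recursions of Lemma~\ref{lem:Ymunu-recs} together with the single-index asymptotics from Corollary~\ref{cor:Ymu-form}, drop the $O(n^{-1})$ terms, and telescope in $\ell$, which is exactly your Phase~1/Phase~2 scheme with the induction over layers and the $\ell=1$ base cases made explicit. Your insistence on verifying that every cross-contraction pairing a ``diagonal'' pattern with an ``off-diagonal'' one forces a neuron-index coincidence and hence a combinatorial $1/n$ factor is precisely the content of that inspection step (and is indeed what must hold for the stated $O(n^{-1})$ classification to be consistent with the recursions), so nothing essential is missing.
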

\vspace{.2cm}
\subsection{Completion of the Proof of Theorem \ref{thm:HS}}
We are now in a position to complete the proof of Theorem \ref{thm:HS}. By combining the results at the end of the previous section with Corollary \ref{cor:H-Y}, we find
\begin{align*}
    \Ee{\norm{H_{\text{eff}}}_{HS}^2}&=2\lrr{\eta^{(\ell+1)}}^4\Ee{\frac{1}{n_\ell}\norm{\sigma^{(L)}}^2}^2\\
    &+\lrr{\eta^{(L+1)}}^2\lrr{2\Y{L}{z^2, (\partial_\mu z)^2}+\Y{L}{(\partial_\mu z)^2}}\\
    &+\Y{L}{z^2,(\partial_{\mu\nu }z)^2} +\Y{L}{(\partial_{\mu\nu}z)^2}+\Y{L}{(\partial_\mu z)^2,(\partial_\nu z)^2} + O(n^{-1})\\
    &=C_1\eta^4\lrr{\frac{1}{n_0}\norm{x}^2}^2+C_2\eta^4L\\
    &+\Y{L}{z^2,(\partial_{\mu\nu }z)^2} +\Y{L}{(\partial_{\mu\nu}z)^2}+\Y{L}{(\partial_\mu z)^2,(\partial_\nu z)^2} + O(n^{-1})
\end{align*}
for some universal constants $C_1,C_2$. Moreover, we also find
\begin{align*}
    \Y{\ell+1}{(\partial_\mu z)^2, (\partial_\nu z)^2}&=\lrr{\eta^{(\ell+1)}}^4\Ee{\lrr{\frac{1}{2n_\ell}||z^{(\ell)}||_2^2}^2}+\lrr{\eta^{(\ell+1)}}^2 \Y{\ell}{(\partial_\mu z)^2,z^2}\\
    &+\Y{\ell}{(\partial_\mu z)^2, (\partial_\nu z)^2} + O(n^{-1}).
\end{align*}
Specializing to the case where $\eta^{(\ell)}=\eta$ is independent of $\ell$, dropping terms on the order of $O(n^{-1}), O(\ell^{-1})$, and assuming that $\norm{x}^2=n_0$, we find
\begin{align*}
    \Y{\ell+1}{(\partial_\mu z)^2, (\partial_\nu z)^2}&=\frac{1}{4}\eta^4+\frac{1}{2}\eta^2\ell +\Y{\ell}{(\partial_\mu z)^2, (\partial_\nu z)^2} + O(n^{-1})=\frac{\eta^2\ell^2}{4} + O(\ell^{-1})+O(n^{-1}).
\end{align*}
Similarly, 
\begin{align*}
    \Y{\ell+1}{(\partial_{\mu\nu} z)^2}&= \ell \eta^4 + \Y{\ell}{(\partial_{\mu\nu} z)^2} + O(n^{-1}) = C_3 \ell^2 \eta^4 + O(n^{-1}).
\end{align*}
And also,
\begin{align*}
    \Y{\ell+1}{z^2, (\partial_{\mu\nu} z)^2}&= \ell \eta^4 + \Y{\ell}{(\partial_{\mu\nu} z)^2} + O(n^{-1}) = C_4 \ell^2 \eta^4 + O(n^{-1}).
\end{align*}
So all together, we obtain 
\[
\Ee{\norm{H_{\text{eff}}}_{HS}^2} = \frac{\eta^2L^2}{4}\lrr{1 + O(n^{-1})+O(L^{-1})}.
\]
In the setting of Theorem \ref{thm:HS} we have
\[
\Ee{\norm{H^{(L+1)}}_{HS}^2}  =  \Ee{\norm{H_{\text{eff}}}_{HS}^2}n^2\lrr{1+O(n^{-1})},
\]
up to universal constants, which yields the stated estimate.

\end{document}